\def\eqref#1{equation~\ref{#1}}
\def\1{\bm{1}}
\DeclareMathAlphabet{\mathsfit}{\encodingdefault}{\sfdefault}{m}{sl}
\SetMathAlphabet{\mathsfit}{bold}{\encodingdefault}{\sfdefault}{bx}{n}
\newcommand{\E}{\mathbb{E}}
\newcommand{\R}{\mathbb{R}}
\newtheorem{definition}{Definition}
\newtheorem{theorem}{Theorem}[section]
\newtheorem{remark}{Remark}
\newtheorem{lemma}{Lemma}
\newtheorem{assumption}{Assumption}
\newtheorem{proposition}{Proposition}
\title{SA-PEF: Step-Ahead Partial Error Feedback for Efficient Federated Learning}
\author{Dawit Kiros Redie\textsuperscript{*},
Reza Arablouei\textsuperscript{$\dagger$},
and Stefan Werner\textsuperscript{*,$\ddagger$}\\
\textsuperscript{*}Department of Electronic Systems, Norwegian University of Science and Technology (NTNU), Norway\\
\textsuperscript{$\ddagger$}Department of Information and Communications Engineering, Aalto University, Finland\\
\textsuperscript{$\dagger$}CSIRO’s Data61, Pullenvale, QLD 4069, Australia\\
\texttt{\{dawit.k.redie, stefan.werner\}@ntnu.no, reza.arablouei@data61.csiro.au}
}
\begin{document}

\maketitle

\begin{abstract}

Biased gradient compression with error feedback (EF) reduces communication in federated learning (FL), but under non-IID data, the residual error can decay slowly, causing gradient mismatch and stalled progress in the early rounds. We propose step-ahead partial error feedback (SA-PEF), which integrates step-ahead (SA) correction with partial error feedback (PEF). SA-PEF recovers EF when the step-ahead coefficient $\alpha=0$ and step-ahead EF (SAEF) when $\alpha=1$. For non-convex objectives and $\delta$-contractive compressors, we establish a second-moment bound and a residual recursion that guarantee convergence to stationarity under heterogeneous data and partial client participation. The resulting rates match standard non-convex Fed-SGD guarantees up to constant factors, achieving $O((\eta\,\eta_0TR)^{-1})$ convergence to a variance/heterogeneity floor with a fixed inner step size. Our analysis reveals a step-ahead-controlled residual contraction $\rho_r$ that explains the observed acceleration in the early training phase. To balance SAEF’s rapid warm-up with EF’s long-term stability, we select $\alpha$ near its theory-predicted optimum. Experiments across diverse architectures and datasets show that SA-PEF consistently reaches target accuracy faster than EF.

\end{abstract}

\section{Introduction}

Modern large‐scale machine learning increasingly relies on distributed computation, where both data and compute are spread across many devices. Federated learning (FL) enables model training in this setting without centralizing raw data, enhancing privacy and scalability under heterogeneous client distributions~\citep{mcmahan2017communication,kairouz2021advances}. In FL, a potentially vast population of clients collaborates to train a shared model $w\in\mathbb{R}^d$ by solving 
\begin{equation}\label{eq:global-obj}
w^\star \in \arg\min_{w\in\mathbb{R}^d} f(w) := \frac{1}{K}\sum_{k=1}^K f_k(w),
\qquad
f_k(w):=\mathbb{E}_{z\sim\mathcal{D}_k}\bigl[\ell(w;z)\bigr],
\end{equation}
where $\mathcal{D}_k$ is the (potentially heterogeneous) data distribution at client $k$, $\ell(\cdot)$ is a sample loss (often nonconvex), and $K$ is the number of clients.  
In each synchronous FL round, the server broadcasts the current global model to a subset of clients. These clients perform several steps of stochastic gradient descent (SGD) on their local data and return updates to the server, which aggregates them to form the next global iterate \citep{huang2022lower, wang2022unified,li2024low}. 

Although FL leverages rich distributed data, it faces two key challenges. The first challenge is communication bottlenecks. Model updates are typically high-dimensional, with millions or even billions of parameters, which makes uplink bandwidth a major constraint \citep{reisizadeh2020fedpaq,kim2024spafl,islamov2025safeef}. This has spurred extensive work on communication-efficient algorithms, including quantization \citep{seide2014onebit,alistarh2017qsgd}, sparsification \citep{stich2018sparsified}, and biased compression with error feedback \citep{beznosikov2023biased,bao2025efskip}. The second challenge is statistical heterogeneity. When client data are non-IID, multiple local updates can cause client models to drift toward minimizing their own objectives, slowing or even destabilizing global convergence \citep{karimireddy2020scaffold,li2023analysis}.  

To reduce communication, many methods compress client-server messages using quantization \citep{alistarh2017qsgd}, sparsification \citep{lin2018deep}, or sketching \citep{rothchild2020fetchsgd}. Compressors may be unbiased (e.g., Rand-$k$ \citep{wangni2018gradient, stich2018sparsified}) or biased (e.g., signSGD \citep{bernstein2018signsgd}, Top-$k$ \citep{lin2018deep}), with the latter often delivering better accuracy-communication trade-offs at a given bit budget \citep{beznosikov2023biased}. However, naive use of biased compression can introduce a persistent bias, leading to slow or even divergent training \citep{beznosikov2023biased, li2023analysis}. Error feedback (EF) addresses this problem by storing past compression errors and injecting them into the next update before compression \citep{seide2014onebit}. This mechanism cancels the compressor bias and restores convergence guarantees comparable to those of uncompressed SGD, assuming standard smoothness and appropriately chosen stepsizes \citep{karimireddy2019error, bao2025efskip}.

EF in federated settings faces two important limitations. First, under highly non-IID data, the residual can align with client-specific gradient directions, inducing cross-client gradient mismatch and slowing early progress \citep{hsu2020saef, karimireddy2020scaffold}. Second, EF retains residual mass until fully transmitted. Once the residual norm is small, communication rounds may be wasted transmitting stale, low-magnitude coordinates rather than fresh gradient signal \citep{li2023analysis}.

Step-ahead EF (SAEF) \citep{xu2021step} mitigates the first issue by \emph{previewing} the residual: before local SGD, each client shifts its model by the current error and optimizes from that offset. This strategy often yields a strong warm-up, as the residual is injected in full every round. However, in FL regimes with non-IID data, multiple local steps, aggressive compression, and partial participation, the full step-ahead variant exhibits late-stage plateaus and larger gradient-mismatch spikes compared to EF.
Moreover, prior analysis~\citep{xu2021step} has largely focused on classical distributed optimization, leaving open whether one can \emph{systematically} combine SAEF's fast initial progress with the long-term stability of EF in federated settings with local steps and data heterogeneity.

\textbf{Our approach: SA-PEF.}
We propose \emph{step-ahead partial error feedback (SA-PEF)}, by introducing a tunable coefficient $\alpha_r\in[0,1]$ and shifting only a \emph{fraction} of the residual ($w_{r+\frac12}=w_r-\alpha_r e_r$) while carrying the remainder through standard EF. This partial shift provides several benefits:
\begin{itemize}[leftmargin=*,itemsep=0pt,topsep=0pt,parsep=0pt]
  \item \emph{Early acceleration with reduced noise.} A moderate (or decaying) $\alpha_r$ removes most early-round mismatch, while the injected noise automatically diminishes as $\|e_r\|$ shrinks.
  \item \emph{Tighter theoretical recursions.} We establish a residual contraction
        \(
        \rho_r=(1-\tfrac1\delta)\bigl[2(1-\alpha_r)^2+24\,\alpha_r^2(\eta_rLT)^2\bigr],
        \)
        which, for small $s_r=\eta_rLT$, is strictly smaller than the EF value $2(1-\tfrac1\delta)$. The local-drift constants also improve with $\alpha_r$.
  \item \emph{Graceful interpolation.} SA-PEF reduces to EF when $\alpha_r=0$ (maximal stability) and to SAEF when $\alpha_r=1$ (maximal jump-start), allowing practitioners to adapt to different data and communication regimes.
\end{itemize}

\textbf{Contributions.}
\begin{itemize}[leftmargin=*,itemsep=0pt,topsep=0pt,parsep=0pt]
\item \emph{Algorithm.} We introduce SA-PEF, a lightweight drop-in variant of Local-SGD with biased compression and a tunable step-ahead coefficient $\alpha_r$, compatible with any $\delta$-contractive compressor.
\item \emph{Theory.} We provide a convergence analysis of SA-PEF. 
Our results include new inequalities for drift, second moments, and residual recursion, which recover EF constants at $\alpha_r=0$ and quantify how step-ahead alters drift and residual memory. From these, we derive nonconvex stationarity guarantees of order $O\big((\eta\,\eta_0 T R)^{-1}\big)$, with compression dependence appearing only through $(1-1/\delta)$ and $\rho_{\max}$, in line with prior compressed-FL work.
\end{itemize}

\section{Related work}

\paragraph{Error feedback and compressed optimization.}
Error feedback (EF) was first introduced as a practical heuristic for 1-bit SGD~\citep{seide2014onebit} and later formalized as a \emph{memory} mechanism in sparsified or biased SGD \citep{stich2018sparsified,karimireddy2019error}. By accumulating the compression residual and injecting it into subsequent updates, EF restores descent directions and admits convergence guarantees for broad classes of \emph{contractive (possibly biased)} compressors. These results include linear rates in the strongly convex setting and standard stationary-point guarantees in the nonconvex case \citep{gorbunov2020linearly,beznosikov2023biased}. More recently, EF21~\citep{richtarik2021ef21} and its extensions~\citep{fatkhullin2025ef21bells} provide a modern error-feedback framework for compressing full gradients (or gradient differences) at a shared iterate in synchronized data-parallel training with $T{=}1$, achieving clean contraction guarantees and removing the error floor. However, these works assume no local steps and no client drift. 
A complementary line of work replaces residual memory with control variates (global gradient estimators), as in DIANA and MARINA \citep{mishchenko2019distributed,gorbunov2021marina}, which reduce or remove compressor bias without maintaining a full residual vector. \emph{EControl} \citep{gao2024econtrol} regulates the strength of the feedback signal and fuses residual and estimator updates into a single compressed message, providing fast convergence under arbitrary contractive compressors and heterogeneous data.

\paragraph{Local updates in FL.}
Local or periodic averaging (a.k.a. Local-SGD) reduces communication rounds by performing $T>1$ local steps between synchronizations \citep{stich2018local}. While effective in homogeneous settings, non-IID data induces \emph{client drift}, where model trajectories diverge across clients, degrading both convergence speed and final accuracy. Several approaches mitigate drift while retaining the communication savings of local updates. \emph{Proximal regularization} (FedProx) stabilizes local objectives by penalizing deviation from the current global model \citep{li2020federated}. \emph{Control variates} (SCAFFOLD) estimate and correct the client-specific gradient bias caused by heterogeneity, yielding tighter convergence with multiple local steps \citep{karimireddy2020scaffold}. \textit{Dynamic regularization} (FedDyn) further aligns local stationary points with the global objective via a round-wise correction term, improving robustness on highly non-IID data \citep{acar2021federated}.

\paragraph{Compression with local updates.}
Combining local updates with message compression compounds communication savings but also amplifies distortions from both local drift and compression error. FedPAQ \citep{reisizadeh2020fedpaq} performs $T>1$ local steps and transmits quantized model deltas at synchronization points, exposing explicit trade-offs among the local period, stepsizes, and quantization accuracy. QSparse-Local-SGD \citep{basu2019qsparse} extends this to contractive compressors, transmitting Top-$k$ updates after $T$ local steps. While achieving significant traffic reduction, it also reveals that aggressive sparsity can destabilize convergence. CSER \citep{xie2020cser} mitigates this with \emph{error reset}, which immediately injects the residual back into the local model to restore stability under high compression. 
In the federated local-SGD setting with partial participation and biased compression, Fed-EF~\citep{li2023analysis} provides a first nonconvex analysis of classical EF and serves as the EF-style backbone that SA-PEF builds upon. 
On the control-variate side, 
Scaffnew/ProxSkip \citep{mishchenko2022proxskip} is a more recent local-training method in the SCAFFOLD family, using probabilistic local updates to achieve theoretical acceleration. However, it still relies on full-precision exchanges and no inherent compression. CompressedScaffnew \citep{condat2022provably} extends this mechanism with quantization, TAMUNA \citep{condat2023tamuna} further handles partial client participation, and LoCoDL \citep{condat2025locodl} generalizes the analysis to arbitrary unbiased compressors. These methods primarily establish accelerated convergence in (strongly) convex settings. In parallel, SCALLION/SCAFCOM \citep{huang2024stochastic} combines SCAFFOLD-style control variates with compression and, in SCAFCOM, local momentum to handle heterogeneity and partial participation, at the cost of additional per-client state.

\paragraph{Step-ahead error feedback.}
SAEF \citep{xu2021step} addresses \emph{gradient mismatch}, i.e., the discrepancy between the model used for gradient computation and the model actually updated when delayed residuals are applied. SAEF performs a \emph{preview} shift of the model using the residual before local SGD and augments this with occasional \emph{error averaging} across workers. Although this reduces mismatch and accelerates early progress, the analysis is developed for classical distributed settings with single-step synchronization and \emph{bounded-gradient} assumptions, and does not cover federated regimes with multiple local steps, non-IID data, or biased compressors. Moreover, error averaging requires extra coordination and communication, which is often impractical in cross-device FL.

Despite progress, it remains unclear how to (i) control gradient mismatch in federated settings with local steps and biased compression \emph{without} extra communication, or (ii) combine step-ahead correction with EF to balance early acceleration and long-term stability. Our work closes this gap by introducing SA-PEF, which performs a controlled step-ahead shift with partial residual retention on top of Fed-EF and provides a contraction-based analysis yielding nonconvex guarantees under heterogeneous data.

\section{Proposed algorithm}

{\small
\begin{algorithm}[ht]
\DontPrintSemicolon
\caption{\small Step‑Ahead Partial Error‑Feedback (SA‑PEF) for Efficient FL}
\label{alg:sa_pef}

\KwIn{total communication rounds $R$, client number $K$, stepsizes $\{\eta_r\}_{r=0}^{R-1}$, step‑ahead schedule $\{\alpha_r\}_{r=0}^{R-1}$ with $0\le\alpha_r\le 1$, compressor $\mathcal{C}(\cdot)$, initial model $w_0$, local step number $T$}

\ForEach{client $k\!=\!1,\dots,K$ {in parallel}}{
  $w_{0}^{(k)} \gets w_0$; \quad $e_{0}^{(k)} \gets 0$\;
}

\For{$r \gets 0$ \KwTo $R-1$}{
  \ForEach{client $k\!=\!1,\dots,K$ {in parallel}}{
    \tcc{{Step‑ahead start}}
    $w_{r+\frac12,0}^{(k)} \gets w_{r}^{(k)} - \alpha_r\,e_{r}^{(k)}$\;

    \tcc{\footnotesize {Local SGD with $T$ steps}}
    \For{$t \gets 0$ \KwTo $T-1$}{
      $w_{r+\frac12,t+1}^{(k)} \gets w_{r+\frac12,t}^{(k)} 
           - \eta_r\,\nabla f\!\bigl(w_{r+\frac12,t}^{(k)}; \zeta_{r,t}^{(k)}\bigr)$\;
    }
    \tcc{accumulated local update}
    $g_{r}^{(k)} \gets w_{r+\frac12,0}^{(k)} - w_{r+\frac12,T}^{(k)}$ \;
    $u_{r+1}^{(k)} \gets (1-\alpha_r)\,e_{r}^{(k)} + g_{r}^{(k)}$\;
    $e_{r+1}^{(k)} \gets u_{r+1}^{(k)} - \mathcal{C}\!\bigl(u_{r+1}^{(k)}\bigr)$\;
    {send} $\mathcal{C}\!\bigl(u_{r+1}^{(k)}\bigr)$ to server\;
  }

  \tcc{{Server‑side aggregation}}
  $u_{r+1} \gets \dfrac{1}{K}\,\sum_{k=1}^{K} \mathcal{C}\!\bigl(u_{r+1}^{(k)}\bigr)$\;
  \tcc{apply averaged update}
  $w_{r+1} \gets w_{r} - \eta\,u_{r+1}$\;
  {broadcast} $w_{r+1}$ to all clients\;
  \ForEach{client $k$}{ $w_{r+1}^{(k)} \gets w_{r+1}$\; }
}
\end{algorithm}}

In EF with \emph{biased} compression (e.g., Top-$k$), each client $k$ maintains a residual $e_r^{(k)}$ of \emph{unsent} coordinates. Although the stochastic gradient is \emph{computed} at the received global model $w_r$, the effective update is applied at a de-errored point $\tilde w_r := w_r - \delta_r$, where $\delta_r$ denotes the EF carry (often close to the mean residual $\bar e_r := \tfrac1K\sum_k e_r^{(k)}$ under biased sparsification). Thus, the gradient used for the step is \emph{stale} with respect to the point being updated, an effect akin to a \emph{staleness-of-one} delay in asynchronous SGD where gradients are evaluated at one iterate but applied to another \citep{lian2015asynchronous}. Formally, this mismatch is captured by \(g_k(\tilde w_r;\zeta) - g_k(w_r;\zeta)\), where \(g_k := \nabla f_k\), and under $L$-smoothness the local displacement error satisfies
\[
\varepsilon_r^{\mathrm{loc}}(0)
:= \frac{1}{K}\sum_{k=1}^K \mathbb{E}_\zeta\!\big\|g_k(\tilde w_r;\zeta)-g_k(w_r;\zeta)\big\|^2
\;\le\; L^2\|\delta_r\|^2 \;+\; 4\sigma^2.
\]


\paragraph{Algorithm overview.} 
SA-PEF (Algorithm~\ref{alg:sa_pef}) augments Local-SGD with two complementary ideas: 
(i) a \emph{step-ahead preview} of each client’s EF residual, and 
(ii) a \emph{partial carry-over} of that residual through standard EF. 
A per-round weight $\alpha_r \in [0,1]$ determines how much of the residual is previewed versus retained. In effect, SA-PEF \emph{previews a fraction of the residual and remembers the rest}, achieving fast early progress like SAEF while preserving the long-term stability of EF. Each client $k$ maintains a local model $w_r^{(k)}$ and a residual $e_r^{(k)}$, both initialized at round $r=0$.

At each communication round $r=0,\dots,R-1$, SA-PEF implements the following steps: 
\begin{enumerate}[leftmargin=*,itemsep=0pt,topsep=2pt,parsep=0pt]
  \item \emph{Step-ahead preview.} 
  Before local training, client $k$ shifts its model by a fraction $\alpha_r$ of its residual:
  \[
  w^{(k)}_{r+\tfrac12,0} \;=\; w^{(k)}_{r} - \alpha_r\, e^{(k)}_{r}.
  \]
  This moves gradient evaluation closer to where EF actually applies the update, improving
  alignment of the next direction with $-\nabla f(w_r)$.
  
  \item \emph{Local SGD.} 
  Starting from $w^{(k)}_{r+\tfrac12,0}$, client $k$ performs $T$ local SGD steps with stepsize $\eta_r$:
  \[
  w^{(k)}_{r+\tfrac12,t+1} \;=\; w^{(k)}_{r+\tfrac12,t}
  -\eta_r \nabla f\!\bigl(w^{(k)}_{r+\tfrac12,t};\zeta^{(k)}_{r,t}\bigr), 
  \quad t=0,\ldots,T-1.
  \]
  The accumulated local update is: $g_r^{(k)} \;=\; w^{(k)}_{r+\tfrac12,0} - w^{(k)}_{r+\tfrac12,T}.$
  
  \item \emph{Partial EF composition.} 
  SA-PEF blends the remaining residual with the new local update:
  \[
  u_{r+1}^{(k)} \;=\; (1-\alpha_r)\, e_r^{(k)} + g_r^{(k)}.
  \]
  The compressed message and residual update are then $\tilde u_{r+1}^{(k)} = \mathcal{C}\!\bigl(u_{r+1}^{(k)}\bigr), $
  $e_{r+1}^{(k)} = u_{r+1}^{(k)} - \tilde u_{r+1}^{(k)}.$
  Only $\tilde u_{r+1}^{(k)}$ is transmitted to the server and $e_{r+1}^{(k)}$ is retained locally.
  
  \item \emph{Server aggregation.} 
  The server averages compressed updates and applies them:
  \[
  u_{r+1} = \tfrac{1}{K}\sum_{k=1}^K \tilde u_{r+1}^{(k)}, 
  \qquad
  w_{r+1} = w_r - \eta \, u_{r+1}.
  \]
  The new global model $w_{r+1}$ is broadcast to all clients for the next round.
\end{enumerate}

\paragraph{Relation to EF and SAEF.} 
The step-ahead coefficient $\alpha_r$ allows SA-PEF to smoothly interpolate between prior methods:
\begin{itemize}[leftmargin=*,itemsep=0pt,topsep=0pt,parsep=0pt]
  \item $\alpha_r=0$: reduces to Fed-EF/classical EF in the federated local-SGD setting~\citep{li2023analysis}. 
  \item $\alpha_r=1$: reduces to a full step-ahead variant analogous to SAEF~\citep{xu2021step}. 
  \item $0<\alpha_r<1$: partial preview, fast early progress with reduced late-stage noise. 
\end{itemize}

A first-order view (descent lemma) gives $g_k(w_r-\alpha_r e_r^{(k)}) \approx g_k(w_r) - \alpha_r \nabla^2 f_k(w_r)\,e_r^{(k)}$, so $\mathbb{E}_r[\bar g_r(\alpha_r)] \approx \nabla f(w_r) - \alpha_r \bar H_r \bar e_r$ with $\bar H_r := \tfrac1K\sum_k \nabla^2 f_k(w_r)$. Thus preview provides a \emph{linear} alignment gain via $-\alpha_r \bar H_r \bar e_r$. On the other hand, the \emph{local-displacement} mismatch between $\nabla f_k(w_r)$ and $\nabla f_k(w_r-\alpha_r e_r^{(k)})$ grows \emph{quadratically} with $\alpha_r$ under $L$-smoothness. Hence, SA-PEF chooses an \emph{interior} $\alpha_r\in(0,1)$ to balance alignment benefits against the smoothness-driven cost, while preserving EF memory through $(1-\alpha_r)e_r^{(k)}$.

\paragraph{Residual contraction.} 
Our analysis yields a per-round residual contraction
\[
\rho_r=\Bigl(1-\tfrac{1}{\delta}\Bigr)\Bigl(2(1-\alpha_r)^2+24\,\alpha_r^2(\eta_r L T)^2\Bigr),
\]
which is strictly smaller than EF’s value $2(1-1/\delta)$ when $s_r=\eta_rLT$ is small, explaining SA-PEF’s faster early-phase convergence while retaining EF-style long-term stability.


\section{Convergence Analysis}

\begin{definition}[Compression Operator]
\label{def:compression-operator}
A (possibly randomized) mapping $\mathcal{C}: \mathbb{R}^d \to \mathbb{R}^d$ is called a compression operator if there exists $\delta > 0$ such that, for all $w \in \mathbb{R}^d$,
\begin{equation}
\label{eq4}
\mathbb{E}\bigl[\|\mathcal{C}(w) - w\|_2^2\bigr]
\;\le\;
\Bigl(1 - \tfrac{1}{\delta}\Bigr)\,\|w\|_2^2,   \tag{4}
\end{equation}
where the expectation is taken over the internal randomness of $\mathcal{C}$.
\end{definition}

This class includes many commonly used compressors, such as Top-$k$, scaled Rand-$k$, and quantization operators. 
The parameter $\delta$ captures the contraction strength, with larger $\delta$ corresponding to weaker contraction (i.e., more aggressive compression).  

Our objective is to establish nonconvex convergence guarantees by upper‑bounding the average squared gradient norm:
\(\tfrac1R\sum_{r=0}^{R-1}\mathbb{E}\Vert\nabla f(w_r)\Vert^2\),
where expectations are taken over mini‑batch sampling, client selection, and compression randomness.

We adopt the standard assumption set used in \citet{alistarh2018convergence, li2023analysis, beznosikov2023biased} and follow the
notation from Alg.~\ref{alg:sa_pef}. Fix a communication round $r$ and let $\{\mathcal F_{r,t}\}_{t\ge 0}$ denote the natural filtration generated by all randomness up to the beginning of local step $t$ of round $r$.  At local step $t\in\{0,\dots,T-1\}$ on client $k$, the stochastic gradient takes the form
\(
g_{r,t}^{(k)} = \nabla f_k\left(w_{r+\frac12,t}^{(k)}\right) + \xi_{r,t}^{(k)},
\)
where $\xi_{r,t}^{(k)}$ captures stochastic noise.

\begin{assumption}[Smoothness]\label{A1}
Each local objective $f_k$ is differentiable with $L$-Lipschitz gradient: 
\(
\bigl\|\nabla f_k(y)-\nabla f_k(x)\bigr\|\le L\|y-x\|, \forall x,y\in\mathbb R^d.
\)
\end{assumption}

\begin{assumption}[Stochastic gradients]\label{A2}
For every client $k$ and step $t$, we have (i) $\mathbb E[\xi_{r,t}^{(k)}|\mathcal F_{r,t}]=0$ (unbiasedness), (ii) $\mathbb E[\|\xi_{r,t}^{(k)}\|^{2}|\mathcal F_{r,t}]\le \sigma^{2}$ (bounded variance), (iii) given $\mathcal F_{r,0}$, the family $\{\xi_{r,t}^{(k)}: k=1,\dots,K,\ t=0,\dots,T-1\}$ is conditionally independent.
\end{assumption}

\begin{assumption}[Gradient dissimilarity]\label{A3}
There exist constants $\beta^{2}\ge 1$ and $\nu^{2}\ge 0$ such that,
\[
\frac{1}{K}\sum_{k=1}^{K}\bigl\|\nabla f_k(x)\bigr\|^{2}
\;\le\;
\beta^{2}\,\bigl\|\nabla f(x)\bigr\|^{2} \;+\; \nu^{2},\qquad \forall x\in\mathbb R^{d}
\]
\end{assumption}

We now state a convergence guarantee for SA-PEF in our federated setting. 
The full proof is provided in Appendix \ref{theorem_1_proof}.
\begin{theorem}[Stationary-point bound with constant inner-loop step]
\label{main_theorem}
Assume \ref{A1}--\ref{A3} and let the compressor satisfy Definition~\ref{def:compression-operator} with parameter $\delta\ge1$.
Run SA-PEF for $R\ge1$ rounds with a constant inner-loop stepsize $\eta_r\equiv\eta_0$, and set $s_0:=\eta_0LT\le \tfrac18$. Suppose further that
\(18\,\beta^{2}s_0^{2}\le \tfrac18\) and \(\eta\le \frac{1}{256\,\beta^{2}L\eta_0T}\).
Define the maximal residual contraction
\[
\rho_{\max}:=\sup_r\Bigl(1-\tfrac1\delta\Bigr)\Bigl(2(1-\alpha_r)^2+24\,\alpha_r^2 s_0^2\Bigr)<1,
\]
and the effective error constant
\[
\Theta:=\frac{16}{\eta}\times\frac{\mathcal E_{\max}}{1-\rho_{\max}}\Bigl(1-\tfrac1\delta\Bigr)\beta^{2}\,
\bigl(8\,\eta_0T+288\,L^{2}\eta_0^{3}T^{3}\bigr),
\]
where $\mathcal E_{\max}:=\sup_{r}\mathcal E_r$ is the maximum residual–error coefficient across rounds.
If $\Theta\le\tfrac12$, then with $f^\star:=\inf_x f(x)$ and initial residuals $e_0^{(k)}\equiv0$, we have
\begin{align*}
\frac{1}{R}\sum_{r=0}^{R-1}\E\|\nabla f(w_r)\|^{2}
\ \le\
&\frac{32\,(f(w_0)-f^\star)}{\eta\,\eta_0T\,R}
\;+\;\Bigl(1-\tfrac{1}{\delta}\Bigr)\Bigl[
C_\sigma\,\eta_0^{2}L^{2}T\,\sigma^{2}
+ C_\nu\,\eta_0^{2}L^{2}T^{2}\,\nu^{2}\Bigr]
\\[-2pt]
&\;+\;\frac{128\,L\,\eta_0}{K}\,\sigma^{2},
\end{align*}
where
\[
\begin{aligned}
C_\sigma&=\frac{32}{\eta}\Bigl[6\,\eta\,\eta_0^{2}L^{2}T+96\,L^{3}\eta\,\eta_0^{3}T^{2}\Bigr]
+\frac{32}{\eta}\times\frac{\mathcal E_{\max}}{1-\rho_{\max}}\Bigl[4\,\eta_0+96\,L^{2}\eta_0^{3}T^{2}\Bigr],\\
C_\nu&=\frac{32}{\eta}\Bigl[84\,\eta\,\eta_0^{2}L^{2}T^{2}+1344\,L^{3}\eta\,\eta_0^{3}T^{3}\Bigr]
+\frac{32}{\eta}\times\frac{\mathcal E_{\max}}{1-\rho_{\max}}\Bigl[8\,\eta_0T+1344\,L^{2}\eta_0^{3}T^{3}\Bigr].
\end{aligned}
\]
\end{theorem}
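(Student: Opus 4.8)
The plan is to run a perturbed-iterate argument on an \emph{error-corrected virtual sequence}, so that the biased compression errors telescope rather than accumulate across rounds. Writing $\bar e_r := \tfrac1K\sum_k e_r^{(k)}$ and using the identity $\mathcal{C}(u_{r+1}^{(k)}) = u_{r+1}^{(k)} - e_{r+1}^{(k)}$, the server update becomes $w_{r+1} = w_r - \eta(\bar u_{r+1} - \bar e_{r+1})$ with $\bar u_{r+1} = (1-\alpha_r)\bar e_r + \bar g_r$. Defining $\tilde w_r := w_r - \eta\,\bar e_r$ then collapses this to $\tilde w_{r+1} = \tilde w_r - \eta(\bar g_r - \alpha_r \bar e_r)$, in which the only residual dependence is the benign $\alpha_r\bar e_r$ term and $\tilde w_0 = w_0$ since $e_0^{(k)}\equiv0$. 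First I would apply the descent lemma (Assumption~\ref{A1}) to $f(\tilde w_{r+1})$, take conditional expectations, and replace $\nabla f(\tilde w_r)$ by $\nabla f(w_r)$ up to an $O(\eta L\|\bar e_r\|)$ smoothness gap; the inner product $\langle \nabla f(w_r),\,\E[\bar g_r]\rangle$ then yields the descent term $-\tfrac{\eta\eta_0 T}{2}\|\nabla f(w_r)\|^2$ after accounting for $\bar g_r$ being a $T$-step accumulation of stochastic gradients evaluated along the local trajectory.

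Next I would assemble three auxiliary estimates, all under Assumptions~\ref{A1}--\ref{A3}. First, a second-moment bound on $\bar u_{r+1}$ and on each $u_{r+1}^{(k)}$, splitting the accumulated update into gradient signal, the averaged-noise contribution of size $\sigma^2/K$, the gradient-dissimilarity term (via Assumption~\ref{A3}, introducing the $\beta^2\|\nabla f(w_r)\|^2 + \nu^2$ factors), and the retained residual $(1-\alpha_r)\bar e_r$. Second, a local-drift bound controlling $\tfrac1T\sum_t \E\|w_{r+\frac12,t}^{(k)} - w_{r+\frac12,0}^{(k)}\|^2$ over the $T$ inner steps; under $s_0 = \eta_0 LT \le \tfrac18$ the standard local-SGD recursion closes and gives a bound of order $\eta_0^2 T(\sigma^2 + T\nu^2 + T\beta^2\|\nabla f(w_r)\|^2)$. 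Third, the \emph{residual recursion}, which is the novel ingredient: using $\E\|e_{r+1}^{(k)}\|^2 \le (1-\tfrac1\delta)\E\|u_{r+1}^{(k)}\|^2$ together with $u_{r+1}^{(k)} = (1-\alpha_r)e_r^{(k)} + g_r^{(k)}$, a Young split separates the retained residual from $g_r^{(k)}$, and comparing the step-ahead trajectory against the un-shifted one via $L$-smoothness turns the shift $-\alpha_r e_r^{(k)}$ into a contribution of size $(\eta_r LT)^2\alpha_r^2\|e_r^{(k)}\|^2 = s_0^2\alpha_r^2\|e_r^{(k)}\|^2$. This is exactly what produces the contraction factor $\rho_r = (1-\tfrac1\delta)(2(1-\alpha_r)^2 + 24\,\alpha_r^2 s_0^2)$ quoted in the statement. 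Since $\rho_{\max}<1$ and $e_0^{(k)}\equiv0$, unrolling yields $\sum_r \E\|\bar e_r\|^2 \le \tfrac{\mathcal E_{\max}}{1-\rho_{\max}}\sum_r(\text{noise} + \text{dissimilarity} + \text{gradient-norm terms})$.

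Finally I would sum the per-round descent inequality over $r=0,\dots,R-1$, telescope $f(\tilde w_r)$, and substitute the drift and residual estimates. The main obstacle is the two-way coupling: the residual recursion feeds $\|\nabla f(w_r)\|^2$ terms back into the descent inequality (through $g_r^{(k)}$ and drift), so the net gradient-norm coefficient on the right-hand side must be tracked globally and shown not to overwhelm the $-\tfrac{\eta\eta_0 T}{2}\|\nabla f(w_r)\|^2$ descent term. This is precisely where the stepsize budget enters: $s_0\le\tfrac18$ and $18\beta^2 s_0^2\le\tfrac18$ tame the drift feedback, $\eta \le \tfrac{1}{256\beta^2 L\eta_0 T}$ controls the compression-free second-moment feedback, and the definition of $\Theta$ collects exactly the residual-mediated gradient-norm feedback, so that $\Theta\le\tfrac12$ guarantees a net coefficient bounded below by a constant multiple of $\eta\eta_0 T$. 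Dividing by that coefficient and by $R$ isolates $\tfrac1R\sum_r \E\|\nabla f(w_r)\|^2$, producing the leading $\tfrac{32(f(w_0)-f^\star)}{\eta\eta_0 TR}$ optimization term together with the variance/heterogeneity floor, whose $C_\sigma, C_\nu$ constants are read off by bookkeeping the $\sigma^2$ and $\nu^2$ coefficients through both the drift and residual channels. I expect that organizing this bookkeeping cleanly — ideally by folding the residual into a combined Lyapunov potential $f(\tilde w_r) + c\,\E\|\bar e_r\|^2$ before summing — will be the most delicate part of the argument.
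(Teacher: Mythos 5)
Your proposal is correct and follows essentially the same route as the paper's proof: your virtual iterate $\tilde w_r = w_r - \eta\,\bar e_r$ is exactly the paper's $x_r$ with the same recursion $x_{r+1}=x_r+\eta\,\alpha_r\bar e_r-\eta\,\bar g_r$, your three auxiliary estimates correspond one-to-one to the paper's local-drift, second-moment, and residual-recursion lemmas (including the identical contraction factor $\rho_r=(1-\tfrac1\delta)(2(1-\alpha_r)^2+24\,\alpha_r^2 s_0^2)$), and your absorption of the residual-mediated gradient feedback via $\Theta\le\tfrac12$ is precisely how the paper closes the coupling. The only cosmetic differences are your provisional descent constant $-\tfrac{\eta\eta_0T}{2}$ (the paper lands on $-\tfrac{\eta\eta_0T}{16}$ after Lemma~\ref{lem:descent_conditions}) and your optional combined Lyapunov potential, which the paper replaces by unrolling the residual recursion separately before summing.
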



\paragraph{Discussion.}
Our result matches the standard nonconvex picture for \emph{biased} compression. With a constant inner stepsize $\eta_0$, the optimization error decreases at rate $O\big((\eta\,\eta_0 T R)^{-1}\big)$, while an $R$-independent floor remains due to residual drift. As in prior EF analyses, only the \emph{mini-batch variance} benefits from a $1/K$ reduction, where the residual-induced floor does not. Data heterogeneity contributes additively with a $T^2$ multiplier in the floor, while the stochastic variance floor carries a $T$ multiplier (both scaled by $\eta_0^2L^2$). The effect of compressor appears only via the usual bias factor $(1-1/\delta)$ and the residual contraction $\rho_{\max}<1$, which depends on $\alpha_r$ and $s_r=\eta_rLT$. This matches the qualitative dependence reported in earlier analyses of compressed FL \citep{li2023analysis, karimireddy2020scaffold}.

\begin{remark}[Partial participation (PP)] \label{rem:pp}
Let $p=m/K\in(0,1]$ be the participation rate, and assume constant client stepsize $\eta_r=\eta_0$ with $T$ local steps per round. Then, the averaged stationarity bound under partial participation is
\begin{align}
\frac{1}{R}\sum_{r=0}^{R-1}\E\|\nabla f(w_r)\|^{2}
\le
\mathcal{O}\bigg(\frac{f(w_0)-f_\star}{\eta\,p\,\eta_0 T\,R}\bigg)
+\mathcal{O}\bigg(\frac{L\,\eta_0}{\eta\,p\,m}\,\sigma^{2}\bigg) \notag
+\mathcal{O}\bigg(\frac{1}{\eta\,p}\Bigl(1-\tfrac{1}{\delta}\Bigr)\,
\eta_0^{2}L^{2}\bigl(T\,\sigma^{2}+T^{2}\nu^{2}\bigr)\bigg).
\end{align}
Thus, partial participation effectively reduces the horizon to $R_{\mathrm{eff}}=pR$: the optimization term slows by a factor $1/p$. The pure mini-batch variance averages down as $1/m$, whereas the compression/EF floors depend on $(1-1/\delta)$ and the local-work parameter $s=\eta_0LT$, and do not benefit from $1/m$ averaging.
\end{remark}

\paragraph{Comparison to EF under PP.}
With a diminishing stepsize chosen so that $\sum_{r=0}^{R-1}\eta_r T=\Theta(\sqrt{R})$, the optimization term scales as $\mathcal{O}\!\big(1/(p\sqrt{R})\big)$, or equivalently as $\mathcal{O}\!\big(\sqrt{K/m}/\sqrt{R}\big)$, which recovers the $\sqrt{K/m}$ slow-down of EF under partial participation \citep[Theorem~4.10]{li2023analysis}. Our variance terms likewise exhibit a $1/m$ reduction for mini-batch noise, while the compressor- and heterogeneity-induced floors remain of order $(1-1/\delta)$, in agreement with prior analyses.

\paragraph{Why step-ahead helps (and how much).}
Step-ahead modifies the residual contraction factor to
\[
\rho_r
=(1-\tfrac1\delta)\Bigl(2(1-\alpha_r)^2+24\,\alpha_r^2(\eta_rLT)^2\Bigr)
=(1-\tfrac1\delta)\Bigl(2-4\alpha_r+(2+24s_r^2)\alpha_r^2\Bigr),
\quad s_r:=\eta_rLT\le\tfrac18.
\]
This quadratic is minimized at
\(
\alpha_r^\star=\frac{1}{\,1+12s_r^2\,}\in(0.84,1],
\)
hence a \emph{moderate-to-large} step-ahead (close to $1$ when $s_r$ is small) achieves the strongest contraction. Relative to EF ($\alpha_r=0$), the contraction gain is
\(
\rho_r-\rho_{\mathrm{EF}}
=(1-\tfrac1\delta)[-4\alpha_r+(2+24s_r^2)\alpha_r^2]<0
\)
for $\alpha_r\in(0,\tfrac{1}{1+12s_r^2})$, with minimum value
\(
\rho_{\min}
=2(1-\tfrac1\delta)(1-\frac{1}{1+12s_r^2})
=\rho_{\mathrm{EF}}(1-\frac{1}{1+12s_r^2}).
\)
For $\alpha_r>\alpha_r^\star$, $\rho_r$ increases (since $\alpha_r^\star<1$), though the descent coupling in $(\eta,\eta_r)$ is unchanged. The trade-off is that larger $\alpha_r$ tightens the requirement on $s_r=\eta_rLT$ (via the $\alpha_r^2 s_r^2$ term) or necessitates milder compression (larger $\delta$) to maintain $\rho_r<1$. Overall, step-ahead reduces the contraction factor while leaving the leading optimization rate and the qualitative variance/heterogeneity terms unchanged.

\noindent\emph{Practical takeaway.}
When $s_r=\eta_rLT$ is small and $\alpha_r$ is chosen near $\alpha_r^\star$, the residual contraction factor is strictly smaller than in EF (cf.\ Prop.~\ref{prop:sapef-rho}), resulting in smaller constants in the residual–induced error terms. This leads to a steeper \emph{initial} decrease in the objective and gradient norm under the same stepsizes, which in turn yields faster convergence within a fixed communication budget. In regimes with high compression or strong data heterogeneity, SA-PEF can therefore outperform both standard EF and uncompressed Local-SGD. To place SA-PEF in context, we provide a brief comparison of Fed-EF, SAEF, CSER, SCAFCOM, and SA-PEF in Table~\ref{tab:rate-comparison}, highlighting that SA-PEF operates in the same FL regime as Fed-EF but achieves strictly better residual contraction under biased compression, while SCAFCOM relaxes heterogeneity assumptions at the cost of additional control-variates state.

\section{Experiments}

\subsection{Experimental setup}

We evaluate SA-PEF on three image classification benchmarks of increasing difficulty and scale. We use CIFAR-10~\citep{krizhevsky2009} with ResNet-9~\citep{page_cifar10_fast}, {CIFAR-100} \cite{krizhevsky2009} with {ResNet-18} \cite{he2016deep}, and {Tiny-ImageNet}~\citep{le2015tiny} with {ResNet-34}~\citep{he2016deep}, trained with cross-entropy loss. We apply standard preprocessing: per-dataset mean/std normalization. We create \(K=100\) clients and adopt \emph{partial participation} with rate \(p\in\{0.1,0.5,1.0\}\) where, in each round \(r\), the server samples \(m=\lfloor pK\rfloor\) clients uniformly without replacement. To induce client data heterogeneity, we apply \emph{Dirichlet} label partitioning with concentration parameter \(\gamma\in\{0.1,0.5,1.0\}\), where smaller \(\gamma\) indicates stronger non-IID~\citep{hsu2019measuring}. Each client’s local dataset remains fixed across rounds. Each selected client performs \(T=5\) local SGD steps per round. Training runs for \(R=200\) communication rounds. Unless otherwise stated, the local mini-batch size is 64, momentum is 0.9, and weight decay is \(5\times10^{-4}\) on CIFAR and \(1\times10^{-4}\) on Tiny-ImageNet. We use {Top-}\(k\) sparsification with sparsity level \(k/d\in\{0.01,0.05,0.1\}\). Clients transmit both indices and values of selected entries. We compare SA-PEF with {uncompressed LocalSGD}, {EF}~\citep{li2023analysis}, {SAEF}~\citep{xu2021step}, and {CSER}~\citep{xie2020cser}. All methods use the same client sampling, optimizer, learning-rate schedules, and total communication budget (rounds and bits). We report Top-1 accuracy versus rounds and communicated bits, rounds-to-target accuracy, and final accuracy at a fixed communication budget. We repeat all experiments with five random seeds and report mean values in the plots. For CIFAR-10 and CIFAR-100, we additionally report the final test accuracy as mean $\pm$ standard deviation over these five runs in Table~\ref{tab:error_bar}. Due to space constraints, we present results under high compression, with comprehensive results across all settings in the Appendix.

\subsection{Empirical results}

\captionsetup[subfigure]{justification=centering}

\begin{figure*}[t]
  \centering
  \begin{subfigure}{0.245\textwidth}
    \includegraphics[width=\linewidth]{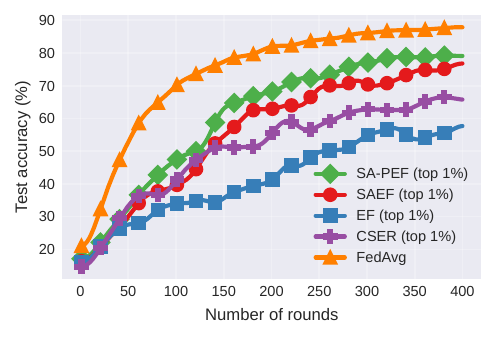}
    \caption{$p {=} 0.1$, $\gamma{=}0.5$}\label{fig:cifar10_10_5_e_main}
  \end{subfigure}\hfill
  \begin{subfigure}{0.245\textwidth}
    \includegraphics[width=\linewidth]{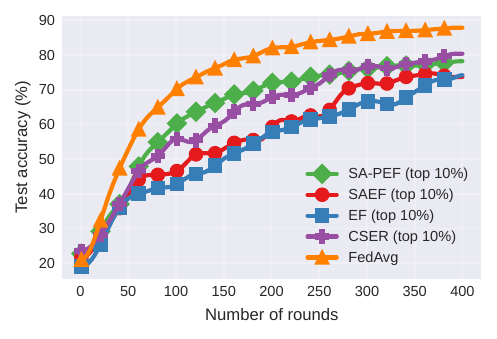}
    \caption{$p {=} 0.1$, $\gamma{=}0.5$}\label{fig:cifar10_10_5_e_m}
  \end{subfigure}\hfill
  \begin{subfigure}{0.245\textwidth}
    \includegraphics[width=\linewidth]{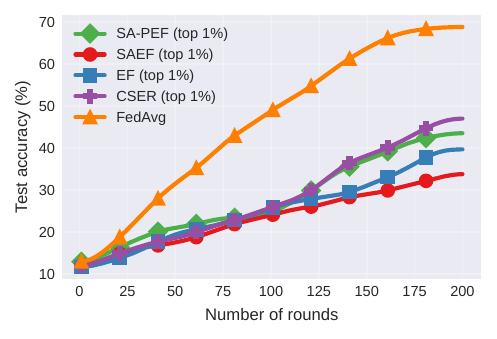}
    \caption{$p {=} 0.5$, $\gamma{=}0.1$}\label{fig:cifar10_50_1_e_main}
  \end{subfigure}
  \begin{subfigure}{0.245\textwidth}
    \includegraphics[width=\linewidth]{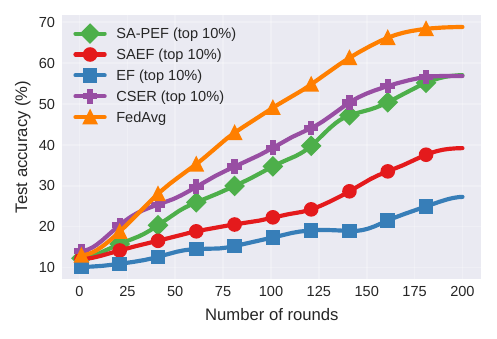}
    \caption{$p {=} 0.5$, $\gamma{=}0.1$}\label{fig:cifar10_50_10_e_main}
  \end{subfigure}
  \vspace{4pt}
  \begin{subfigure}{0.245\textwidth}
    \includegraphics[width=\linewidth]{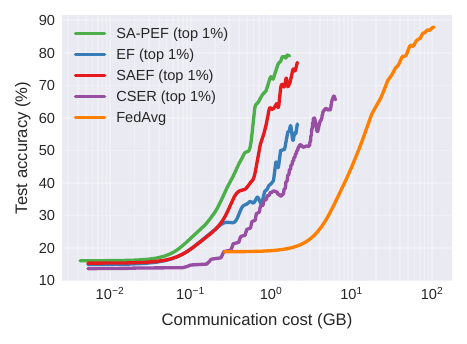}
    \caption{$p {=} 0.1$, $\gamma{=}0.5$}\label{fig:cifar10_10_5_c_main}
  \end{subfigure}\hfill
  \begin{subfigure}{0.245\textwidth}
    \includegraphics[width=\linewidth]{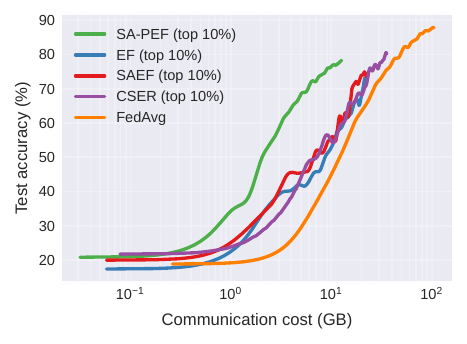}
    \caption{$p {=} 0.1$, $\gamma{=}0.5$}\label{fig:cifar10_10_5_c_m}
  \end{subfigure}\hfill
  \begin{subfigure}{0.245\textwidth}
    \includegraphics[width=\linewidth]{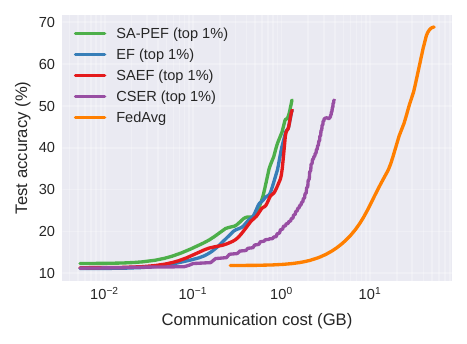}
    \caption{$p {=} 0.5$, $\gamma{=}0.1$}\label{fig:cifar10_50_1_c}
  \end{subfigure}
  \begin{subfigure}{0.245\textwidth}
    \includegraphics[width=\linewidth]{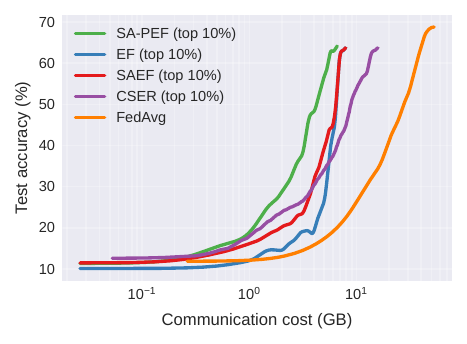}
    \caption{$p {=} 0.5$, $\gamma{=}0.1$}\label{fig:cifar10_50_10_c}
  \end{subfigure}
  \caption{Test accuracy vs number of rounds (row 1) and communicated GB (row 2) on the CIFAR-10 dataset using ResNet-9.}
  \label{fig:cifar10_resnet9_main}
\end{figure*}

Figure~\ref{fig:cifar10_resnet9_main} compares SA-PEF with FedAvg (dense) and compressed baselines (EF, SAEF, CSER) on CIFAR-10 with ResNet-9 under two participation rates ($p\in\{0.1,0.5\}$) and two compression budgets (Top-1\%, Top-10\%).
In the accuracy versus rounds plots (top row), SA-PEF generally reaches a given accuracy in fewer rounds than EF and SAEF, with the largest margin in the harder regime ($\gamma{=}0.1$, Top-1\%). SAEF often shows an initial jump but tends to plateau, while SA-PEF continues to improve. CSER typically lags early and only catches up later.
In the accuracy versus communication plots (bottom row), SA-PEF’s curves are left-shifted: for the same test accuracy it requires less uplink communication (in Gigabyte) than EF or SAEF, whereas FedAvg attains high accuracy only at orders of magnitude higher cost.
Raising participation to $p{=}0.5$ benefits all approaches and narrows round-wise gaps, but SA-PEF remains the most communication-efficient across schemes.\footnote{Under low participation (e.g., 1-10\%), effective batch sizes shrink and both drift and compression noise increase, hence participation-aware hyperparameters (e.g., learning rate, local steps $T$, or $\alpha_r$) may need tuning. Here, we fix hyperparameters across methods for fairness, which can reduce the observable advantage of SA-PEF in extreme low-participation regimes.}

Figure~\ref{fig:cifar100_resnet18} shows results on CIFAR-100 with ResNet-18. Despite the increased task difficulty, the same qualitative trends persist: SA-PEF tends to dominate early rounds and delivers higher accuracy per unit of communication across most regimes, SAEF often plateaus early, and CSER improves mainly at larger communication budgets.\footnote{As in CIFAR-10, under low partial participation (1–10\%), the differences between methods may be less visible without participation-aware tuning of hyperparameters.}  The gains are most pronounced under aggressive compression (Top-1\%) and low participation (e.g., $p{=}0.1$).
Overall, these results suggest that combining preview with partial error feedback provides faster early progress and superior accuracy-communication trade-offs across architectures and datasets.

\begin{figure*}[t]
  \centering
  \begin{subfigure}{0.245\textwidth}
    \includegraphics[width=\linewidth]{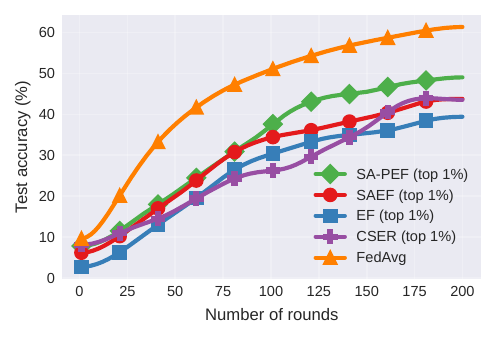}
    \caption{$p {=} 0.1$, Top-1\%}\label{fig:cifar100_10_1_e}
  \end{subfigure}\hfill
  \begin{subfigure}{0.245\textwidth}
    \includegraphics[width=\linewidth]{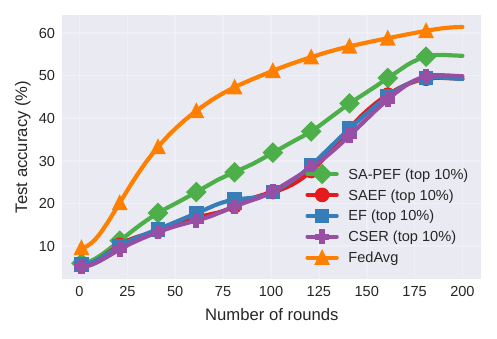}
    \caption{$p {=} 0.1$, Top-10\%}\label{fig:cifar100_10_10_e}
  \end{subfigure}\hfill
  \begin{subfigure}{0.245\textwidth}
    \includegraphics[width=\linewidth]{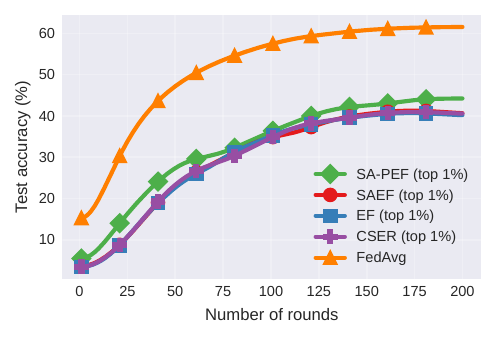}
    \caption{$p {=} 0.5$, Top-1\%}\label{fig:cifar100_50_1_e}
  \end{subfigure}
  \begin{subfigure}{0.245\textwidth}
    \includegraphics[width=\linewidth]{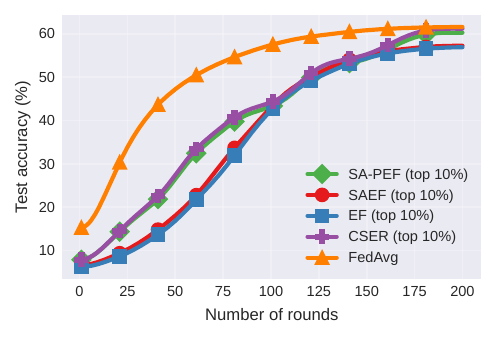}
    \caption{$p {=} 0.5$, Top-10\%}\label{fig:cifar100_50_10_e}
  \end{subfigure}
  \vspace{4pt}
  \begin{subfigure}{0.245\textwidth}
    \includegraphics[width=\linewidth]{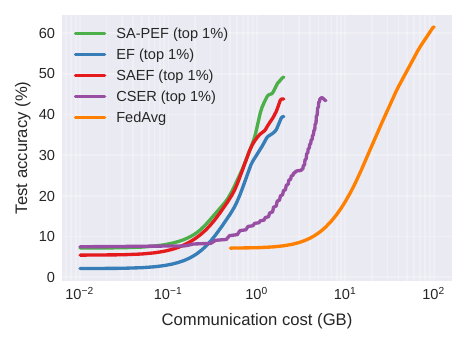}
    \caption{$p {=} 0.1$, Top-1\%}\label{fig:cifar100_10_1_c}
  \end{subfigure}\hfill
  \begin{subfigure}{0.245\textwidth}
    \includegraphics[width=\linewidth]{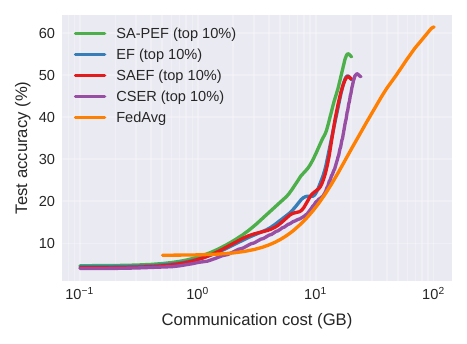}
    \caption{$p {=} 0.1$, Top-10\%}\label{fig:cifar100_10_10_c}
  \end{subfigure}\hfill
  \begin{subfigure}{0.245\textwidth}
    \includegraphics[width=\linewidth]{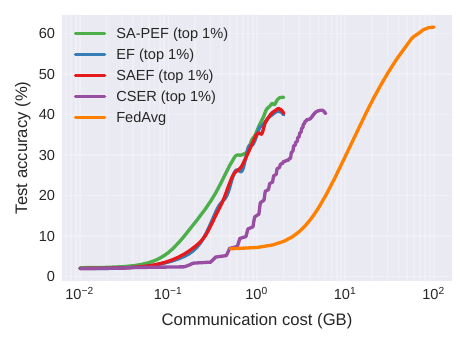}
    \caption{$p {=} 0.5$, Top-1\%}\label{fig:cifar100_50_1_c}
  \end{subfigure}
  \begin{subfigure}{0.245\textwidth}
    \includegraphics[width=\linewidth]{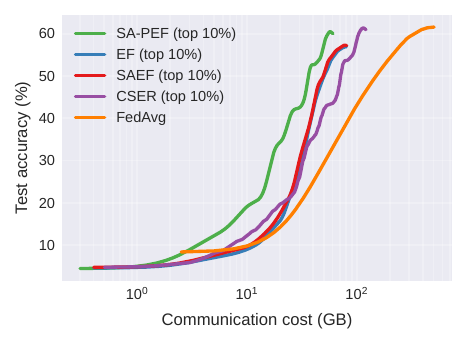}
    \caption{$p {=} 0.5$, Top-10\%}\label{fig:cifar100_50_10_c}
  \end{subfigure}
  \caption{Test accuracy vs number of rounds (row 1) and communicated GB (row 2) on the CIFAR-100 dataset using ResNet-18 and $\gamma{=}0.1$.}
  \label{fig:cifar100_resnet18}
\end{figure*}

\paragraph{Discussion: } Overall, our results position SA-PEF as a lightweight but effective upgrade of classical EF in FL. Compared to EF and its step-ahead variant SAEF, SA-PEF converges consistently faster and offers better accuracy-communication trade-offs under practical settings. Relative to CSER, which periodically resets residuals to control mismatch, SA-PEF achieves comparable or better robustness without introducing any additional reset-period hyperparameter, and it avoids CSER’s high \emph{peak} communication cost when compressed or full residuals are transmitted at reset rounds. Since practical systems must provision for peak, rather than average, bandwidth and latency, this makes SA-PEF more attractive as a drop-in component in resource-constrained deployments. Control-variate methods such as SCAFFOLD and SCAFCOM target a complementary axis, mitigating client drift and heterogeneity via additional per-client state, whereas SA-PEF focuses on reducing compression-induced residual mismatch within the EF family. In this sense, SA-PEF is best viewed as a drop-in improvement for EF-style compressed FL (and a natural building block for future combinations with control variates), providing significant gains in challenging regimes with high compression and strong heterogeneity.

\subsection{Sensitivity Analysis of Step-Ahead Coefficient $\alpha$}
To assess the robustness of SA-PEF to the choice of step-ahead coefficient, we sweep $\alpha_r$ between zero and one with increment of $0.1$ on CIFAR-10 with ResNet-9 and CIFAR-100 with ResNet-18 under non-IID Dirichlet partitioning ($\gamma=0.1$), Top-1\% sparsification, $T=5$ local steps, and $p=0.1$ participation. Figure~\ref{alpha_ablation} reports test accuracy versus rounds for EF ($\alpha_r=0$), SAEF ($\alpha_r=1$), and SA-PEF with intermediate $\alpha_r$ values. Three regimes emerge: (i) \emph{Small $\alpha$} (\(\alpha_r \le 0.3\)) behaves similarly to EF, with noticeably slower convergence and lower final accuracy. (ii) \emph{Intermediate $\alpha$} (\(\alpha_r \in [0.6,0.9]\)) produces nearly identical curves, yielding the fastest convergence and highest final accuracy. This interval includes the default $\alpha_r=0.85$ used in our main experiments. (iii) \emph{Full step-ahead} (\(\alpha_r = 1.0\), SAEF) accelerates early rounds but plateaus slightly below the best SA-PEF setting in the later phases. Overall, SA-PEF is robust across a broad high-$\alpha$ region, while performance is significantly affected only at the extremes: $\alpha_r \approx 0$ (reducing to EF) or $\alpha_r = 1$ (SAEF). This supports treating $\alpha$ as a momentum-like parameter and use a single default value (e.g., $\alpha_r \approx 0.8$-$0.9$) across tasks without heavy tuning.

\begin{figure*}[h]
  \centering
  \begin{subfigure}{0.45\textwidth}
    \includegraphics[width=\linewidth]{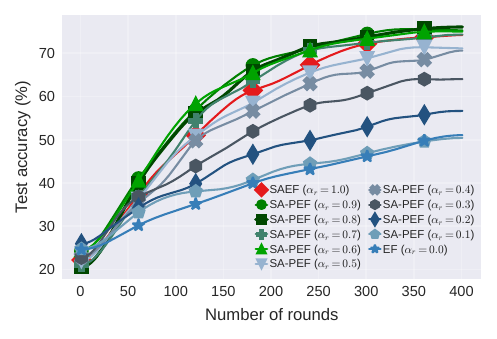}
  \caption{with CIFAR-10 using ResNet-9}\label{fig:cifar10_10_1_alpha}
\end{subfigure}\hfill
\begin{subfigure}{0.45\textwidth}
  \includegraphics[width=\linewidth]{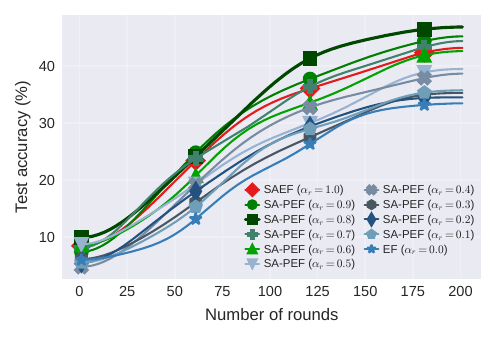}
  \caption{with CIFAR-100 using ResNet-18}\label{fig:cifar100_10_1_alpha}
\end{subfigure}
\caption{Sensitivity analysis of step-ahead coefficient $\alpha$.}
\label{alpha_ablation}
\end{figure*}

\subsection{Gradient Mismatch}

Let $w\in\mathbb{R}^d$ stack all trainable parameters, and let $f(\cdot;w)$ be the network in \emph{evaluation} mode (dropout disabled, BatchNorm with frozen statistics). For a mini-batch $S=\{(x_i,y_i)\}_{i=1}^b$ drawn from a held-out loader, define the batch loss as
\(
\mathcal{L}_S(w)=\tfrac{1}{b}\sum_{(x,y)\in S}\ell\big(f(x;w),y\big).
\)
At round $r$, for client $k$ and $\alpha\in[0,1]$, we consider two evaluation points \(w_r\) and \(w_r^{(\alpha,k)}= w_r - \alpha\, e_r^{(k)}\).
Using the same mini-batch $S$ for both, we compute the associated gradients as
\(g_r= \nabla_w \mathcal{L}_S(w_r)\) and \(g_r^{(\alpha,k)}= \nabla_w \mathcal{L}_S\big(w_r^{(\alpha,k)}\big)\),
and define the squared gradient-mismatch as
\(
\hat\varepsilon_r^{(k)}(\alpha)=\big\|\, g_r - g_r^{(\alpha,k)}\big\|_2^{2}
\)
and its client average as $\hat\varepsilon_r(\alpha)=\tfrac{1}{K}\sum_{k=1}^K \hat\varepsilon_r^{(k)}(\alpha)$. In particular, we switch the model to \texttt{eval} mode, clear any stale gradients, reuse the same $S$, compute $g_r$ and $g_r^{(\alpha,k)}$ via first-order autodiff, and then restore the model to training mode. We do not alter any optimizer state or BatchNorm buffer.


\begin{figure*}[h]
  \centering
  \begin{subfigure}{0.45\textwidth}
    \includegraphics[width=\linewidth]{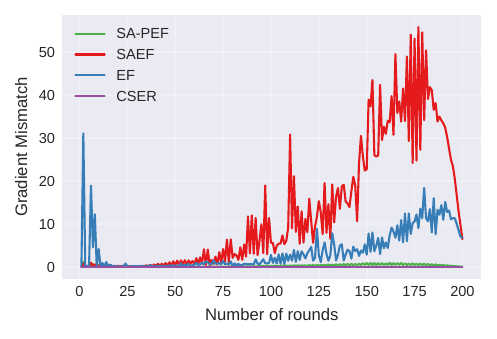}
    \caption{with CIFAR-10 using ResNet-9}\label{fig:cifar10_10_1_grad}
  \end{subfigure}\hfill
  \begin{subfigure}{0.45\textwidth}
    \includegraphics[width=\linewidth]{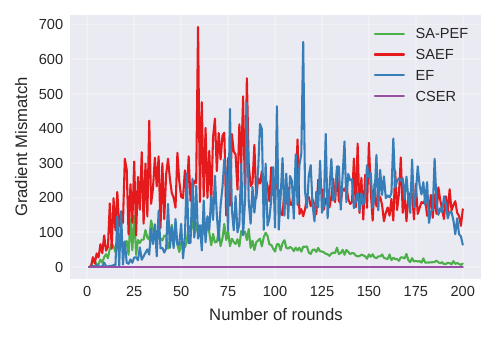}
    \caption{with CIFAR-100 using ResNet-18}\label{fig:cifar100_10_1_grad}
  \end{subfigure}
  \caption{Gradient mismatch for different algorithms.}
  \label{fig:mismatch}
\end{figure*}

Figure~\ref{fig:mismatch} shows the gradient-mismatch probe across training rounds for all methods. SA-PEF keeps the mismatch essentially flat and near zero throughout training: previewing only a fraction of the residual shifts the evaluation point closer to where EF applies the update, while the retained $(1-\alpha)e_r$ further prevents residual build-up. EF exhibits a steady late-phase rise, consistent with its one-step staleness effect as displacement accumulates over rounds. SAEF produces the largest spikes and highest overall mismatch, as with full preview ($\alpha{=}1$), the evaluation occurs at $w_r-e_r$. Large or heterogeneous residuals spike mismatch, causing late-stage plateaus. CSER stays near zero, reflecting its error-reset/averaging mechanism that suppresses persistent residual drift. Overall, the results confirm that combining partial preview with partial EF controls mismatch throughout training, unlike plain EF or SAEF.

\section{Conclusion}

We presented {SA-PEF}, which combines a step-ahead preview with partial error feedback to accelerate early-round progress in FL while preserving the long-run stability of EF under biased compression. Our theoretical analysis covers non-convex objectives, local SGD, partial participation, and \(\delta\)-contractive compressors, and establishes convergence to stationarity with rates matching non-convex FedSGD up to constants. To our knowledge, this is the first analysis of SAEF in a federated setting that simultaneously accounts for local updates, non-IID data, partial participation, and biased compression. Empirically, across datasets, models, and compressors, SA-PEF consistently reaches target accuracy in fewer rounds than EF and avoids the late-stage plateaus observed with full step-ahead. Looking ahead, a promising direction is developing \emph{adaptive schedules for \(\alpha\)}, guided by online estimates of noise, residual norms, or client drift, and analyzing the feedback between mismatch reduction and residual dynamics under partial participation. Another direction is integrating SA-PEF with \emph{adaptive optimizers and momentum}, including a preconditioned drift analysis to control momentum-residual interactions. Finally, a natural direction for future work is to bring ideas from EF21 into the federated local-SGD regime we study. 
We expect these extensions to further improve early-phase alignment while retaining stability at scale, advancing communication-efficient FL under realistic constraints.

\paragraph{Ethics statement.}
This work studies communication-efficient federated optimization using public, non-sensitive benchmark datasets (CIFAR-10/100 and Tiny-ImageNet). No personal, identifiable, or protected-class attributes are used, and no user-generated private data was accessed. All datasets are widely used for research and were obtained under their respective licenses; we cite the original sources. Our algorithms are intended to \emph{reduce} communication and potentially lower the energy costs of federated training.

\paragraph{Reproducibility statement.}
We provide a zip file with anonymized code and configuration files. The package includes:
(i) exact configs for all experiments;
(ii) scripts to download datasets and to create federated partitions (IID and Dirichlet with $\alpha\in\{0.1,0.5\}$);
(iii) model definitions (ResNet-9/18/34), compressor implementations (Top-$k$ with $k/d\in\{1\%,10\%\}$, error-feedback variants), and our SA-PEF scheduler.
Unless otherwise stated, experiments use client participation $q\in\{0.1,0.5\}$, local epochs $T\in\{1,\ldots,5\}$, batch size $B=64$, optimizer \texttt{SGD} (momentum $0.9$), weight decay $5\!\times\!10^{-4}$, and base step size $\eta_0=0.1$ with cosine or step decay. Communication is measured as uplink GB, including indices and values for sparse updates. Hardware: runs were executed on A100/A5000/H200 GPUs.






\bibliography{iclr2026_conference}
\bibliographystyle{iclr2026_conference}

\newpage
\appendix
\section{Proof of Convergence Results}

\subsection{Proof of Theorem~\ref{main_theorem}}
\label{theorem_1_proof}
\begin{proof}
Subtracting the \emph{average residual} from the communicated model converts a biased, compressed update into something that resembles vanilla SGD.  

We have:
\begin{enumerate}
  \item \textbf{Step-ahead shift} $w_{r+\frac12}^{(k)}\!=w_r^{(k)}-\alpha_r e_r^{(k)}$.
  \item \textbf{Residual update \& compression} $e_{r+1}^{(k)}=u_{r+1}^{(k)}-C(u_{r+1}^{(k)})$ with\\
  $u_{r+1}^{(k)}=(1-\alpha_r)e_r^{(k)}+g_r^{(k)}$.
\end{enumerate}

Let
\[
\bar e_r:=\frac1K\sum_{k=1}^Ke_r^{(k)},\quad
\bar u_{r+1}:=(1-\alpha_r)\bar e_r+\bar g_r,\quad
\bar C_{r+1}:=\frac1K\sum_{k}C(u_{r+1}^{(k)}) ,
\]
so the server update is $w_{r+1}=w_r-\eta\;\bar C_{r+1}$.

Define a virtual iterate:
\[
\begin{aligned}
x_r
  &= w_r-\eta\,\bar e_r \\
x_{r+1}
  &= w_{r+1}-\eta\,\bar e_{r+1}                                                   \notag\\
  &= (w_r-\eta\,\bar C_{r+1})-\eta\,\bigl(\bar u_{r+1}-\bar C_{r+1}\bigr)               \\
  &= w_r-\eta\,\bar u_{r+1}                                                       \\
  &= (x_r+\eta\,\bar e_r)\;-\;\eta\,\bigl[(1-\alpha_r)\bar e_r+\bar g_r\bigr]           \\
  &= x_r\;+\;\eta\,\alpha_r\bar e_r\;-\;\eta\,\bar g_r. 
\end{aligned} 
\]

\begin{equation}\label{eq:descent_raw}
   f(x_{r+1}) \;\le\; f(x_r)
      + \bigl\langle \nabla f(x_r),\,x_{r+1}-x_r \bigr\rangle
      + \frac{L}{2}\,\bigl\|x_{r+1}-x_r\bigr\|^{2}.
\end{equation}

Because \(x_{r+1}-x_r = \eta\bigl(\alpha_r\bar e_r-\bar g_r\bigr)\), taking the
expectation over all randomness inside round \(r\) yields
\begin{align}\label{eq:descent_exp}
   \mathbb{E}\!\bigl[f(x_{r+1})\bigr]
   \;\le\;
   \mathbb{E}\!\bigl[f(x_r)\bigr]
   + \eta\,\mathbb{E}\!\bigl[
       \bigl\langle\nabla f(x_r),\,\alpha_r\bar e_r-\bar g_r\bigr\rangle
     \bigr]
   + \frac{\eta^{2}L}{2}\,
     \mathbb{E}\!\bigl[
       \bigl\|\alpha_r\bar e_r-\bar g_r\bigr\|^{2}
     \bigr].
\end{align}

Isolating the drift of the objective between two consecutive rounds and
decomposing the inner product gives
\begin{align}\label{eq:descent_decomp}
   \mathbb{E}\!\bigl[f(x_{r+1})\bigr]
   - \mathbb{E}\!\bigl[f(x_r)\bigr]
   &\le
      \eta\,\mathbb{E}\!\bigl[
        \bigl\langle\nabla f(x_r),\,\alpha_r\bar e_r-\bar g_r\bigr\rangle
      \bigr]
      + \frac{\eta^{2}L}{2}\,
        \mathbb{E}\!\bigl[
          \bigl\|\alpha_r\bar e_r-\bar g_r\bigr\|^{2}
        \bigr]                                         \notag   \\[2pt]
   &=
      \underbrace{\eta\,\mathbb{E}\!\bigl[
        \bigl\langle\nabla f(w_r),\,\alpha_r\bar e_r-\bar g_r\bigr\rangle
      \bigr]}_{T1} 
      + \underbrace{\frac{\eta^{2}L}{2}\,
        \mathbb{E}\!\bigl[
          \bigl\|\alpha_r\bar e_r-\bar g_r\bigr\|^{2}
        \bigr]}_{T2}                                        \notag     \\[2pt]
   &\quad
      + \underbrace{\eta\,\mathbb{E}\!\bigl[
        \bigl\langle
          \nabla f(x_r)-\nabla f(w_r),\,\alpha_r\bar e_r-\bar g_r
        \bigr\rangle
      \bigr]}_{T3}.
\end{align}
\paragraph{Bounding the first inner-product term}
Recall
\[
T_{1}
=\eta\,\E_{r}\!\bigl[\langle\nabla f(w_{r}),\,\alpha_{r}\bar e_{r}-\bar g_{r}\rangle\bigr]
= \eta\,\alpha_r\,\langle\nabla f(w_r),\bar e_r\rangle
   -\eta\,\E_r\!\bigl[\langle\nabla f(w_r),\bar g_r\rangle\bigr],
\]
where
\(
\bar g_r=\frac{\eta_r}{K}\sum_{k=1}^K\sum_{t=0}^{T-1}(\nabla f_k(w_{r+\frac12,t}^{(k)})+\xi_{k,t})
\)
and \(\E_r[\xi_{k,t}]=0\).\\
Define \(\bar\nabla_t := \frac1K\sum_{k=1}^K\nabla f_k(w_{r+\frac12,t}^{(k)})\) and \(\Delta_t = \frac1K\sum_{k=1}^K(\nabla f_k(w_{r+\frac12,t}^{(k)})-\nabla f(w_r))\) \\
Write \(\bar\nabla_t=\nabla f(w_r)+\Delta_t\) and use \(L\)-smoothness to get
\(
\E_r\|\Delta_t\|^2
\le L^2\times \frac1K\sum_k\E_r\|w_{r+\frac12,t}^{(k)}-w_r\|^2.
\) \\
Young’s inequality gives, for any \(\lambda_1>0\),
\[
\E_r\langle\nabla f(w_r),\bar g_r\rangle
\ge \eta_r T\Bigl(1-\frac{\lambda_1}{2}\Bigr)\|\nabla f(w_r)\|^2
-\frac{\eta_r L^2}{2\lambda_1}\,S,
\quad
S:=\frac1K\sum_{k=1}^K\sum_{t=0}^{T-1}\E_r\|w_{r+\frac12,t}^{(k)}-w_r\|^2.
\]
By Lemma~\ref{lem:local-drift} (summed over \(t\)),
\[
S \le
12\,\eta_r^{2}T^{2}\sigma^{2}
+168\,\eta_r^{2}T^{3}\nu^{2}
+36\,\eta_r^{2}T^{3}\beta^{2}\|\nabla f(w_r)\|^{2}
+3T\alpha_r^2 \bar E_r.
\]
For the step-ahead piece, Young’s inequality yields, for any \(\lambda_2>0\),
\[
\eta\,\alpha_r\,\langle\nabla f(w_r),\bar e_r\rangle
\le \frac{\eta\lambda_2}{2}\|\nabla f(w_r)\|^2
   +\frac{\eta\,\alpha_r^2}{2\lambda_2}\,\bar E_r.
\]
Combining,
\[
T_1
\le
\eta\Bigl(\frac{\lambda_2}{2}-\eta_r T\Bigl(1-\frac{\lambda_1}{2}\Bigr)\Bigr)\|\nabla f(w_r)\|^2
+\frac{\eta\,\alpha_r^2}{2\lambda_2}\,\bar E_r
+\frac{\eta\,\eta_r L^2}{2\lambda_1}\,S.
\]
Choose \(\lambda_1=1\) and \(\lambda_2=\eta_r T/2\); then
\[
T_1
\le -\frac{\eta\,\eta_r T}{4}\,\|\nabla f(w_r)\|^2
   +\frac{\eta\,\alpha_r^2}{\eta_r T}\,\bar E_r
   +\frac{\eta\,\eta_r L^2}{2}\,S,
\]
and substituting the bound on \(S\) gives
\begin{align}
T_1
&\le \eta\,\eta_r\Bigl[
   -\frac{T}{4}
   + 18\,\eta_r^{2}L^{2}\beta^{2}T^{3}
\Bigr]\,
\|\nabla f(w_r)\|^{2} \notag\\
&\qquad
+ \eta\,\alpha_r^{2}\Bigl[\frac{1}{\eta_r T}+\frac{3}{2}\eta_r L^{2}T\Bigr]\bar E_r
+ 6\,\eta\,\eta_r^{3}L^{2}T^{2}\,\sigma^{2}
+ 84\,\eta\,\eta_r^{3}L^{2}T^{3}\,\nu^{2}.
\label{eq:T1-final-self}
\end{align}

\textbf{Bounding the quadratic (smoothness) term}

Recall the second contribution in the descent inequality
\begin{equation}
T_{2}\;=\;\frac{L\eta^{2}}{2}\,
           \mathbb{E}_{r}\!\bigl[\,
               \|\alpha_{r}\bar e_{r}-\bar g_{r}\|^{2}
           \bigr].
\tag{8}
\label{P-T2}
\end{equation}

\paragraph{Bounding the gradient–mismatch term}
\[
T_{3}
:=\;
\eta\,
\E_{r}\!\bigl[
  \langle\nabla f(x_{r})-\nabla f(w_{r}),\,
          \alpha_{r}\bar e_{r}-\bar g_{r}\rangle
\bigr].
\]
Since $x_r = w_r - \eta \bar e_r$, we have $\|x_r-w_r\|=\eta\|\bar e_r\|$, and by $L$-smoothness,
\[
\|\nabla f(x_r)-\nabla f(w_r)\|\le L\|x_r-w_r\|=L\eta\|\bar e_r\|.
\]
Apply Young’s inequality (any $\lambda>0$) with
\(a=\nabla f(x_r)-\nabla f(w_r)\) and \(b=\alpha_r\bar e_r-\bar g_r\):
\[
\bigl|\langle\nabla f(x_r)-\nabla f(w_r),\,\alpha_r\bar e_r-\bar g_r\rangle\bigr|
\le
\frac{\lambda}{2}\,L^{2}\eta^{2}\,\|\bar e_r\|^2
+\frac{1}{2\lambda}\,\|\alpha_r\bar e_r-\bar g_r\|^2.
\]
Taking $\E_r[\cdot]$, using $\|\bar e_r\|^2\le \bar E_r$, and multiplying by $\eta$ yields
\[
|T_3|
\;\le\;
\frac{\lambda L^{2}\eta^{3}}{2}\,\bar E_r
\;+\;
\frac{\eta}{2\lambda}\,\E_r\!\bigl[\|\alpha_r\bar e_r-\bar g_r\|^2\bigr].
\]
Choosing $\displaystyle \lambda=\frac{1}{L\eta}$ balances the two terms and $T_3\le |T_3|$ gives

\begin{align}
T_3
&\le \frac{L\eta^2}{2}\,\bar E_r
   + \frac{L\eta^2}{2}\,\E_r\!\bigl[\|\alpha_r\bar e_r-\bar g_r\|^2\bigr].
\label{eq:T3}
\end{align}

Put the pieces together:

\begin{align}
\E_r\!\bigl[f(x_{r+1})-f(x_r)\bigr]
&\le \eta\,\eta_r\Bigl[
   -\frac{T}{4}
   + 18\,\eta_r^{2}L^{2}\beta^{2}T^{3}
\Bigr]\,
\|\nabla f(w_r)\|^{2} \notag\\
&\qquad
+ \eta\,\alpha_r^{2}\Bigl[\frac{1}{\eta_r T}+\frac{3}{2}\eta_r L^{2}T\Bigr]\bar E_r
+ 6\,\eta\,\eta_r^{3}L^{2}T^{2}\,\sigma^{2}
+ 84\,\eta\,\eta_r^{3}L^{2}T^{3}\,\nu^{2} \notag\\
&\qquad +\;L\eta^{2}\,
           \mathbb{E}_{r}\!\bigl[\,
               \|\alpha_{r}\bar e_{r}-\bar g_{r}\|^{2}
           \bigr] + \frac{L\eta^2}{2}\,\bar E_r
\end{align}

Using Lemma~\ref{lem:step-ahead-moment-fixed},
\[
\begin{aligned}
\E_r\!\bigl[\|\alpha_r\bar e_r-\bar g_r\|^{2}\bigr]
&\le 
2\alpha_r^{2}\bar E_r
+ 8\eta_r^{2}T^{2}\|\nabla f(w_r)\|^{2}\bigl(1+36L^{2}\eta_r^{2}T^{2}\beta^{2}\bigr) \\
&\qquad
+\;\frac{4\eta_r^{2}T}{K}\,\sigma^{2}
+\;96L^{2}\eta_r^{4}T^{3}\bigl(\sigma^{2}+14T\nu^{2}\bigr)
+\;24\alpha_r^{2}\eta_r^{2}L^{2}T^{2}\bar E_r.
\end{aligned}
\]

\begin{align}
\E_r\!\bigl[f(x_{r+1})-f(x_r)\bigr]
&\le \Bigl[\eta\,\eta_r\!\Bigl(-\tfrac{T}{4}
+ 18\,\eta_r^{2}L^{2}\beta^{2}T^{3}\Bigr)
\;+\; L\eta^{2}\Bigl(8\eta_r^{2}T^{2}
+ 288\,L^{2}\eta_r^{4}T^{4}\beta^{2}\Bigr)\Bigr]\|\nabla f(w_r)\|^{2}\notag\\[2pt]
&+
\Bigl[\eta\,\alpha_r^{2}\Bigl(\tfrac{1}{\eta_r T}+\tfrac{3}{2}\eta_r L^{2}T\Bigr)
\;+\; L\eta^{2}\Bigl(2\alpha_r^{2}
+24\,\alpha_r^{2}\eta_r^{2}L^{2}T^{2}\Bigr)
\;+\;\tfrac{L\eta^{2}}{2}\Bigr]\bar E_r, \notag\\[2pt]
&+\Bigl[
6\,\eta\,\eta_r^{3}L^{2}T^{2}
+ \tfrac{4L\eta^{2}\eta_r^{2}T}{K}
+ 96\,L^{3}\eta^{2}\eta_r^{4}T^{3}
\Bigr]\,\sigma^{2} \notag\\[2pt]
&+\Bigl[
84\,\eta\,\eta_r^{3}L^{2}T^{3}
+ 1344\,L^{3}\eta^{2}\eta_r^{4}T^{4}
\Bigr]\,\nu^{2}
\end{align}

\paragraph{Telescoping.}
Summing over $r=0,\dots,R-1$ and taking total expectation,
\begin{align}
\E\!\bigl[f(x_{R})-f(x_0)\bigr]
&\le \sum_{r=0}^{R-1}\Bigl\{
A_r\,\E\|\nabla f(w_r)\|^{2}
+ E_r\,\E[\bar E_r]
+ V_r\,\sigma^{2}
+ H_r\,\nu^{2}\Bigr\},\label{eq:telescoped-raw}
\end{align}
where
\[
\begin{aligned}
A_r
&:= \eta\,\eta_r\!\Bigl(-\tfrac{T}{4}
+ 18\,\eta_r^{2}L^{2}\beta^{2}T^{3}\Bigr)
+ L\eta^{2}\!\Bigl(8\eta_r^{2}T^{2}
+ 288\,L^{2}\eta_r^{4}T^{4}\beta^{2}\Bigr),\\[2pt]
E_r
&:= \eta\,\alpha_r^{2}\Bigl(\tfrac{1}{\eta_r T}+\tfrac{3}{2}\eta_r L^{2}T\Bigr)
+ L\eta^{2}\Bigl(2\alpha_r^{2}
+24\,\alpha_r^{2}\eta_r^{2}L^{2}T^{2}\Bigr)
+ \tfrac{L\eta^{2}}{2},\\[2pt]
V_r
&:= 6\,\eta\,\eta_r^{3}L^{2}T^{2}
+ \tfrac{4L\eta^{2}\eta_r^{2}T}{K}
+ 96\,L^{3}\eta^{2}\eta_r^{4}T^{3},\\[2pt]
H_r
&:= 84\,\eta\,\eta_r^{3}L^{2}T^{3}
+ 1344\,L^{3}\eta^{2}\eta_r^{4}T^{4}.
\end{aligned}
\]

We now derive the main convergence guarantee. The analysis begins with the telescoped recursion from \eqref{eq:telescoped-raw}, which provides an upper bound on the function value progress, $\E[f(x_R) - f(x_0)]$. The key step is to ensure that the coefficient $A_r$ of the squared gradient norm is negative, which guarantees descent. The following lemma establishes sufficient conditions for this.

\begin{lemma}[Sufficient Conditions for Descent]
\label{lem:descent_conditions}
Let $s_r := \eta_r L T$. Assume that for all rounds $r$, the parameters satisfy $s_r \le 1/8$ and the following two conditions:
\begin{align}
    18\,\beta^{2}\,s_r^{2} &\le \frac{1}{8}, \label{eq:cond1} \\
    \eta &\le \frac{1}{256\,\beta^{2}\,L\,\eta_r T}, \label{eq:cond2}
\end{align}
where it is assumed that $\beta^2 \ge 1$. Then, the coefficient $A_r$ is bounded as
\[
    A_r \le -\frac{1}{16}\,\eta\,\eta_r T.
\]
\end{lemma}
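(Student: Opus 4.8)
The plan is to nondimensionalize $A_r$ through the scale-free quantity $s_r:=\eta_r L T$ and then control its two bracketed groups separately. Using $\eta_r T = s_r/L$ and $L\eta_r T = s_r$, the first group factors cleanly as $\eta\,\eta_r T\bigl(-\tfrac14 + 18\,\beta^2 s_r^2\bigr)$, while the second group, after substituting $\eta_r^2 T^2 = s_r^2/L^2$ and $\eta_r^4 T^4 = s_r^4/L^4$, becomes $\tfrac{\eta^2}{L}\bigl(8 s_r^2 + 288\,\beta^2 s_r^4\bigr)$. The target inequality $A_r \le -\tfrac{1}{16}\eta\,\eta_r T$ then follows once I show the first (negative) group contributes at most $-\tfrac18\eta\,\eta_r T$ and the second (positive) group at most $+\tfrac{1}{16}\eta\,\eta_r T$.

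For the first group I would invoke hypothesis \eqref{eq:cond1}, namely $18\,\beta^2 s_r^2 \le \tfrac18$, which gives $-\tfrac14 + 18\,\beta^2 s_r^2 \le -\tfrac18$ and hence bounds that group by $-\tfrac18\eta\,\eta_r T$. For the positive group the goal reduces, after dividing by $\eta$ and using $\eta_r T = s_r/L$, to proving $\eta\bigl(8 s_r + 288\,\beta^2 s_r^3\bigr) \le \tfrac{1}{16}$. I would factor the left side as $8\eta s_r\bigl(1 + 36\,\beta^2 s_r^2\bigr)$, use $36\,\beta^2 s_r^2 \le \tfrac14$ (a direct consequence of \eqref{eq:cond1}) to get $1 + 36\,\beta^2 s_r^2 \le \tfrac54$, and then apply hypothesis \eqref{eq:cond2} rewritten as $\eta \le \tfrac{1}{256\,\beta^2 s_r}$ (again since $L\eta_r T = s_r$). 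This yields $8\eta s_r\cdot\tfrac54 \le \tfrac{10}{256\,\beta^2}$, and because $\beta^2 \ge 1$ this is at most $\tfrac{10}{256} < \tfrac{1}{16}$; the positive group is therefore at most $\tfrac{1}{16}\eta\,\eta_r T$. Combining the two bounds gives $A_r \le -\tfrac18\eta\,\eta_r T + \tfrac{1}{16}\eta\,\eta_r T = -\tfrac{1}{16}\eta\,\eta_r T$, as claimed.

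The argument is essentially bookkeeping, so the main obstacle is purely in the constants: one must verify that the coefficients $8$ and $288$, together with the factor $256\,\beta^2$ in \eqref{eq:cond2} and the $\tfrac18$ margin in \eqref{eq:cond1}, leave enough slack for the positive group to fit under $\tfrac{1}{16}\eta\,\eta_r T$ rather than some larger multiple. The hypothesis $\beta^2 \ge 1$ plays only a mild role — it ensures the residual $\tfrac{10}{256\,\beta^2}$ stays below $\tfrac{1}{16}$ — and I would note that the constant in \eqref{eq:cond2} is chosen precisely to create this margin. I would also flag that the standing condition $s_r \le \tfrac18$ is not separately needed in this computation: with $\beta^2 \ge 1$, condition \eqref{eq:cond1} already forces $s_r \le 1/12$, so every intermediate quantity is automatically small.
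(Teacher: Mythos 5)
Your proof is correct and takes essentially the same route as the paper's: the identical split of $A_r$ into the $18\,\beta^2 s_r^2$ term and the smoothness term, the same use of \eqref{eq:cond1} (directly, and via $1+36\,\beta^2 s_r^2\le\tfrac54$), and the same application of \eqref{eq:cond2} with $\beta^2\ge1$ to bound the positive group by $\tfrac{10}{256\,\beta^2}\,\eta\,\eta_r T\le\tfrac{1}{16}\,\eta\,\eta_r T$; writing everything in terms of $s_r$ is purely cosmetic. Your closing observation that the standing condition $s_r\le\tfrac18$ is redundant for this particular computation (since \eqref{eq:cond1} with $\beta^2\ge1$ already forces $s_r\le\tfrac{1}{12}$) is also correct.
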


\begin{proof}
The coefficient $A_r$ is defined as
\[
A_r = -\frac{\eta\eta_r T}{4} + 18\eta\eta_r^3 L^2 \beta^2 T^3 + L\eta^2(8\eta_r^2 T^2 + 288 L^2 \eta_r^4 T^4 \beta^2).
\]
We proceed by bounding the two positive terms separately.

First, we bound the term $18\eta\eta_r^3 L^2 \beta^2 T^3$. By applying the definition $s_r = \eta_r L T$ and condition \eqref{eq:cond1}, we obtain
\begin{align*}
    18\eta\eta_r^3 L^2 \beta^2 T^3 &= (\eta\eta_r T) \times (18 \beta^2 \eta_r^2 L^2 T^2) \\
    &= (\eta\eta_r T) \times (18 \beta^2 s_r^2) \le \frac{1}{8}\eta\eta_r T.
\end{align*}

Next, we bound the term $L\eta^2(8\eta_r^2 T^2 + 288 L^2 \eta_r^4 T^4 \beta^2)$. Condition \eqref{eq:cond1} implies $36\beta^2 s_r^2 \le 1/4$. This allows us to simplify the expression within parentheses:
\begin{align*}
    8\eta_r^2 T^2 + 288 L^2 \eta_r^4 T^4 \beta^2 &= 8\eta_r^2 T^2 \left(1 + 36 \beta^2 L^2 \eta_r^2 T^2 \right) \\
    &= 8\eta_r^2 T^2 (1 + 36 \beta^2 s_r^2) \\
    &\le 8\eta_r^2 T^2 \left(1 + \frac{1}{4}\right) = 10\eta_r^2 T^2.
\end{align*}
Using this intermediate result and condition \eqref{eq:cond2}, we bound the full term. The assumption $\beta^2 \ge 1$ ensures that $256\beta^2 \ge 160$.
\begin{align*}
    L\eta^2(8\eta_r^2 T^2 + 288 L^2 \eta_r^4 T^4 \beta^2) &\le L\eta^2(10\eta_r^2 T^2) \\
    &= (10 L \eta \eta_r^2 T^2) \times \eta \\
    &\le 10 L \eta \eta_r^2 T^2 \times \left( \frac{1}{256\beta^2 L \eta_r T} \right) \\
    &= \frac{10}{256\beta^2} \eta \eta_r T \le \frac{10}{160} \eta \eta_r T = \frac{1}{16} \eta \eta_r T.
\end{align*}

Finally, substituting these bounds back into the expression for $A_r$ yields the desired result:
\begin{align*}
    A_r &\le -\frac{\eta\eta_r T}{4} + \frac{1}{8}\eta\eta_r T + \frac{1}{16}\eta\eta_r T \\
    &= \left( -\frac{4}{16} + \frac{2}{16} + \frac{1}{16} \right) \eta\eta_r T \\
    &= -\frac{1}{16}\eta\eta_r T.
\end{align*}
This completes the proof.
\end{proof}

\subsubsection{General Convergence Rate for Decaying Step-Sizes}

Applying the bound on $A_r$ from Lemma~\ref{lem:descent_conditions} to the main inequality \eqref{eq:telescoped-raw} yields:
\[
\E[f(x_R) - f(x_0)] \le \sum_{r=0}^{R-1} \left( -\frac{\eta\eta_r T}{16} \E\|\nabla f(w_r)\|^2 + E_r\,\E[\bar E_r] + V_r\,\sigma^{2} + H_r\,\nu^{2} \right).
\]
Rearranging the terms to isolate the sum of squared gradients and assuming the function is bounded below by $f_\star := \inf_x f(x)$, we obtain a bound on the weighted sum:

\begin{equation}
\label{eq:weighted_gradient_bound}
\sum_{r=0}^{R-1} \eta_r T\,\E\|\nabla f(w_r)\|^{2} \le \frac{16}{\eta}(f(x_0)-f_\star) + \frac{16}{\eta}\sum_{r=0}^{R-1}\E\!\bigl[E_r\,\bar E_r + V_r\,\sigma^{2} + H_r\,\nu^{2}\bigr].
\end{equation}

The main challenge is to bound the term involving the residual energy, $\sum_r E_r \E[\bar{E}_r]$. We first decouple the coefficient using the supremum $\mathcal{E}_{\max} := \sup_r E_r$. Then, we apply the bound on the sum of residuals from Lemma~\ref{lem:residual-A}, which is derived by unrolling the per-round recursion for $\bar{E}_r$:
\begin{align*}
\frac{16}{\eta} \sum_{r=0}^{R-1} E_r \E[\bar{E}_r] &\le \frac{16\mathcal{E}_{\max}}{\eta} \sum_{r=0}^{R-1} \E[\bar{E}_r] \\
&\le \frac{16\mathcal{E}_{\max}}{\eta(1-\rho_{\max})} \bar{E}_0 + \frac{16\mathcal{E}_{\max}(1-1/\delta)}{\eta(1-\rho_{\max})} \sum_{r=0}^{R-1} \E[B_r^{(\nabla)} + B_r^{(\nu,\sigma)}] \\
&\overset{(a)}{\le} \frac{16\mathcal{E}_{\max}(1-1/\delta)}{\eta(1-\rho_{\max})} \sum_{r=0}^{R-1} \E[B_r^{(\nabla)} + B_r^{(\nu,\sigma)}]
\end{align*}
where \textnormal{(a)} uses $\bar E_0=0$ since $e_0^{(k)}\equiv0$.

Substituting the residual recursion (Lemma~\ref{lem:residual-A}) into \eqref{eq:weighted_gradient_bound} introduces forcing terms.
Since \(B_r^{(\nabla)}\) is proportional to \(\|\nabla f(w_r)\|^2\),
it produces a contribution of the form
\(\Theta\sum_{r=0}^{R-1}\eta_r T\,\E\|\nabla f(w_r)\|^2\),
where we \emph{define}
\[ 
\begin{aligned}\label{eq:def-Theta}
\Theta
&:= \frac{16}{\eta}\times \frac{\mathcal{E}_{\max}}{1-\rho_{\max}}
\left(1-\frac{1}{\delta}\right)\,\beta^{2}\,
\sup_{0\le r<R}\frac{8\,\eta_r^{2}T^{2}+288\,L^{2}\eta_r^{4}T^{4}}{\eta_r T}\\
&\;=\;\frac{16}{\eta}\times \frac{\mathcal{E}_{\max}}{1-\rho_{\max}}
\left(1-\frac{1}{\delta}\right)\,\beta^{2}\,
\sup_{r}\bigl(8\,\eta_r T+288\,L^{2}\eta_r^{3}T^{3}\bigr).
\end{aligned}
\]
With this notation, the weighted inequality becomes
\begin{align}\label{eq:absorb-ineq} \notag
\bigl(1-\Theta\bigr)&\sum_{r=0}^{R-1}\eta_r T\,\E\|\nabla f(w_r)\|^2\\
&\le
\frac{16}{\eta}\Bigl[(f(x_0)-f_\star)
+\frac{\mathcal{E}_{\max}}{1-\rho_{\max}}\left(1-\frac{1}{\delta}\right)\,\sum_{r=0}^{R-1}\E B_r^{(\nu,\sigma)}
+\sum_{r=0}^{R-1}V_r\,\sigma^2+\sum_{r=0}^{R-1}H_r\,\nu^2\Bigr].
\end{align}
We now assume a small-steps/compression regime that ensures \(\Theta\le \tfrac12\). We can therefore move this term to the left-hand side and absorb it, which tightens the overall bound at the cost of multiplying the remaining terms on the right-hand side by a factor of 2.

To obtain a bound on the conventional average, compare it to the
weighted sum with weights \(q_r:=\eta_r T>0\). Let
\(q_{\min}:=\min_{0\le r<R} q_r\), \(S_R:=\sum_{r=0}^{R-1} q_r\), and
\(\displaystyle \phi_R:=\frac{S_R}{R\,q_{\min}}\ge 1\).
Then, for nonnegative terms,
\begin{align}
\frac{1}{R}\sum_{r=0}^{R-1}\E\|\nabla f(w_r)\|^{2}
&\le \frac{S_R}{R\,w_{\min}}\times \frac{1}{S_R}\sum_{r=0}^{R-1} w_r\,\E\|\nabla f(w_r)\|^{2}\\
&= \phi_R\times\frac{1}{S_R}\sum_{r=0}^{R-1}\eta_r T\,\E\|\nabla f(w_r)\|^{2}.
\end{align}
Combining this with the absorbed bound and noting \(x_0=w_0\) since \(e_0^{(k)}\equiv 0\), we obtain
\[
\begin{aligned}
\frac{1}{R}\sum_{r=0}^{R-1}\E\|\nabla f(w_r)\|^{2}
&\le\ \phi_R\,\frac{32}{\eta\,S_R}\Bigg[
(f(w_0)-f_\star)
+\sum_{r=0}^{R-1}V_r\,\sigma^{2}
+\sum_{r=0}^{R-1}H_r\,\nu^{2} \\[-2pt]
&\hspace{26mm}
+\ \frac{\mathcal E_{\max}}{1-\rho_{\max}}\Bigl(1-\tfrac1\delta\Bigr)
\sum_{r=0}^{R-1}\!\Bigl(4\,\eta_r^{2}T+96\,L^{2}\eta_r^{4}T^{3}\Bigr)\,\sigma^{2} \\[-2pt]
&\hspace{26mm}
+\ \frac{\mathcal E_{\max}}{1-\rho_{\max}}\Bigl(1-\tfrac1\delta\Bigr)
\sum_{r=0}^{R-1}\!\Bigl(8\,\eta_r^{2}T^{2}+1344\,L^{2}\eta_r^{4}T^{4}\Bigr)\,\nu^{2}
\Bigg].
\end{aligned}
\]

\subsubsection{Convergence Rate for a Constant Step-Size}

In the simpler setting where the inner learning rate is constant, $\eta_r \equiv \eta_0$ for all $r$, the sum of weights is $S_R = R\eta_0 T$. Under the descent and absorption conditions, we obtain
\begin{align*}
\frac{1}{R}\sum_{r=0}^{R-1}\E\|\nabla f(w_r)\|^{2}
\ \le\
&\frac{32\,(f(w_0)-f^\star)}{\eta\,\eta_0T\,R}
\;+\; \Bigl(1-\tfrac{1}{\delta}\Bigr)\Bigl[C_\sigma\,\eta_0^{2}L^{2}T\,\sigma^{2} + C_\nu\,\eta_0^{2}L^{2}T^{2}\,\nu^{2}\Bigr] \\[-2pt]
&\;+\; \frac{128\,L\,\eta_0}{K}\,\sigma^{2}
\end{align*}
with
\[
\begin{aligned}
C_\sigma&=\frac{32}{\eta}\Bigl[6\,\eta\,\eta_0^{2}L^{2}T+96\,L^{3}\eta\,\eta_0^{3}T^{2}\Bigr]
+\frac{32}{\eta}\times\frac{\mathcal E_{\max}}{1-\rho_{\max}}\Bigl[4\,\eta_0+96\,L^{2}\eta_0^{3}T^{2}\Bigr],\\
C_\nu&=\frac{32}{\eta}\Bigl[84\,\eta\,\eta_0^{2}L^{2}T^{2}+1344\,L^{3}\eta\,\eta_0^{3}T^{3}\Bigr]
+\frac{32}{\eta}\times\frac{\mathcal E_{\max}}{1-\rho_{\max}}\Bigl[8\,\eta_0T+1344\,L^{2}\eta_0^{3}T^{3}\Bigr].
\end{aligned}
\]
which proves the theorem.
\end{proof}

\subsection{Lemmas}

\begin{lemma}[Local-model drift under SA-PEF]      \label{lem:local-drift}
Fix a communication round \(r\) and an inner-loop horizon \(T\ge 1\).
Assume the step-size condition \(0 < \eta_r \le \frac{1}{8LT}\) and Assumptions \ref{A1}--\ref{A3}.

Then, for every local step \(t\in\{0,\dots,T-1\}\),
\begin{align}
\frac{1}{K} \sum_{k=1}^{K}
\mathbb{E}_r \left[
  \|w_{r+\frac12,t}^{(k)} - w_r\|^2
\right]
&\le
12\,\eta_r^{2}T\,\sigma^{2}
+168\,\eta_r^{2}T^{2}\,\nu^{2}
+36\,\eta_r^{2}T^{2}\,\beta^{2}\,\|\nabla f(w_r)\|^{2}\notag\\
&\quad
+3\,\alpha_r^{2}\times \frac{1}{K}\sum_{k=1}^{K}\E_r\|e_r^{(k)}\|^{2}.
\end{align}

\end{lemma}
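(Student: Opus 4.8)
The plan is to track the per-step local drift through a \emph{one-step} recursion rather than unrolling the whole inner loop at once, since the step-ahead start makes the displacement begin at a nonzero offset and the stochastic gradients at later steps are correlated with the noise injected at earlier steps. First I would write the displacement explicitly. Because each client enters the round with the synchronized model ($w_r^{(k)}=w_r$) and then shifts by $-\alpha_r e_r^{(k)}$ before running local SGD, the displacement $D_t^{(k)}:=w_{r+\frac12,t}^{(k)}-w_r$ satisfies $D_0^{(k)}=-\alpha_r e_r^{(k)}$ and $D_{t+1}^{(k)}=D_t^{(k)}-\eta_r\nabla f_k(w_{r+\frac12,t}^{(k)})-\eta_r\xi_{r,t}^{(k)}$. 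I would define the averaged drift $\Phi_t:=\frac1K\sum_k\E_r\|D_t^{(k)}\|^2$ and the averaged residual energy $\bar E_r:=\frac1K\sum_k\E_r\|e_r^{(k)}\|^2$, so the target becomes $\Phi_t\le 12\eta_r^2T\sigma^2+168\eta_r^2T^2\nu^2+36\eta_r^2T^2\beta^2\|\nabla f(w_r)\|^2+3\alpha_r^2\bar E_r$, with the clean initialization $\Phi_0=\alpha_r^2\bar E_r$.

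Next I would form the one-step bound. Conditioning on $\mathcal F_{r,t}$, the noise $\xi_{r,t}^{(k)}$ is zero-mean and independent (Assumption~\ref{A2}(i),(iii)), so the cross term with the $\mathcal F_{r,t}$-measurable part vanishes and the noise contributes exactly $\eta_r^2\sigma^2$ per step via Assumption~\ref{A2}(ii); this is precisely what yields the linear $T\sigma^2$ scaling rather than $T^2$, and is the reason for preferring the recursive route over a single expansion of $\|\sum_s\xi_{r,s}^{(k)}\|^2$, whose gradient-noise cross terms do not cancel. For the deterministic part I would apply Young's inequality $\|a-b\|^2\le(1+\gamma)\|a\|^2+(1+\gamma^{-1})\|b\|^2$ to separate $D_t^{(k)}$ from $\eta_r\nabla f_k(w_{r+\frac12,t}^{(k)})$, bound the local gradient by $L$-smoothness (Assumption~\ref{A1}) as $\|\nabla f_k(w_{r+\frac12,t}^{(k)})\|^2\le 2\|\nabla f_k(w_r)\|^2+2L^2\|D_t^{(k)}\|^2$, and convert $\frac1K\sum_k\|\nabla f_k(w_r)\|^2$ into $\beta^2\|\nabla f(w_r)\|^2+\nu^2$ by gradient dissimilarity (Assumption~\ref{A3}). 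Averaging over $k$ produces a scalar linear recursion $\Phi_{t+1}\le a\,\Phi_t+b$ with amplification $a=(1+\gamma)+2(1+\gamma^{-1})\eta_r^2L^2$ and forcing $b=\eta_r^2\sigma^2+2(1+\gamma^{-1})\eta_r^2(\beta^2\|\nabla f(w_r)\|^2+\nu^2)$.

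The final step is to unroll the recursion, $\Phi_t\le a^t\Phi_0+b\sum_{j=0}^{t-1}a^j$, and control the geometric factor. Choosing $\gamma$ of order $1/T$ and invoking the step-size condition $\eta_r\le\frac{1}{8LT}$ (so that $\eta_r^2L^2T^2\le\frac{1}{64}$) keeps $a^{t}\le a^{T}$ bounded by the small constant $3$; this is exactly what pins the coefficient of the offset term $\Phi_0=\alpha_r^2\bar E_r$ at $3$, while the geometric sum $\sum_{j<t}a^j=O(T)$ supplies one factor of $T$ to the forcing and the weight $1+\gamma^{-1}=O(T)$ supplies the second, producing the $T\sigma^2$ and $T^2(\beta^2\|\nabla f(w_r)\|^2+\nu^2)$ terms after collecting constants. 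I expect the main obstacle to be the self-referential coupling: because smoothness re-expresses the local gradient through the very drift $\|D_t^{(k)}\|^2$ being bounded, $\Phi_t$ appears on both sides, and only the small-stepsize budget $\eta_r\le 1/(8LT)$ makes the induced growth over the $T$ inner steps a bounded constant rather than exponential in $T$. The secondary bookkeeping difficulty is propagating the step-ahead offset $\alpha_r e_r^{(k)}$ through this recursion while keeping its multiplier controlled, which the $\gamma$ of order $1/T$ tuning handles simultaneously with the $a^T\le 3$ bound.
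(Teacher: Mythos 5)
Your proposal is correct and follows essentially the same route as the paper's proof: a one-step recursion for the displacement $u_{k,t}=w_{r+\frac12,t}^{(k)}-w_r$ in which the noise cancels conditionally (giving the $T\sigma^2$ rather than $T^2\sigma^2$ scaling), a Young parameter $\gamma\asymp 1/T$ combined with the step-size condition $\eta_r\le\frac{1}{8LT}$ to keep the amplification factor at $1+O(1/T)$, geometric unrolling with $a^T\le e\le 3$ pinning the coefficient of $\Phi_0=\alpha_r^2\bar E_r$ at $3$, and the geometric sum of length $O(T)$ times the $O(T)$ Young weight producing the $T$ and $T^2$ forcing terms via Assumption~\ref{A3}. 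The only difference is cosmetic bookkeeping: the paper expands the square exactly and bounds the two cross terms separately (the smoothness deviation via $2\eta_r L\le\frac{1}{4T}$ and the anchored gradient via Young with $\gamma=\frac{1}{4T}$, yielding $A=1+\frac{1}{2T}+2\eta_r^2L^2\le e^{1/T}$), whereas you apply the relaxed triangle inequality to the whole deterministic step, which yields the same $1+O(1/T)$ recursion and in fact slightly smaller constants than the lemma's.
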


\begin{proof}
All expectations are conditional on the randomness inside round \(r\).

Set \(u_{k,t} := w_{r+\frac12,t}^{(k)} - w_r\) (\(u_{k,0} = -\alpha_r e_r^{(k)}\)).
Client \(k\) updates by
\[
u_{k,t+1} = u_{k,t} - \eta_r
          \left(\nabla f_k(w_{r+\frac12,t}^{(k)}) + \xi_{k,t}\right),
\qquad
\mathbb{E}_r[\xi_{k,t}] = 0,\;
\mathbb{E}_r\|\xi_{k,t}\|^{2} \le \sigma^{2}.
\]

Using \(\|a - b\|^{2} = \|a\|^{2} - 2\langle a, b\rangle + \|b\|^{2}\), we get
\[
\mathbb{E}_r\|u_{k,t+1}\|^{2}
= \mathbb{E}_r\|u_{k,t}\|^{2}
   - 2\eta_r\,\mathbb{E}_r\langle u_{k,t}, \nabla f_k(w_{r+\frac12,t}^{(k)})\rangle
   + \eta_r^{2}\mathbb{E}_r\|\nabla f_k(w_{r+\frac12,t}^{(k)})\|^{2}
   + \eta_r^{2}\sigma^{2}.
\]

Write \(\nabla f_k(w_{r+\frac12,t}^{(k)}) = \nabla f_k(w_r) + \Delta_{k,t}\)
with \(\|\Delta_{k,t}\| \le L\|u_{k,t}\|\) (\(L\)-smoothness), so
\[
-2\eta_r\langle u_{k,t}, \Delta_{k,t} \rangle
    \le 2\eta_r L\|u_{k,t}\|^{2},
\]
and using Young’s inequality \( 2ab \le \gamma a^2 + \frac{1}{\gamma}b^2 \) with $\gamma = \frac{1}{4T}$,
\[
-2\eta_r\langle u_{k,t}, \nabla f_k(w_r) \rangle
\le \tfrac{1}{4T}\|u_{k,t}\|^{2} + 4\eta_r^{2}T\|\nabla f_k(w_r)\|^{2}.
\]

Since \(\eta_r \le 1/(8LT)\), then \(2\eta_r L \le 1/(4T)\), hence
\[
-2\eta_r\,\mathbb{E}_r\langle u_{k,t}, \nabla f_k(w_{r+\frac12,t}^{(k)})\rangle
\le \tfrac{1}{2T}\,\mathbb{E}_r\|u_{k,t}\|^{2}
        + 4\eta_r^{2}T\|\nabla f_k(w_r)\|^{2}.
\]

Also,
\[
\mathbb{E}_r\|\nabla f_k(w_{r+\frac12,t}^{(k)})\|^{2}
\le 2\|\nabla f_k(w_r)\|^{2} + 2L^{2}\mathbb{E}_r\|u_{k,t}\|^{2}.
\]

Combining gives:
\[
\mathbb{E}_r\|u_{k,t+1}\|^{2}
\le \left(1 + \tfrac{1}{2T} + 2\eta_r^{2}L^{2}\right)\mathbb{E}_r\|u_{k,t}\|^{2}
+ (2 + 4T)\eta_r^{2}\|\nabla f_k(w_r)\|^{2}
+ \eta_r^{2}\sigma^{2}.
\]

Let \(A := 1 + \tfrac{1}{2T} + 2\eta_r^{2}L^{2} \le 1 + \tfrac{1}{T} \le e^{1/T}\).
Define \(\bar D_t := \frac{1}{K} \sum_k \mathbb{E}_r\|u_{k,t}\|^{2}\).
Using Assumption~\ref{A3} we get
\[
\bar D_{t+1}
\le A\,\bar D_t
+ (2 + 4T)\eta_r^{2}
  \left(\beta^{2}\|\nabla f(w_r)\|^{2} + \nu^{2} \right)
+ \eta_r^{2}\sigma^{2} \;=:\; A\bar D_t + B.
\]

Since \(A \le e^{1/T}\), for \(t\le T\) we have \(A^{t} \le e \le 3\) and
\[
\frac{A^{t}-1}{A-1} \le \frac{e-1}{A-1} \le 2eT \le 6T,
\]
because \(A-1 \ge \tfrac{1}{2T}\).
Therefore,
\[
\bar D_t
\le A^{t}\bar D_0 + \frac{A^{t}-1}{A-1}\,B
\le 3\bar D_0
+ 6T \eta_r^{2} \sigma^{2}
+ 6T (2 + 4T) \eta_r^{2}
  \left( \beta^{2} \|\nabla f(w_r)\|^{2} + \nu^{2} \right).
\]

Finally, with \(\bar D_0 = \alpha_r^{2} \tfrac{1}{K} \sum_k \mathbb{E}_r \|e_r^{(k)}\|^{2}\), we loosen constants to the displayed form in the lemma:
\[
\bar D_t
\le
12\eta_r^{2}T
   \left(\sigma^{2} + 6T\nu^{2} + \beta^{2}\|\nabla f(w_r)\|^{2}\right)
+ 24\eta_r^{2}T^{2}
   \left(\beta^{2}\|\nabla f(w_r)\|^{2} + 4\nu^{2}\right)
+ 3\alpha_r^{2}\frac{1}{K} \sum_k \mathbb{E}_r \|e_r^{(k)}\|^{2}.
\]

Combine the $T$ and $T^2$ terms using $12T+24T^2\le 36T^2$ and compute $12T\times 6T+24T^2\times 4=168T^2$, we get
\begin{align}
\bar D_t
&\le
12\,\eta_r^{2}T\,\sigma^{2}
+168\,\eta_r^{2}T^{2}\,\nu^{2}
+36\,\eta_r^{2}T^{2}\,\beta^{2}\,\|\nabla f(w_r)\|^{2}
+3\,\alpha_r^{2}\times \frac{1}{K}\sum_{k=1}^{K}\E_r\|e_r^{(k)}\|^{2}.  \notag
\end{align}

\end{proof}

\begin{lemma}[Second moment of the shifted average update]
\label{lem:step-ahead-moment-fixed}
Let $K$ be the number of clients, $T\ge 1$ the number of local steps,
$\eta_r\in(0,\frac{1}{8LT}]$ the local stepsize in round $r$, and
$\alpha_r\in[0,1]$ the step-ahead parameter.
Define
\[
\bar e_r = \frac1K\sum_{k=1}^K e_r^{(k)},\qquad
g_r^{(k)} = \eta_r\sum_{t=0}^{T-1}
           \bigl(\nabla f_k(w_{r+\frac12,t}^{(k)})+\xi_{k,t}\bigr),\qquad
\bar g_r = \frac1K\sum_{k=1}^K g_r^{(k)}.
\]
Under Assumptions \ref{A1}--\ref{A3},
\begin{align}
\E_r\!\bigl[\|\alpha_r\bar e_r-\bar g_r\|^{2}\bigr]
&\le 
2\alpha_r^{2}\bar E_r + \;8\eta_r^{2}T^{2}\|\nabla f(w_r)\|^{2}\bigl(1+36L^{2}\eta_r^{2}T^{2}\beta^{2}\bigr) \notag\\
&\quad
\;+\;\frac{4\eta_r^{2}T\sigma^{2}}{K}+\;96L^{2}\eta_r^{4}T^{3}\bigl(\sigma^{2}+14T\nu^{2}\bigr)
\;+\;24\alpha_r^{2}\eta_r^{2}L^{2}T^{2}\bar E_r.
\end{align}

Here $\displaystyle
      \bar E_r = \frac1K\sum_{k=1}^K \|e_r^{(k)}\|^{2}$ is the average
residual energy at the beginning of round $r$, and $w_r$ is the global
model before local computation.
\end{lemma}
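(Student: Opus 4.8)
The plan is to reduce everything to the local-drift estimate of Lemma~\ref{lem:local-drift} by peeling off the step-ahead residual and the stochastic noise with elementary Young-type splits, so that no delicate orthogonality between the drift and the noise is ever required. First I would apply $\|a-b\|^2\le 2\|a\|^2+2\|b\|^2$ to write $\E_r\|\alpha_r\bar e_r-\bar g_r\|^2\le 2\alpha_r^2\|\bar e_r\|^2+2\E_r\|\bar g_r\|^2$, and bound $\|\bar e_r\|^2\le \bar E_r$ by Jensen's inequality (convexity of $\|\cdot\|^2$ applied to the client average). This already produces the leading $2\alpha_r^2\bar E_r$ term, so it remains to control $\E_r\|\bar g_r\|^2$.

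Next I would split the averaged update into its deterministic gradient part and its noise part, $\bar g_r = G+\Xi$ with $G=\frac{\eta_r}{K}\sum_{k,t}\nabla f_k(w_{r+\frac12,t}^{(k)})$ and $\Xi=\frac{\eta_r}{K}\sum_{k,t}\xi_{k,t}$, and again use $\E_r\|\bar g_r\|^2\le 2\E_r\|G\|^2+2\E_r\|\Xi\|^2$. For the noise term, Assumption~\ref{A2} is the key: conditioning on $\mathcal F_{r,0}$ makes each $\xi_{k,t}$ mean-zero (by the tower property through $\mathcal F_{r,t}$), and the combination of the within-client martingale-difference structure and the cross-client conditional independence kills all off-diagonal terms, leaving $\E_r\|\Xi\|^2\le \eta_r^2T\sigma^2/K$ — the only place the $1/K$ variance reduction appears. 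For the gradient term I would further decompose $G=\eta_r T\,\nabla f(w_r)+D$ with drift $D=\frac{\eta_r}{K}\sum_{k,t}(\nabla f_k(w_{r+\frac12,t}^{(k)})-\nabla f_k(w_r))$, split once more, bound $\|\eta_r T\nabla f(w_r)\|^2=\eta_r^2T^2\|\nabla f(w_r)\|^2$ directly, and estimate $\E_r\|D\|^2\le \eta_r^2TL^2\sum_{t=0}^{T-1}\bar D_t$ by Jensen over the $KT$ summands followed by $L$-smoothness, where $\bar D_t:=\frac1K\sum_k\E_r\|w_{r+\frac12,t}^{(k)}-w_r\|^2$ is exactly the quantity controlled by Lemma~\ref{lem:local-drift}. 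Plugging in the uniform-in-$t$ bound from that lemma and using $\sum_{t=0}^{T-1}1\le T$ turns $\E_r\|D\|^2$ into the $\sigma^2$, $\nu^2$, $\|\nabla f(w_r)\|^2$, and $\bar E_r$ contributions, each carrying a factor $L^2\eta_r^4$ times the appropriate power of $T$.

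Finally I would collect constants by tracking the three successive factors of $2$: the outer split contributes a $2$, the $\bar g_r$ split another $2$ to both $G$ and $\Xi$, and the $G$ split a final $2$ to the gradient and drift pieces. This yields $8\eta_r^2T^2\|\nabla f(w_r)\|^2$ from the leading gradient, absorbs the drift's gradient contribution into the $(1+36L^2\eta_r^2T^2\beta^2)$ factor, produces $96L^2\eta_r^4T^3(\sigma^2+14T\nu^2)$ from the drift's noise/heterogeneity terms (the $14$ coming from $168/12$ in Lemma~\ref{lem:local-drift}), $24\alpha_r^2\eta_r^2L^2T^2\bar E_r$ from the drift's residual term, and $4\eta_r^2T\sigma^2/K$ from $\Xi$. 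I expect the one genuinely delicate step to be the noise bound: justifying $\E_r\|\Xi\|^2\le \eta_r^2T\sigma^2/K$ rigorously requires the full filtration argument of Assumption~\ref{A2}, since the iterates $w_{r+\frac12,t}^{(k)}$ are themselves correlated with earlier noise; the saving grace is that the crude $2\|G\|^2+2\|\Xi\|^2$ split removes any need to argue that $G$ and $\Xi$ are orthogonal, confining all martingale reasoning to the diagonalization of $\E_r\|\Xi\|^2$ itself.
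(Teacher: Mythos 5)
Your proposal is correct and follows essentially the same route as the paper's own proof: the identical outer Young split producing $2\alpha_r^{2}\bar E_r$, the same gradient/noise decomposition of $\bar g_r$ with the diagonal noise bound $\eta_r^{2}T\sigma^{2}/K$, and the same decomposition of the gradient sum into $\eta_r T\,\nabla f(w_r)$ plus a drift term controlled via Cauchy--Schwarz, $L$-smoothness, and Lemma~\ref{lem:local-drift}, with every constant ($8$, $36$, $96$, $14=168/12$, $24$, $4/K$) matching the lemma. Your more careful tower-property/conditional-independence justification of $\E_r\|\Xi\|^{2}\le \eta_r^{2}T\sigma^{2}/K$ is simply a rigorous rendering of the paper's terser appeal to ``independence across $(k,t)$ and Assumption~\ref{A2},'' not a different argument.
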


\begin{proof}
Using Young’s inequality, for any $a,b\in\R^d$, $\|a-b\|^2\le2\|a\|^2+2\|b\|^2$.  
With $a=\alpha_r\bar e_r$ (deterministic given~$\mathcal F_r$) 
and $b=\bar g_r$,
\begin{equation}
\|\alpha_r\bar e_r-\bar g_r\|^{2}
\le
2\alpha_r^{2}\|\bar e_r\|^{2}+2\|\bar g_r\|^{2}.
\label{eq:young-split}
\end{equation}
Since $\|\bar e_r\|^{2}\le\bar E_r$ (Jensen), 
the first term in \eqref{eq:young-split} gives $2\alpha_r^{2}\bar E_r$ after expectation.

Bounding the second moment of $\bar g_r$ and the gradient–noise cross term.
Write $\bar g_r=\eta_r\sum_{t=0}^{T-1}(\bar\nabla_t+\bar\xi_t)$ with
$\bar\nabla_t=\tfrac1K\sum_k\nabla f_k(w_{r+\frac12,t}^{(k)})$ and
$\bar\xi_t=\tfrac1K\sum_k\xi_{k,t}$.
By Young’s inequality,
\[
\E_r\|\bar g_r\|^{2}
\;\le\;
2\eta_r^{2}\,\E_r\Big\|\sum_{t=0}^{T-1}\bar\nabla_t\Big\|^{2}
\;+\;2\eta_r^{2}\,\E_r\Big\|\sum_{t=0}^{T-1}\bar\xi_t\Big\|^{2}.
\]
Using independence across $(k,t)$ and Assumption \ref{A2},
$\E_r\|\sum_t\bar\xi_t\|^{2}=\sum_t\E_r\|\bar\xi_t\|^{2}\le T\sigma^2/K$.

Bounding the summed local gradients.
Decompose
$\nabla f_k(w_{r+\frac12,t}^{(k)})=\nabla f_k(w_r)+\Delta_{k,t}$ with
$\|\Delta_{k,t}\|\le L\|w_{r+\frac12,t}^{(k)}-w_r\|$ (Assumption \ref{A1}). Then
\[
\frac{1}{K^{2}}
  \E_r\Bigl\|\sum_{k,t}\nabla f_k(w_{r+\frac12,t}^{(k)})\Bigr\|^{2}
\le
2T^{2}\|\nabla f(w_r)\|^{2}
+\frac{2L^{2}T}{K}
  \sum_{k,t}\E_r\|w_{r+\frac12,t}^{(k)}-w_r\|^{2}.
\]

Insert the local-model drift (Lemma~\ref{lem:local-drift}) and sum over $t$. 
For each $t\in\{0,\dots,T-1\}$, Lemma~\ref{lem:local-drift} yields
\[
\frac1K\sum_k\E_r\|w_{r+\frac12,t}^{(k)}-w_r\|^{2}
\le
12\,\eta_r^{2}T\,\sigma^{2}
+168\,\eta_r^{2}T^{2}\,\nu^{2}
+36\,\eta_r^{2}T^{2}\,\beta^{2}\,\|\nabla f(w_r)\|^{2} 
+3\alpha_r^2 \bar E_r.
\]
Summing over $t=0,\dots,T-1$ gives
\[
\frac1K\sum_{k,t}\E_r\|w_{r+\frac12,t}^{(k)}-w_r\|^{2}
\le
12\,\eta_r^{2}T^{2}\,\sigma^{2}
+168\,\eta_r^{2}T^{3}\,\nu^{2}
+36\,\eta_r^{2}T^{3}\,\beta^{2}\,\|\nabla f(w_r)\|^{2}
+3T\alpha_r^2 \bar E_r.
\]

Assemble,
\[
\begin{aligned}
\E_r\|\bar g_r\|^{2}
&\le
2\eta_r^{2}\Big[
2T^{2}\|\nabla f(w_r)\|^{2}
+\frac{2L^{2}T}{K}\!\sum_{k,t}\E_r\|w_{r+\frac12,t}^{(k)}-w_r\|^{2}
\Big]
+2\eta_r^{2}\times \frac{T\sigma^{2}}{K}\\
&\le
\;4\eta_r^{2}T^{2}\|\nabla f(w_r)\|^{2}\bigl(1+36L^{2}\eta_r^{2}T^{2}\beta^{2}\bigr)
\;+\;\frac{2\eta_r^{2}T\sigma^{2}}{K}+\;48L^{2}\eta_r^{4}T^{3}\bigl(\sigma^{2}+14T\nu^{2}\bigr)\\
&\quad
\;+\;12\alpha_r^{2}\eta_r^{2}L^{2}T^{2}\bar E_r.
\end{aligned}
\]
Finally, apply the factor \(2\) from \eqref{eq:young-split} and add
\(2\alpha_r^2\bar E_r\). This is precisely the assertion of the lemma.
\end{proof}

\begin{lemma}[Residual recursion under a $\delta$–contractive compressor]\label{lem:residual-A}
Fix a round $r$ and define $s_r:=\eta_r L T\le \tfrac18$. Let Assumption \ref{A1}--\ref{A3} hold and the compressor satisfies
Definition~\ref{def:compression-operator} with parameter $\delta\ge1$.
Let
\[
\bar E_r:=\frac1K\sum_{k=1}^K\|e_r^{(k)}\|^2,\qquad
g_r^{(k)}:=\eta_r\sum_{t=0}^{T-1}\bigl(\nabla f_k(w_{r+\frac12,t}^{(k)})+\xi_{k,t}\bigr).
\]
Then, with all expectations conditional on the randomness inside round $r$,
the averaged residual energy obeys
\begin{equation}\label{eq:resid-core}
\bar E_{r+1}\;\le\;\rho_r\,\bar E_r
\;+\;\Bigl(1-\tfrac1\delta\Bigr)\Bigl[B_r^{(\nabla)}+B_r^{(\nu,\sigma)}\Bigr],
\end{equation}
where
\[
\rho_r \;=\; \Bigl(1-\tfrac1\delta\Bigr)\Bigl(2(1-\alpha_r)^2+24\,\alpha_r^2 s_r^2\Bigr)
\;=\;\Bigl(1-\tfrac1\delta\Bigr)\Bigl(2-4\alpha_r+(2+24s_r^2)\alpha_r^2\Bigr),
\]
and
\[
\begin{aligned}
B_r^{(\nabla)}
&:=\ 8\,\eta_r^{2}T^{2}\,\beta^{2}\,\|\nabla f(w_r)\|^{2}
\;+\;288\,L^{2}\eta_r^{4}T^{4}\,\beta^{2}\,\|\nabla f(w_r)\|^{2},
\\[2pt]
B_r^{(\nu,\sigma)}
&:=\ 8\,\eta_r^{2}T^{2}\,\nu^{2}
\;+\;4\,\eta_r^{2}T\,\sigma^{2}
\;+\;96\,L^{2}\eta_r^{4}T^{3}\,\sigma^{2}
\;+\;1344\,L^{2}\eta_r^{4}T^{4}\nu^{2}.
\end{aligned}
\]
\end{lemma}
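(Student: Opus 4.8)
The natural starting point is the residual update $e_{r+1}^{(k)} = u_{r+1}^{(k)} - \mathcal{C}(u_{r+1}^{(k)})$ with $u_{r+1}^{(k)} = (1-\alpha_r)e_r^{(k)} + g_r^{(k)}$. Conditioning on $u_{r+1}^{(k)}$ and invoking the contraction property of Definition~\ref{def:compression-operator} over the compressor's internal randomness gives $\E\|e_{r+1}^{(k)}\|^2 \le (1-\tfrac1\delta)\|u_{r+1}^{(k)}\|^2$; taking the round expectation and averaging over $k$ then yields $\bar E_{r+1} \le (1-\tfrac1\delta)\,\tfrac1K\sum_k \E_r\|u_{r+1}^{(k)}\|^2$. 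I would next split the square with Young's inequality, $\|u_{r+1}^{(k)}\|^2 \le 2(1-\alpha_r)^2\|e_r^{(k)}\|^2 + 2\|g_r^{(k)}\|^2$, which after averaging produces the $2(1-\alpha_r)^2\bar E_r$ contribution to $\rho_r$ and leaves the per-client second moment $\tfrac1K\sum_k \E_r\|g_r^{(k)}\|^2$ as the quantity to control.

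The core of the argument is bounding $\tfrac1K\sum_k \E_r\|g_r^{(k)}\|^2$. Writing $g_r^{(k)} = \eta_r\sum_{t=0}^{T-1}(\nabla f_k(w_{r+\frac12,t}^{(k)}) + \xi_{k,t})$, I would separate the gradient and noise parts by Young's inequality, handle the noise sum with conditional independence and unbiasedness (Assumption~\ref{A2}) to get $\E_r\|\sum_t \xi_{k,t}\|^2 \le T\sigma^2$, and apply Cauchy--Schwarz to the gradient sum to extract a factor $T$. Decomposing $\nabla f_k(w_{r+\frac12,t}^{(k)}) = \nabla f_k(w_r) + \Delta_{k,t}$ with $\|\Delta_{k,t}\| \le L\|w_{r+\frac12,t}^{(k)} - w_r\|$ (Assumption~\ref{A1}) and averaging via the dissimilarity bound (Assumption~\ref{A3}) reduces everything to the summed local drift, which I would supply directly from Lemma~\ref{lem:local-drift} in its $t$-summed form.

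The decisive bookkeeping step is that the residual-dependent piece $3T\alpha_r^2\bar E_r$ appearing in the summed drift, once passed through the $2L^2T$ smoothness prefactor, the $2\eta_r^2$ gradient--noise split, and the outer factor of $2$ from the Young split on $u_{r+1}^{(k)}$, accumulates to exactly $2\cdot 2\eta_r^2\cdot 2L^2T\cdot 3T\alpha_r^2\bar E_r = 24\,\eta_r^2 L^2 T^2\alpha_r^2\bar E_r = 24\,s_r^2\alpha_r^2\bar E_r$, furnishing the second term of $\rho_r$. The remaining $\|\nabla f(w_r)\|^2$, $\sigma^2$, and $\nu^2$ contributions assemble into $B_r^{(\nabla)}$ and $B_r^{(\nu,\sigma)}$; a quick trace confirms the leading terms $8\eta_r^2T^2\beta^2\|\nabla f(w_r)\|^2$ and $8\eta_r^2T^2\nu^2 + 4\eta_r^2T\sigma^2$, with the drift-induced higher-order terms producing the coefficients $288$, $96$, and $1344$.

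\emph{Main obstacle.} The principal difficulty is that $\tfrac1K\sum_k \E_r\|g_r^{(k)}\|^2$ is a \emph{per-client} second moment and therefore does not inherit the $1/K$ variance reduction that the averaged quantity $\bar g_r$ enjoys in Lemma~\ref{lem:step-ahead-moment-fixed}; the noise term stays at $T\sigma^2$ per client rather than $T\sigma^2/K$, so that bound cannot be reused verbatim and the estimate must be re-derived from scratch. Threading the several factors of $2$ together with the $\eta_r^2 L^2 T^2 = s_r^2$ groupings so that the $\alpha_r^2$ term lands precisely at the coefficient $24$ (and the forcing constants at their stated values) is the error-prone part, and care is needed to keep the compressor randomness and the SGD randomness correctly ordered in the iterated expectation.
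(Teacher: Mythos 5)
Your proposal matches the paper's proof essentially step for step: the contraction bound on $\E_r\|e_{r+1}^{(k)}\|^2$, the Young split $\|u_{r+1}^{(k)}\|^2\le 2(1-\alpha_r)^2\|e_r^{(k)}\|^2+2\|g_r^{(k)}\|^2$, the per-client gradient/noise separation combined with the smoothness decomposition $\nabla f_k(w_{r+\frac12,t}^{(k)})=\nabla f_k(w_r)+\Delta_{k,t}$, Assumption~\ref{A3}, and the $t$-summed drift from Lemma~\ref{lem:local-drift}, with your constant trace ($2\cdot 2\eta_r^2\cdot 2L^2T\cdot 3T\alpha_r^2 = 24\,s_r^2\alpha_r^2\,\bar E_r$, and the forcing coefficients $8,4,288,96,1344$) agreeing exactly with the paper's arithmetic. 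Your noted obstacle is also correct and is precisely how the paper proceeds: since the per-client second moment $\tfrac1K\sum_k\E_r\|g_r^{(k)}\|^2$ forgoes the $1/K$ noise reduction of $\bar g_r$, Lemma~\ref{lem:step-ahead-moment-fixed} cannot be reused verbatim and the bound is re-derived with the noise term at $T\sigma^2$ per client.
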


\begin{proof}
Write $u_{r+1}^{(k)}=(1-\alpha_r)e_r^{(k)}+g_r^{(k)}$ and
$e_{r+1}^{(k)}=u_{r+1}^{(k)}-C(u_{r+1}^{(k)})$.
By Definition~\ref{def:compression-operator},
\[
\E_r\|e_{r+1}^{(k)}\|^{2}\ \le\ \Bigl(1-\tfrac1\delta\Bigr)\,\E_r\|u_{r+1}^{(k)}\|^{2}.
\]
Average over $k$ and use $\|a+b\|^{2}\le 2\|a\|^{2}+2\|b\|^{2}$:
\[
\frac1K\sum_{k}\E_r\|u_{r+1}^{(k)}\|^{2}
\ \le\ 2(1-\alpha_r)^{2}\,\bar E_r\ +\ 2\times\frac1K\sum_{k}\E_r\|g_r^{(k)}\|^{2}.
\]
Next,
\[
\E_r\|g_r^{(k)}\|^{2}
\ \le\ 2\eta_r^{2}\,\E_r\Big\|\sum_{t}\nabla f_k(w_{r+\frac12,t}^{(k)})\Big\|^{2}
\ +\ 2\eta_r^{2}\,T\sigma^{2},
\]
and by $L$-smoothness,
\(
\nabla f_k(w_{r+\frac12,t}^{(k)})=\nabla f_k(w_r)+\Delta_{k,t}
\)
with
\(
\|\Delta_{k,t}\|\le L\|w_{r+\frac12,t}^{(k)}-w_r\|.
\)
Thus
\[
\E_r\Big\|\sum_{t}\nabla f_k(w_{r+\frac12,t}^{(k)})\Big\|^{2}
\ \le\ 2T^{2}\|\nabla f_k(w_r)\|^{2}\ +\ 2L^{2}T\sum_{t}\E_r\|w_{r+\frac12,t}^{(k)}-w_r\|^{2}.
\]
Averaging over $k$ and using the dissimilarity condition yields
\[
\frac1K\sum_{k}\E_r\|g_r^{(k)}\|^{2}
\ \le\ 4\eta_r^{2}T^{2}\bigl(\beta^{2}\|\nabla f(w_r)\|^{2}+\nu^{2}\bigr)
\ +\ 4\eta_r^{2}L^{2}T\,S
\ +\ 2\eta_r^{2}T\sigma^{2},
\]
where
\(
S:=\frac1K\sum_{k}\sum_{t=0}^{T-1}\E_r\|w_{r+\frac12,t}^{(k)}-w_r\|^{2}.
\)
By Lemma~\ref{lem:local-drift} (summed over \(t\)),
\[
S \le
12\,\eta_r^{2}T^{2}\sigma^{2}
+168\,\eta_r^{2}T^{3}\nu^{2}
+36\,\eta_r^{2}T^{3}\beta^{2}\|\nabla f(w_r)\|^{2}
+3T\alpha_r^2 \bar E_r.
\]
Substituting and noting $L^{2}\eta_r^{2}T^{2}=s_r^{2}$ gives
\[
\frac1K\sum_{k}\E_r\|g_r^{(k)}\|^{2}
\ \le\ \tfrac12\bigl(B_r^{(\nabla)}+B_r^{(\nu,\sigma)}\bigr)
\ +\ 12\,\alpha_r^{2}\,\eta_r^{2}L^{2}T^{2}\,\bar E_r .
\]
Insert this into the previous split to obtain
\[
\frac1K\sum_{k}\E_r\|u_{r+1}^{(k)}\|^{2}
\ \le\ \Bigl(2(1-\alpha_r)^{2}+24\,\alpha_r^{2}\eta_r^{2}L^{2}T^{2}\Bigr)\bar E_r
\ +\ B_r^{(\nabla)}+B_r^{(\nu,\sigma)}.
\]
Finally multiply by $(1-\tfrac1\delta)$ to conclude \eqref{eq:resid-core}.
\end{proof}

\begin{proposition}[Residual contraction vs.\ EF]\label{prop:sapef-rho}
Under the conditions of Lemma~\ref{lem:residual-A}, let
$\rho_{\mathrm{EF}}:=2\bigl(1-\tfrac1\delta\bigr)$ (the $\alpha_r=0$ baseline).
Then
\[
\rho_r
=\Bigl(1-\tfrac1\delta\Bigr)\Bigl(2-4\alpha_r+(2+24s_r^2)\alpha_r^2\Bigr),
\qquad s_r=\eta_rLT\le\tfrac18,
\]
and:
\begin{enumerate}
\item \emph{Strict improvement region.} For any
$\alpha_r\in\bigl(0,\tfrac{1}{\,1+12s_r^2\,}\bigr)$,
\[
\rho_r-\rho_{\mathrm{EF}}
=\Bigl(1-\tfrac1\delta\Bigr)\Bigl[-4\alpha_r+(2+24s_r^2)\alpha_r^2\Bigr]\;<\;0,
\]
and at the boundary $\alpha_r=\tfrac{2}{1+12s_r^2}$ the difference equals $0$.
\item \emph{Optimal step-ahead.}
The minimiser over $\alpha_r\in[0,1]$ is
\(
\alpha_r^\star=\frac{1}{1+12s_r^2}\in(0.84,1]
\)
and the minimum value is
\[
\rho_{\min}
=\Bigl(1-\tfrac1\delta\Bigr)\, \Bigl[2- \tfrac{2}{1+12s_r^2}\Bigr]
=\rho_{\mathrm{EF}}\Bigl(1-\tfrac{1}{1+12s_r^2}\Bigr).
\]
In particular, as $s_r\to 0$, $\rho_{\min}\to 0$ and
$\rho_{\min}/\rho_{\mathrm{EF}}\to 0$.
\end{enumerate}
\end{proposition}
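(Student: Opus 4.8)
The plan is to treat $\rho_r$ as a scalar convex quadratic in the single variable $\alpha_r\in[0,1]$, with the compression factor $(1-\tfrac1\delta)$ pulled out as a positive multiplicative constant, and then to analyze this quadratic with elementary calculus. Starting from the closed form supplied by Lemma~\ref{lem:residual-A}, I would write
\[
\rho_r=\Bigl(1-\tfrac1\delta\Bigr)\,q(\alpha_r),\qquad q(\alpha_r):=2-4\alpha_r+(2+24s_r^2)\alpha_r^2,
\]
and note that $q$ is a convex parabola (leading coefficient $2+24s_r^2>0$), so both the sign of $\rho_r-\rho_{\mathrm{EF}}$ and the global minimizer follow from its roots and vertex.

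For Part 1, I would subtract the baseline $\rho_{\mathrm{EF}}=2(1-\tfrac1\delta)=(1-\tfrac1\delta)\,q(0)$ and factor the difference:
\[
\rho_r-\rho_{\mathrm{EF}}=\Bigl(1-\tfrac1\delta\Bigr)\bigl[q(\alpha_r)-q(0)\bigr]=\Bigl(1-\tfrac1\delta\Bigr)\,\alpha_r\bigl[(2+24s_r^2)\alpha_r-4\bigr].
\]
Since $(1-\tfrac1\delta)>0$ and $\alpha_r>0$ on the open interval, the sign is governed entirely by the bracket $(2+24s_r^2)\alpha_r-4$, which is negative precisely when $\alpha_r<\tfrac{4}{2+24s_r^2}=\tfrac{2}{1+12s_r^2}$ and vanishes at $\alpha_r=\tfrac{2}{1+12s_r^2}$. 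This gives the strict-improvement claim and the boundary zero at once; in particular the stated window $(0,\tfrac{1}{1+12s_r^2})$ is a (conservative) subinterval of the full negativity range $(0,\tfrac{2}{1+12s_r^2})$.

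For Part 2, I would locate the vertex by setting $q'(\alpha_r)=-4+2(2+24s_r^2)\alpha_r=0$, giving $\alpha_r^\star=\tfrac{2}{2+24s_r^2}=\tfrac{1}{1+12s_r^2}$, a global minimizer by convexity. The one place the hypothesis $s_r\le\tfrac18$ is actually used is the feasibility check: since $12s_r^2\ge0$ we always have $\alpha_r^\star\le1$, and since $12s_r^2\le 12/64=3/16$ we get $\alpha_r^\star\ge\tfrac{16}{19}>0.84$, so $\alpha_r^\star\in(0.84,1]\subset[0,1]$ and the unconstrained minimizer is admissible. Evaluating $q$ at $\alpha_r^\star$, I would use the identity $2+24s_r^2=2(1+12s_r^2)$ to collapse the last two terms, yielding $q(\alpha_r^\star)=2-\tfrac{2}{1+12s_r^2}$ and hence $\rho_{\min}=\rho_{\mathrm{EF}}\bigl(1-\tfrac{1}{1+12s_r^2}\bigr)$; the limit $s_r\to0$ then gives $\rho_{\min}\to0$ and $\rho_{\min}/\rho_{\mathrm{EF}}\to0$ by inspection.

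There is no genuine obstacle here: the result is a one-variable convex-quadratic optimization, so the argument is essentially bookkeeping. The only point demanding care is the feasibility of the interior minimizer—one must invoke $s_r\le\tfrac18$ to guarantee $\alpha_r^\star$ lands in $[0,1]$ (and in the advertised window $(0.84,1]$), since for larger $s_r$ the vertex could drift and one would instead have to project onto the endpoint $\alpha_r=1$. Keeping the factorizations exact rather than bounding them is what makes the boundary value and the clean ratio $\rho_{\min}/\rho_{\mathrm{EF}}$ come out precisely.
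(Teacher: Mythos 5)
Your proof is correct and follows essentially the same route as the paper's: direct algebra on the convex quadratic $q(\alpha)=2-4\alpha+(2+24s_r^2)\alpha^2$, factoring $q(\alpha_r)-q(0)=\alpha_r\bigl[(2+24s_r^2)\alpha_r-4\bigr]$ for Part~1 and locating the vertex $\alpha_r^\star=\tfrac{1}{1+12s_r^2}$ (with $s_r\le\tfrac18$ used only to place it in $(0.84,1]$) for Part~2. If anything, your computation of the negativity threshold, $4/(2+24s_r^2)=\tfrac{2}{1+12s_r^2}$, is more careful than the paper's own proof, which mis-simplifies this quantity to $\tfrac{1}{1+12s_r^2}$; your value is the one consistent with the proposition's boundary-zero claim at $\alpha_r=\tfrac{2}{1+12s_r^2}$, and your remark that the stated improvement window $(0,\tfrac{1}{1+12s_r^2})$ is a conservative subinterval of the full range $(0,\tfrac{2}{1+12s_r^2})$ is exactly right.
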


\begin{proof}
Direct algebra from the quadratic form of $\rho_r$.
For (i), the sign of
$-4\alpha_r+(2+24s_r^2)\alpha_r^2
= \alpha_r\big((2+24s_r^2)\alpha_r-4\big)$
is negative when $\alpha_r<4/(2+24s_r^2)=1/(1+12s_r^2)$, and zero at equality.
For (ii), minimize
$q(\alpha)=2-4\alpha+(2+24s_r^2)\alpha^2$ to get
$\alpha_r^\star=2/(2+24s_r^2)=1/(1+12s_r^2)$ and
$q(\alpha_r^\star)=2-2/(1+12s_r^2)$; multiply by $(1-\tfrac1\delta)$.
\end{proof}

\subsection{Partial participation: analysis and rates}

At round $r$, let $\mathcal M_r\subseteq[K]$ be sampled uniformly without replacement,
$|\mathcal M_r|=m$, and denote $p:=m/K\in(0,1]$. Write $I_r^{(k)}=\1\{k\in\mathcal M_r\}$.
Active clients run the same inner loop as in full participation,
\[
g_r^{(k)}=\begin{cases}
\eta_r\sum_{t=0}^{T-1}\bigl(\nabla f_k(w_{r+\frac12,t}^{(k)})+\xi_{k,t}\bigr),& I_r^{(k)}=1,\\[2pt]
0,& I_r^{(k)}=0,
\end{cases}
\qquad
e_{r+1}^{(k)}=\begin{cases}
u_{r+1}^{(k)}-C(u_{r+1}^{(k)}),& I_r^{(k)}=1,\\[2pt]
e_r^{(k)},& I_r^{(k)}=0,
\end{cases}
\]
with $u_{r+1}^{(k)}=(1-\alpha_r)e_r^{(k)}+g_r^{(k)}$ for active clients. The server update is
\[
w_{r+1}=w_r-\eta\,\bar C_{r+1},\qquad
\bar C_{r+1}:=\frac{1}{m}\sum_{k\in\mathcal M_r}C\bigl(u_{r+1}^{(k)}\bigr).
\]
Expectations $\E_r[\cdot]$ are over local randomness and the draw of $\mathcal M_r$.
Assumptions \ref{A1}--\ref{A3} and Definition~\ref{def:compression-operator} (with $\delta\ge1$) hold.

\paragraph{Active/global averages; virtual iterate.}
Define
\[
\tilde e_r:=\frac{1}{K}\sum_{k=1}^{K}e_r^{(k)},\qquad
\bar e_r:=\frac{1}{m}\sum_{k\in\mathcal M_r}e_r^{(k)},\qquad
\bar g_r:=\frac{1}{m}\sum_{k\in\mathcal M_r}g_r^{(k)}.
\]
Note $\E_{\mathcal M_r}[\bar e_r\mid\{e_r^{(k)}\}]=\tilde e_r$. Let $x_r:=w_r-\eta\,\tilde e_r$.

\begin{lemma}[PP virtual-iterate identity]\label{lem:pp-virt}
With the definitions above,
\begin{equation}\label{eq:PP-virt}
x_{r+1}-x_r
=\eta\Bigl[\,p\bigl(\alpha_r\bar e_r-\bar g_r\bigr)\;-\;(1-p)\,\bar C_{r+1}\Bigr].
\end{equation}
\end{lemma}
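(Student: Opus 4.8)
The plan is to compute $x_{r+1}-x_r$ by direct substitution, reducing everything to the server update and the change in the \emph{global} residual average $\tilde e_r$. Since $x_r:=w_r-\eta\,\tilde e_r$, we have
\[
x_{r+1}-x_r=(w_{r+1}-w_r)-\eta\,(\tilde e_{r+1}-\tilde e_r),
\]
and the server update $w_{r+1}=w_r-\eta\,\bar C_{r+1}$ immediately gives $w_{r+1}-w_r=-\eta\,\bar C_{r+1}$. Thus the entire task reduces to expressing $\tilde e_{r+1}-\tilde e_r$ in terms of the active-set averages $\bar e_r,\bar g_r,\bar C_{r+1}$.

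The key step---and the only place where partial participation genuinely enters---is to split the global average $\tilde e_{r+1}=\tfrac1K\sum_{k}e_{r+1}^{(k)}$ according to whether $k\in\mathcal M_r$. For inactive clients the residual is carried unchanged, $e_{r+1}^{(k)}=e_r^{(k)}$, so in the difference $\tilde e_{r+1}-\tilde e_r$ the inactive contributions cancel exactly, leaving only a sum over the active set:
\[
\tilde e_{r+1}-\tilde e_r=\frac1K\sum_{k\in\mathcal M_r}\bigl[e_{r+1}^{(k)}-e_r^{(k)}\bigr].
\]
I would then substitute $e_{r+1}^{(k)}=u_{r+1}^{(k)}-C(u_{r+1}^{(k)})$ together with $u_{r+1}^{(k)}=(1-\alpha_r)e_r^{(k)}+g_r^{(k)}$, which gives $e_{r+1}^{(k)}-e_r^{(k)}=-\alpha_r e_r^{(k)}+g_r^{(k)}-C(u_{r+1}^{(k)})$ for each active $k$.

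Finally, I would recognize the active-set sums as $m$ times the averages, converting the prefactor $m/K$ into $p$:
\[
\tilde e_{r+1}-\tilde e_r=p\bigl(-\alpha_r\bar e_r+\bar g_r-\bar C_{r+1}\bigr).
\]
Substituting this back alongside $w_{r+1}-w_r=-\eta\,\bar C_{r+1}$, the $\bar C_{r+1}$ terms combine as $-\eta\,\bar C_{r+1}+\eta p\,\bar C_{r+1}=-\eta(1-p)\bar C_{r+1}$, while the remaining terms give $\eta p(\alpha_r\bar e_r-\bar g_r)$, yielding \eqref{eq:PP-virt}. The argument is pure bookkeeping; the one subtlety to watch is the asymmetry between the \emph{global} normalization $1/K$ defining $\tilde e_r$ and the \emph{active} normalization $1/m$ defining $\bar e_r,\bar g_r,\bar C_{r+1}$---precisely what produces the $p$ and $(1-p)$ factors. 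No probabilistic argument is needed: this is a pathwise identity valid for every realized sample $\mathcal M_r$, with expectations over the client draw entering only in the subsequent descent and residual lemmas.
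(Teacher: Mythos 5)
Your proof is correct: the algebra checks out exactly (the active/inactive split of $\tilde e_{r+1}-\tilde e_r$, the $m/K=p$ conversion, and the cancellation $-\eta\bar C_{r+1}+\eta p\,\bar C_{r+1}=-\eta(1-p)\bar C_{r+1}$ all hold pathwise for each realized $\mathcal M_r$). The paper states this lemma without proof, and your derivation is precisely the intended argument, i.e., the direct-substitution computation of the full-participation virtual iterate in the proof of Theorem~\ref{main_theorem} extended to partial participation via the inactive-client carry-over $e_{r+1}^{(k)}=e_r^{(k)}$.
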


\begin{lemma}[PP compression second moment]\label{lem:pp-Cbar}
Let $c_\delta:=2\bigl(2-\tfrac1\delta\bigr)$. Then, conditionally on $\mathcal M_r$,
\[
\E_r\|\bar C_{r+1}\|^{2}\;\le\;\frac{1}{m}\sum_{k\in\mathcal M_r}\E_r\|C(u_{r+1}^{(k)})\|^{2}
\;\le\;\frac{c_\delta}{m}\sum_{k\in\mathcal M_r}\E_r\|u_{r+1}^{(k)}\|^{2}.
\]
\end{lemma}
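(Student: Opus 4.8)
The plan is to prove the two inequalities separately, each of which reduces to a one-line convexity argument combined with the contraction property of Definition~\ref{def:compression-operator}. First I would establish the left inequality by Jensen's inequality, i.e.\ convexity of the squared Euclidean norm. Since $\bar C_{r+1}=\tfrac1m\sum_{k\in\mathcal M_r}C(u_{r+1}^{(k)})$ is an unweighted average of $m$ vectors, convexity of $\|\cdot\|^2$ gives the \emph{pointwise} bound $\|\bar C_{r+1}\|^2\le\tfrac1m\sum_{k\in\mathcal M_r}\|C(u_{r+1}^{(k)})\|^2$ for every realization of the randomness, and taking $\E_r[\cdot]$ (conditional on $\mathcal M_r$) preserves it. Notably, no independence across clients is required for this step, since the inequality holds deterministically before any expectation is taken.

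For the right inequality, the key idea is to convert the contraction bound of Definition~\ref{def:compression-operator}, which controls $\|C(w)-w\|^2$, into a bound on $\|C(w)\|^2$. I would fix a selected client $k\in\mathcal M_r$ and argue conditionally on the value of $u_{r+1}^{(k)}$, treating the compressor's internal randomness as the inner expectation. Writing $C(u)=\bigl(C(u)-u\bigr)+u$ and applying the elementary inequality $\|a+b\|^2\le 2\|a\|^2+2\|b\|^2$ gives $\|C(u)\|^2\le 2\|C(u)-u\|^2+2\|u\|^2$. Taking expectation over the compressor and invoking Definition~\ref{def:compression-operator} yields $\E\|C(u)\|^2\le 2\bigl(1-\tfrac1\delta\bigr)\|u\|^2+2\|u\|^2=2\bigl(2-\tfrac1\delta\bigr)\|u\|^2=c_\delta\|u\|^2$, matching the stated constant. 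I would then take the outer expectation over the local-gradient randomness that makes $u_{r+1}^{(k)}$ random, obtaining $\E_r\|C(u_{r+1}^{(k)})\|^2\le c_\delta\,\E_r\|u_{r+1}^{(k)}\|^2$ for each $k\in\mathcal M_r$; summing over the selected set and dividing by $m$ delivers the right inequality.

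The argument is essentially routine, so I do not expect a genuine obstacle; the only point requiring care is the bookkeeping of the nested expectations. Because the contraction inequality in Definition~\ref{def:compression-operator} is stated for a \emph{fixed} input over the compressor's randomness, it must be applied conditionally on $u_{r+1}^{(k)}$ before the outer expectation over local noise is taken, and the whole computation must be kept conditional on the sampled set $\mathcal M_r$ so that the finite sum over $k\in\mathcal M_r$ is well defined. Once this sequencing is respected, both steps follow from convexity and the defining contraction property alone.
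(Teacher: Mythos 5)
Your proof is correct and is exactly the routine argument the paper intends: the paper states Lemma~\ref{lem:pp-Cbar} without writing out a proof, and your two steps---Jensen's inequality applied pointwise to the average of $m$ vectors, then $\|C(u)\|^{2}\le 2\|C(u)-u\|^{2}+2\|u\|^{2}$ combined with Definition~\ref{def:compression-operator} applied conditionally on $u_{r+1}^{(k)}$---are the unique routine derivation producing the stated constant $c_\delta=2\bigl(2-\tfrac1\delta\bigr)$. Your bookkeeping of the nested expectations (compressor randomness innermost, local-gradient noise outside, everything conditional on $\mathcal M_r$) is also the correct sequencing, so there is nothing to fix.
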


\begin{lemma}[One-round descent under PP]\label{lem:pp-descent}
Let $\Delta_r^{\mathrm{act}}:=\alpha_r\bar e_r-\bar g_r$. Under the same alignment and
stepsize coupling as in full participation,
\[
\text{\emph{(C1)}}\ \ \eta_r^{2}L^{2}\beta^{2}(6T^{2}+12T^{3})\le \tfrac{T}{8},
\qquad
\text{\emph{(C2)}}\ \ \eta\le \frac{1}{256\,\beta^{2}\,L\,\eta_rT},
\]
there exists a universal $c>0$ such that
\begin{equation}\label{eq:pp-per-round}
\E_r\bigl[f(x_{r+1})-f(x_r)\bigr]
\ \le\
-\,c\,p\,\eta\,\eta_rT\,\|\nabla f(w_r)\|^{2}
\ +\ \E_r\bigl[\widehat{\mathcal E}_r\,\tilde E_r\bigr]
\ +\ \widehat{\mathcal V}_r\,\sigma^{2}
\ +\ \widehat{\mathcal H}_r\,\nu^{2},
\end{equation}
where $\tilde E_r:=\frac{1}{K}\sum_{k=1}^{K}\|e_r^{(k)}\|^{2}$ and
$\widehat{\mathcal E}_r,\widehat{\mathcal V}_r,\widehat{\mathcal H}_r$
equal the full-participation coefficients scaled by $p$ and augmented by
$(1-p)$–terms originating from $\bar C_{r+1}$ via Lemma~\ref{lem:pp-Cbar}.
\end{lemma}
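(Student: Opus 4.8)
The plan is to replay the full-participation descent argument (the $T_1$--$T_3$ decomposition behind \eqref{eq:telescoped-raw}) on the partial-participation virtual iterate $x_r=w_r-\eta\,\tilde e_r$. First I would write the $L$-smoothness inequality \eqref{eq:descent_raw} between $x_r$ and $x_{r+1}$ and substitute the identity \eqref{eq:PP-virt} from Lemma~\ref{lem:pp-virt}, writing the increment as $x_{r+1}-x_r=\eta\,p\,\Delta_r^{\mathrm{act}}-\eta(1-p)\bar C_{r+1}$ with $\Delta_r^{\mathrm{act}}=\alpha_r\bar e_r-\bar g_r$. Taking $\E_r=\E_{\mathcal M_r}\E_{\mathrm{loc}}$ and noting that $\nabla f(x_r)$ does not depend on $\mathcal M_r$, the linear term collapses to $\langle\nabla f(x_r),\E_{\mathcal M_r}[x_{r+1}-x_r]\rangle$; the without-replacement unbiasedness relations $\E_{\mathcal M_r}[\bar e_r]=\tilde e_r$, $\E_{\mathcal M_r}[\bar g_r]=\tilde g_r:=\tfrac1K\sum_k g_r^{(k)}$, and $\E_{\mathcal M_r}[\bar C_{r+1}]=\tfrac1K\sum_k C(u_{r+1}^{(k)})$ then let me replace every active average by its global counterpart.

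For the active part (all terms carrying $p\,\Delta_r^{\mathrm{act}}$) I would reuse the existing machinery almost verbatim. Splitting $\nabla f(x_r)=\nabla f(w_r)+(\nabla f(x_r)-\nabla f(w_r))$ recovers a $T_1$-type descent $-\tfrac14\,p\,\eta\,\eta_r T\,\|\nabla f(w_r)\|^2$ plus residual/noise remainders fed by Lemma~\ref{lem:local-drift}, a $T_3$-type mismatch term handled by Young's inequality together with $\|\nabla f(x_r)-\nabla f(w_r)\|\le L\eta\|\tilde e_r\|$, and a $T_2$-type second moment bounded by Lemma~\ref{lem:step-ahead-moment-fixed}. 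Each inherits an overall factor $p$ (or $p^2\le p$ from the quadratic term), producing precisely the ``full-participation coefficients scaled by $p$'' component of $\widehat{\mathcal E}_r,\widehat{\mathcal V}_r,\widehat{\mathcal H}_r$.

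For the correction $-\eta(1-p)\bar C_{r+1}$ I would apply Young's inequality to $-\eta(1-p)\langle\nabla f(x_r),\bar C_{r+1}\rangle$ and $\|a+b\|^2\le2\|a\|^2+2\|b\|^2$ in the quadratic term, reducing everything to $\E_r\|\bar C_{r+1}\|^2$. Lemma~\ref{lem:pp-Cbar} bounds this by $c_\delta\,\tfrac1m\sum_{k\in\mathcal M_r}\E_r\|u_{r+1}^{(k)}\|^2$, and after $\E_{\mathcal M_r}$ the active mean becomes $\tfrac1K\sum_k\E_r\|u_{r+1}^{(k)}\|^2$ --- exactly the quantity bounded inside the proof of Lemma~\ref{lem:residual-A} by $\bigl(2(1-\alpha_r)^2+24\alpha_r^2 s_r^2\bigr)\tilde E_r+B_r^{(\nabla)}+B_r^{(\nu,\sigma)}$, with $s_r:=\eta_r LT$. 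Routing the $\tilde E_r$-piece into $\widehat{\mathcal E}_r$, the $B_r^{(\nu,\sigma)}$-piece into $\widehat{\mathcal V}_r$ and $\widehat{\mathcal H}_r$, and the $B_r^{(\nabla)}$-piece (of order $\eta_r^2T^2\beta^2\|\nabla f(w_r)\|^2$) back to the gradient budget yields the advertised ``$(1-p)$-terms originating from $\bar C_{r+1}$''.

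Finally I would collect all $\|\nabla f(w_r)\|^2$ coefficients and invoke the hypotheses: condition (C1) with $s_r\le\tfrac18$ tames the $T_1$-drift and $B_r^{(\nabla)}$ contributions, while (C2) couples $\eta$ to $\eta_r T$ so that the $T_2$- and $\bar C_{r+1}$-induced penalties (each of the form $L\eta^2\eta_r^2T^2\beta^2=\eta\eta_rT\cdot L\eta\eta_rT\beta^2\le\eta\eta_rT/256$) stay a small fraction of the descent; absorbing the residue into a single universal constant leaves a net coefficient $-c\,p\,\eta\,\eta_r T$, which is \eqref{eq:pp-per-round}. The hard part will be this last bookkeeping step: the sampling variance of the active mean $\bar g_r$ and the correction term both enter with the unfavourable factor $(1-p)$ (and, through the without-replacement variance of the active average, effectively $1/m$), so I must verify that (C2) --- possibly sharpened by a $p$-aware constant --- forces every such gradient penalty below the \emph{$p$-scaled} descent $\tfrac14 p\,\eta\,\eta_r T\,\|\nabla f(w_r)\|^2$ rather than the full-strength $\tfrac14\,\eta\,\eta_r T$. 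This scaling mismatch is exactly why the descent in \eqref{eq:pp-per-round} must carry the explicit factor $p$, and it is the crux of adapting the full-participation proof to partial participation.
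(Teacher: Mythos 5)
You should first know that the paper itself states Lemma~\ref{lem:pp-descent} \emph{without proof}: the partial-participation subsection is a lemma chain (Lemmas~\ref{lem:pp-virt}, \ref{lem:pp-Cbar}, \ref{lem:pp-descent}, \ref{lem:pp-resid}) that proceeds directly to telescoping, so there is no line-by-line argument to compare against. Your reconstruction matches the architecture the paper evidently intends: smoothness at the virtual iterate $x_r=w_r-\eta\tilde e_r$, substitution of the identity \eqref{eq:PP-virt}, without-replacement unbiasedness to replace active averages by global ones, $p$-scaled reuse of the $T_1$/$T_2$/$T_3$ machinery fed by Lemmas~\ref{lem:local-drift} and \ref{lem:step-ahead-moment-fixed}, and Lemma~\ref{lem:pp-Cbar} plus the inner bound from the proof of Lemma~\ref{lem:residual-A} for the $(1-p)\bar C_{r+1}$ correction. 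Your handling of the quadratic term is sound: $p^2\le p$, and the gradient feedback entering through $L\eta^2\,\E_r\|\bar C_{r+1}\|^2$ is second order in $\eta$, so (C2) controls it via $L\eta^2\eta_r^2T^2\beta^2=\eta\eta_rT\cdot(L\eta\eta_rT\beta^2)\le\eta\eta_rT/256$, exactly as you say.

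The genuine gap is the one you flag in your last paragraph but do not close, and it is not mere bookkeeping: the \emph{linear} correction term $-\eta(1-p)\,\E_r\langle\nabla f(x_r),\bar C_{r+1}\rangle$ is first order in $\eta$, so (C2) cannot rescue it. If you treat $\bar C_{r+1}$ adversarially (Young with parameter $\lambda$, then Lemma~\ref{lem:pp-Cbar} and the $u$-bound from Lemma~\ref{lem:residual-A}), keeping the $\tfrac{\eta\lambda}{2}\|\nabla f(x_r)\|^2$ piece below the $p$-scaled descent forces $\lambda\lesssim p\,\eta_rT$, and then the $B_r^{(\nabla)}$ content of $\E_r\|\bar C_{r+1}\|^2$ yields a gradient penalty of order $\tfrac{(1-p)^2}{p}\,\beta^2\,\eta\,\eta_rT\,\|\nabla f(w_r)\|^2$, which no universal $c$ under (C1)--(C2) can dominate (it blows up as $p\to0$ or $\beta\to\infty$). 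The only viable route is to \emph{not} discard the signal in the correction: for a deterministic contractive $C$ one has $\E_{\mathcal M_r}[\bar C_{r+1}]=(1-\alpha_r)\tilde e_r+\tilde g_r-\tilde e^{\,\prime}_{r+1}$ with $\tilde e^{\,\prime}_{r+1}:=\tfrac1K\sum_k\bigl(u_{r+1}^{(k)}-C(u_{r+1}^{(k)})\bigr)$, so the combined linear term carries the full-strength descent $-\eta\langle\nabla f(x_r),\tilde g_r\rangle$ (coefficient $1$, not $p$) and only $(1-p)\tilde e^{\,\prime}_{r+1}$ is adversarial. But even then its gradient content, bounded through $(1-\tfrac1\delta)B_r^{(\nabla)}$, produces a penalty of order $(1-p)^2(1-\tfrac1\delta)\beta^2\,\eta\,\eta_rT\,\|\nabla f(w_r)\|^2$, which requires either a new smallness condition of the form $(1-p)^2(1-\tfrac1\delta)\beta^2\lesssim 1$ or deferral of that forcing to the telescoped absorption stage (a $\Theta_{\mathrm{PP}}$-style condition, as the paper does for the residual-state feedback) — neither of which follows from (C1)--(C2). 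So as written, your plan does not establish the lemma with a universal $c>0$; arguably the lemma's own hypotheses are incomplete on this point, but a correct proof must either retain the $\bar C_{r+1}$ signal and add the above condition, or restate the lemma with an explicit leftover gradient-forcing term to be absorbed at telescoping.
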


\begin{lemma}[Residual recursion under PP]\label{lem:pp-resid}
Let
\[
\rho_r^{\mathrm{PP}}:=(1-p)\;+\;p\Bigl(1-\tfrac1\delta\Bigr)\Bigl(2(1-\alpha_r)^2+24\,\alpha_r^2(\eta_rLT)^2\Bigr).
\]
Then
\begin{equation}\label{eq:pp-resid}
\E_r \tilde E_{r+1}
\ \le\ \rho_r^{\mathrm{PP}}\ \tilde E_r
\ +\ p\Bigl(1-\tfrac1\delta\Bigr)\Bigl[B_r^{(\nabla)}+B_r^{(\nu,\sigma)}\Bigr],
\end{equation}
with $B_r^{(\nabla)}$ and $B_r^{(\nu,\sigma)}$ as in full participation.
\end{lemma}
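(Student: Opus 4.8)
The plan is to exploit the per-client structure of the residual update under partial participation: only the sampled clients $k\in\mathcal M_r$ recompress and reset their residual, whereas every inactive client carries its residual forward verbatim. Concretely, I would start from the identity $\|e_{r+1}^{(k)}\|^2 = I_r^{(k)}\,\|u_{r+1}^{(k)}-C(u_{r+1}^{(k)})\|^2 + (1-I_r^{(k)})\,\|e_r^{(k)}\|^2$, which follows directly from the two-case definition of $e_{r+1}^{(k)}$, and then average over $k$ so that $\tilde E_{r+1}=\frac1K\sum_k \|e_{r+1}^{(k)}\|^2$ splits into an \emph{active} piece and an \emph{inactive} piece.

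Next I would take expectation in two stages. Conditioning first on the sampling set $\mathcal M_r$ and averaging over the local/compression randomness, Definition~\ref{def:compression-operator} gives $\E\|u_{r+1}^{(k)}-C(u_{r+1}^{(k)})\|^2\le(1-\tfrac1\delta)\,\E\|u_{r+1}^{(k)}\|^2$ for each active $k$, while each inactive term $\|e_r^{(k)}\|^2$ is already $\mathcal F_r$-measurable and untouched. Then I average over the draw of $\mathcal M_r$. Because the sampling is uniform without replacement and independent of the local noise, the marginal activation probability is $\E[I_r^{(k)}]=p$ for every $k$, so $\E_r\big[\sum_{k\in\mathcal M_r} a_k\big]=p\sum_{k=1}^K a_k$ for any $\mathcal F_r$-measurable $a_k$. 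This turns the active piece into $p(1-\tfrac1\delta)\,\frac1K\sum_k\E\|u_{r+1}^{(k)}\|^2$ and the inactive piece into $(1-p)\,\tilde E_r$.

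Finally I would invoke the full-participation bound already established inside the proof of Lemma~\ref{lem:residual-A}, namely $\frac1K\sum_k\E_r\|u_{r+1}^{(k)}\|^2\le\big(2(1-\alpha_r)^2+24\,\alpha_r^2\eta_r^2L^2T^2\big)\,\tilde E_r + B_r^{(\nabla)}+B_r^{(\nu,\sigma)}$, which transfers verbatim with $\tilde E_r$ playing the role of the global average residual energy $\bar E_r$, since an active client's inner loop (and hence $u_{r+1}^{(k)}$) is identical to the full-participation case. Substituting and collecting the coefficients of $\tilde E_r$ yields exactly $\rho_r^{\mathrm{PP}}=(1-p)+p(1-\tfrac1\delta)\big(2(1-\alpha_r)^2+24\,\alpha_r^2(\eta_rLT)^2\big)$, with forcing term $p(1-\tfrac1\delta)[B_r^{(\nabla)}+B_r^{(\nu,\sigma)}]$, proving \eqref{eq:pp-resid}.

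The main obstacle is the clean decoupling of the sampling expectation from the local-noise expectation in the active sum. I must verify that the per-client quantity $\E\|u_{r+1}^{(k)}\|^2$ entering the active sum coincides with its full-participation value, so that Lemma~\ref{lem:residual-A} applies unchanged; this requires that an active client's inner loop does not depend on which other clients were sampled and that sampling and local randomness are independent, so the two expectations factor and the marginal $\E[I_r^{(k)}]=p$ can be pulled out term by term. Everything else reduces to the same algebra as in the full-participation recursion.
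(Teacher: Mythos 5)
Your proof is correct, and it follows what is clearly the intended derivation: the paper states Lemma~\ref{lem:pp-resid} without an explicit proof, and your argument — splitting $\tilde E_{r+1}$ via the indicators $I_r^{(k)}$, applying Definition~\ref{def:compression-operator} conditionally on $\mathcal M_r$, pulling out the marginal participation probability $\E[I_r^{(k)}]=p$ (which only needs linearity, not independence of the without-replacement indicators), and reusing the intermediate display $\frac1K\sum_k\E_r\|u_{r+1}^{(k)}\|^2\le\bigl(2(1-\alpha_r)^2+24\,\alpha_r^2\eta_r^2L^2T^2\bigr)\tilde E_r+B_r^{(\nabla)}+B_r^{(\nu,\sigma)}$ from the proof of Lemma~\ref{lem:residual-A} — is exactly how the stated $\rho_r^{\mathrm{PP}}$ and forcing term $p\bigl(1-\tfrac1\delta\bigr)[B_r^{(\nabla)}+B_r^{(\nu,\sigma)}]$ arise. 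You also correctly flag and resolve the only delicate point, namely that an active client's $u_{r+1}^{(k)}$ is distributionally identical to its full-participation counterpart and that the dissimilarity-based bound is an average over all $K$ clients, so it transfers verbatim with $\tilde E_r$ in place of $\bar E_r$.
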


\paragraph{Telescoping, absorption, and final bounds (PP).}
Let $S_R:=\sum_{r=0}^{R-1}\eta_rT$ and $S_R^{\mathrm{PP}}:=\sum_{r=0}^{R-1}p\,\eta_rT=p\,S_R$.
Summing \eqref{eq:pp-per-round} over $r$ and using $\mathcal C_r^{\mathrm{PP}}\le -c\,p\,\eta\,\eta_rT$,
\[
\sum_{r=0}^{R-1}p\,\eta_rT\;\E\|\nabla f(w_r)\|^{2}
\ \le\ \frac{16}{\eta}\bigl(f(x_0)-\E f(x_R)\bigr)
+\frac{16}{\eta}\sum_{r=0}^{R-1}\E\bigl[\widehat{\mathcal E}_r\,\tilde E_r
+\widehat{\mathcal V}_r\,\sigma^{2}+\widehat{\mathcal H}_r\,\nu^{2}\bigr].
\]
If $f$ is bounded below by $f_\star$, then $f(x_R)\ge f_\star$. Summing \eqref{eq:pp-resid} and
assuming $\rho_{\max}^{\mathrm{PP}}:=\sup_r\rho_r^{\mathrm{PP}}<1$,
\[
\sum_{r=0}^{R-1}\E\tilde E_r
\ \le\ \frac{1}{1-\rho_{\max}^{\mathrm{PP}}}\,\tilde E_0
+\frac{p}{1-\rho_{\max}^{\mathrm{PP}}}\Bigl(1-\tfrac1\delta\Bigr)
\sum_{r=0}^{R-1}\E\bigl[B_r^{(\nabla)}+B_r^{(\nu,\sigma)}\bigr].
\]
Plugging this into the previous display yields a gradient-forcing term proportional to
\[
\frac{16}{\eta}\times\frac{\widehat{\mathcal E}_{\max}}{1-\rho_{\max}^{\mathrm{PP}}}\Bigl(1-\tfrac1\delta\Bigr)\times
\underbrace{p\sum_r B_r^{(\nabla)}}_{\le\ \beta^2 d_{\max}\,p\sum_r\eta_rT\,\E\|\nabla f(w_r)\|^2},
\]
where $d_{\max}:=\sup_r\bigl(8\,\eta_rT+288\,L^{2}\eta_r^{3}T^{3}\bigr)$. Defining
\[
\Theta_{\mathrm{PP}}
:= \frac{16}{\eta}\times\frac{\widehat{\mathcal E}_{\max}}{1-\rho_{\max}^{\mathrm{PP}}}\Bigl(1-\tfrac1\delta\Bigr)\beta^{2} d_{\max},
\]
we obtain the absorption inequality (the factor $p$ cancels on both sides):
\[
\bigl(1-\Theta_{\mathrm{PP}}\bigr)\sum_{r=0}^{R-1}p\,\eta_rT\,\E\|\nabla f(w_r)\|^{2}
\ \le\ \frac{16}{\eta}\Bigl[(f(x_0)-f_\star)+\sum_{r=0}^{R-1}\widehat{\mathcal V}_r\,\sigma^{2}+\widehat{\mathcal H}_r\,\nu^{2}\bigr]\Bigr].
\]
Assuming $\Theta_{\mathrm{PP}}\le\frac12$, we conclude
\[
\sum_{r=0}^{R-1}p\,\eta_rT\,\E\|\nabla f(w_r)\|^{2}
\ \le\ \frac{32}{\eta}\Bigl[(f(x_0)-f_\star)+\sum_{r=0}^{R-1}\widehat{\mathcal V}_r\,\sigma^{2}+\widehat{\mathcal H}_r\,\nu^{2}\Bigr].
\]
Dividing by $S_R^{\mathrm{PP}}=p\,\eta_0TR$ and with $\eta_r\equiv\eta_0$ yields the averaged bounds below.
\begin{align}
\frac{1}{R}\sum_{r=0}^{R-1}\E\|\nabla f(w_r)\|^{2}
~\le~&
~\frac{32}{\eta\,p\,\eta_0 T R}\,(f(x_0)-f_\star)
~+~ \frac{128\,L\,\eta_0}{\eta\,p\,m}\,\sigma^{2}
\nonumber\\
&~+~ \frac{32}{\eta\,p}\Bigl(1-\tfrac{1}{\delta}\Bigr)\Bigl[C_\sigma\,\eta_0^{2} L^{2} T\,\sigma^{2}+C_\nu\,\eta_0^{2} L^{2} T^{2}\,\nu^{2}\Bigr].
\label{eq:const-eta0-bound}
\end{align}

So, the optimization term scales as
$O\big((p\,\eta\,\eta_0TR)^{-1}\big)$ (a per-round slow-down by $1/p=K/m$).
The \emph{pure mini-batch} variance enjoys a $1/m$ reduction: its contribution scales as $\frac{128\,L\,\eta_0}{m}\,\sigma^{2}$,
while the residual-induced variance/heterogeneity floors scale as
$\Bigl(1-\tfrac1\delta\Bigr)\!\bigl[C_\sigma\,\eta_0^{2}L^{2}T\,\sigma^{2}
+ C_\nu\,\eta_0^{2}L^{2}T^{2}\nu^{2}\bigr]$.

\paragraph{Stalling vs.\ step-ahead.}
With $\alpha_r=0$ (no step-ahead), the multiplicative factor becomes
$\rho_r^{\mathrm{PP}}=(1-p)+2p(1-\tfrac1\delta)=1+p(1-\tfrac{2}{\delta})$, which can be
$\ge1$ under aggressive compression and small $p$, explaining the slowdown in cross-device regimes. For moderate $\alpha_r$ (e.g., $\alpha_r\approx\alpha_r^\star$ from the full-participation analysis), $\rho_r^{\mathrm{PP}}$ strictly decreases, improving the decay of $\tilde E_r$ each time a client participates and restoring faster early progress.

\paragraph{Constants and feasibility.}
The constants $C_\sigma$, $C_\nu$ and $\Theta$ in Theorem~\ref{main_theorem} collect the contributions of stochastic-gradient variance, data heterogeneity, and compression–induced residual drift. Inspecting their explicit formulas, we see that they depend on the algorithmic hyperparameters only through the effective local stepsize
\[
s_0 \;=\; \eta_0 L T,
\]
the compression bias factor $(1 - 1/\delta)$, and the residual–contraction term $1/(1-\rho_{\max})$. In particular, $\rho_{\max}$ itself is an increasing function of $s_0$ and $(1-1/\delta)$, so $C_\sigma$, $C_\nu$ and $\Theta$ are \emph{monotone nondecreasing} in $s_0$ and $(1-1/\delta)$: larger local work or more aggressive compression lead to larger constants and thus a higher residual-driven floor.

In the partial-participation extension (Remark~\ref{rem:pp}), the corresponding constant $\Theta_{\mathrm{PP}}$ inherits the same monotone dependence on $s_0$ and $(1-1/\delta)$ and, in addition, scales inversely with the participation rate $p = m/K$ (i.e., it increases as $p$ decreases). Thus, the feasibility conditions $\rho_{\max}^{\mathrm{PP}} < 1$ and $\Theta_{\mathrm{PP}} \le \tfrac12$ can be interpreted as requiring a standard “small” effective local stepsize $s_0$, moderate compression, and not-too-extreme partial participation. For the default hyperparameters used in our experiments (e.g., $T=5$, Top-$k$ compression, and $p \in \{0.1,0.2,1.0\}$), we numerically evaluate these constants and confirm that $\rho_{\max}^{\mathrm{PP}} < 1$ in all regimes. Moreover, in a mildly compressed setting (e.g., $\delta = 1.005$ with the same stepsizes), the corresponding $\Theta_{\mathrm{PP}}$ lies well below $\tfrac12$, illustrating that the condition $\Theta_{\mathrm{PP}} \le \tfrac12$ is a conservative sufficient condition rather than a tight practical tuning rule for the aggressively compressed Top-$k$ regimes we study.






\subsection{Comparison of EF-type compressed FL methods.}
For completeness, In Table~\ref{tab:rate-comparison}, we summarizes the main assumptions, mechanisms, and qualitative nonconvex behavior of several closely related algorithms: Fed-EF, SAEF, CSER, SCAFCOM, and SA-PEF. The goal is not to restate full theorems, but to highlight the regimes they target. Fed-EF and SA-PEF share the same lightweight, stateless EF architecture and standard FL assumptions (local steps, partial participation, biased contractive compressors), with SA-PEF improving the residual-contraction constant under compression. CSER and SAEF focus on centralized/local-SGD settings without partial participation, while SCAFCOM achieves stronger robustness to arbitrary heterogeneity by adding SCAFFOLD-style control variates and momentum, at the cost of increasing per-client state and communication.

\begin{table}[t]
\centering
\scriptsize
\caption{Comparison of compressed algorithms for FL. \textbf{SA-PEF} bridges the gap between Fed-EF (stable but slower under aggressive compression) and SAEF (faster warm-up but fragile in heterogeneous FL), achieving improved residual contraction without the extra state and complexity of control-variate methods such as SCAFCOM.}
\setlength{\tabcolsep}{3pt}
\begin{tabular}{p{1.2cm} p{2.1cm} p{2.6cm} p{7.22cm}}
\toprule
\textbf{Algorithm}
& \textbf{Assumptions \newline(beyond\newline $L$-smoothness)}
& \textbf{Mechanism \& State}
& \textbf{Convergence / Behaviour (Nonconvex)} \\
\midrule
Fed-EF\newline \citep{li2023analysis}
& Bounded variance; local SGD with PP
& Biased $\delta$-contractive; \newline \textbf{Stateless}\newline (one residual/client)
& Standard FL nonconvex rate with $1/p$ slowdown under partial participation; 
residual contraction factor $\rho_{\text{EF}}$ can be relatively weak under aggressive compression, leading to slower progress and earlier stalling. \\
[0.4em]
SAEF\newline \citep{xu2021step}
& Bounded gradient;\newline centralized, synchronous setting;\newline no PP analysis
& Full step-ahead ($\alpha=1$); \newline \textbf{Stateless}\newline (one residual/client)
& Analyzed in the classical distributed setting (no local steps, no client sampling); reduces EF’s gradient mismatch there. \newline
In our FL experiments with heterogeneous data, full step-ahead tends to produce larger gradient mismatch and late-stage plateaus compared to EF/SA-PEF (Sec.~4). \\
[0.4em]
CSER\newline \citep{xie2020cser}
& Bounded variance; local SGD (typically full participation);
& Error reset\newline (periodic dense communication of residuals); \newline \textbf{Stateless}
& Controls residual drift via periodic resets, yielding a nonconvex local-SGD rate with an $R$-independent floor. \newline
However, resets require sending full residuals, inducing \textbf{high peak bandwidth} at reset rounds and analyses usually assume full participation. \\
[0.4em]
SCAFCOM\newline \citep{huang2024stochastic}
& \textbf{Arbitrary heterogeneity}; bounded variance; local SGD with PP
& Control variates\newline + momentum; \newline \textbf{Stateful}\newline (extra state $c$,$c_i$ per client)
& Achieves a nonconvex FL guarantees under arbitrary non-IID data, with improved dependence on heterogeneity, \newline
at the cost of \textbf{higher system complexity} (maintaining control-variates state). \\
[0.4em]
\textbf{SA-PEF (ours)}
& Gradient dissimilarity $(\beta,\nu)$; local SGD with PP
& Partial step-ahead\newline ($0<\alpha<1$); \newline \textbf{Stateless}\newline (one residual/client)
& Operates in the same FL regime and under the same assumptions as Fed-EF, with the same leading-order nonconvex rate, \newline
but with a \textbf{strictly improved} residual contraction factor $\rho_{\max} < \rho_{\text{EF}}$ under biased compression, which lowers the error floor and balances warm-up speed with long-term stability. \\
\bottomrule
\end{tabular}
\label{tab:rate-comparison}
\end{table}

\begin{remark}[Relation to EF21]
EF21 and its extensions~\citep{richtarik2021ef21,fatkhullin2025ef21bells} obtain stronger guarantees (no error floor) under a different regime: synchronized data-parallel training with $T{=}1$, full-gradient (or gradient-difference) compression at a shared iterate, and no local steps. In this setting, EF21 is strictly preferable to classical EF. Our analysis targets a complementary regime, federated local-SGD with $T>1$ local steps, partial participation, and biased contractive compressors, where the compressed object is the accumulated local update and client drift plays a central role. Extending EF21-style arguments (or designing EF21-style step-ahead variants) to this local-SGD, partial-participation, biased-compression setting is non-trivial and remains an interesting direction for future work.
\end{remark}

\section{Implementation Details and Additional Experiments}
\subsection{Setup and Parameter tuning}
This appendix details datasets, federated partitioning, compressors, the hyperparameter search protocol, and fairness controls used across all methods.

\paragraph{Datasets, models, and preprocessing.}
{CIFAR-10 (ResNet-9)}, {CIFAR-100 (ResNet-18)}, and {Tiny-ImageNet} (64$\times$64; {ResNet-34}) are trained with cross-entropy loss.
Preprocessing follows standard practice: per-dataset mean/std normalization; CIFAR uses random crop (4-pixel padding) and horizontal flip; Tiny-ImageNet uses random resized crop to 64 and horizontal flip. Unless stated otherwise, batch size is 64, momentum is 0.9, weight decay is $5\!\times\!10^{-4}$ on CIFAR and $10^{-4}$ on Tiny-ImageNet.

\paragraph{Federated partitioning, participation, and local computation.}
We create $K{=}100$ clients and apply Dirichlet label partitioning with $\gamma\in\{0.1,1.0\}$ (smaller $\gamma$ $\Rightarrow$ stronger non-IID).
Each round samples $m=\lfloor pK\rfloor$ clients uniformly without replacement with $p\in\{0.1,0.5,1.0\}$.
Participating clients run $T$ local SGD steps at stepsize $\eta_r$; default $T{=}5$. We train for $R{=}200$ rounds.
Unless stated, server stepsize is $\eta{=}1.0$. We conducted all federated learning simulations using the \textsc{Flower} framework~\citep{beutel2020flower}.

\paragraph{Compressors and communication accounting.}
We use Top-$k$ sparsification with $k/d\in\{0.01,0.05,0.10\}$; each selected entry communicates its \emph{index} and \emph{value}. As a consequence, Top-$k$ satisfies Definition~\ref{def:compression-operator} with $\delta=d/k$; we record the standard bound below.

\begin{lemma}[Top-$k$ contraction; \citep{stich2018sparsified,beznosikov2023biased}]
\label{lem:topk}
Let $C=\operatorname{Top}_k:\mathbb{R}^d\!\to\!\mathbb{R}^d$ keep the $k$ largest
absolute-value coordinates of $x$ (ties broken arbitrarily), zeroing the rest.
Then for all $x\in\mathbb{R}^d$,
\[
\|x-C(x)\|_2^2 \;\le\; \Bigl(1-\frac{k}{d}\Bigr)\,\|x\|_2^2,
\qquad
\frac{k}{d}\,\|x\|_2^2 \;\le\; \|C(x)\|_2^2 \;=\; \langle C(x),x\rangle \;\le\; \|x\|_2^2.
\]
In particular, $C$ is $\delta$-contractive with $\delta=d/k$ in the sense of
Definition~\ref{def:compression-operator}. The constants are tight when all $|x_i|$
are equal.
\end{lemma}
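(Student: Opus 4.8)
The plan is to exploit that $\operatorname{Top}_k$ is a coordinate-selection operator (an orthogonal projection onto a coordinate subspace) and reduce every stated relation to a single inequality about the mass captured by the $k$ largest squared coordinates. First I would fix $x$, let $S\subseteq[d]$ denote the support of $C(x)$ (the indices of the $k$ largest $|x_i|$, ties broken as in the definition), and record the elementary support identities $C(x)_i=x_i$ for $i\in S$ and $C(x)_i=0$ otherwise. These give $\|C(x)\|_2^2=\sum_{i\in S}x_i^2$, $\|x-C(x)\|_2^2=\sum_{i\notin S}x_i^2$, and $\langle C(x),x\rangle=\sum_{i\in S}x_i^2$. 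Two of the claims are then immediate: the identity $\|C(x)\|_2^2=\langle C(x),x\rangle$, and the Pythagorean decomposition $\|x\|_2^2=\|C(x)\|_2^2+\|x-C(x)\|_2^2$, which also yields $\|C(x)\|_2^2\le\|x\|_2^2$ since the omitted sum is nonnegative.

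The crux is the lower bound $\tfrac{k}{d}\|x\|_2^2\le\|C(x)\|_2^2$, equivalently that the average of the $k$ largest values of $a_i:=x_i^2$ dominates the overall average. I would prove this by a standard top-versus-all mean comparison: writing $a_{(1)}\ge\cdots\ge a_{(d)}$ for the sorted squared coordinates, the defining property of $\operatorname{Top}_k$ gives $a_{(k)}\ge a_{(j)}$ for all $j>k$, hence $\sum_{j\le k}a_{(j)}\ge k\,a_{(k)}$ and $\sum_{j>k}a_{(j)}\le(d-k)\,a_{(k)}$. Combining these,
\[
d\sum_{j\le k}a_{(j)}-k\sum_{j=1}^{d}a_{(j)}
=(d-k)\sum_{j\le k}a_{(j)}-k\sum_{j>k}a_{(j)}
\ge (d-k)k\,a_{(k)}-k(d-k)\,a_{(k)}=0,
\]
which rearranges to $\sum_{i\in S}x_i^2\ge\tfrac{k}{d}\|x\|_2^2$, the desired bound.

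Feeding this back into the Pythagorean identity gives the contraction $\|x-C(x)\|_2^2=\|x\|_2^2-\|C(x)\|_2^2\le(1-\tfrac{k}{d})\|x\|_2^2$, and since $C$ is deterministic the expectation in Definition~\ref{def:compression-operator} is vacuous, so $C$ is $\delta$-contractive with $1/\delta=k/d$, i.e.\ $\delta=d/k$. For tightness I would simply check $x$ with $|x_i|\equiv c$: then $\|x\|_2^2=dc^2$ while any admissible $k$-subset carries mass $kc^2$, so $\|C(x)\|_2^2=\tfrac{k}{d}\|x\|_2^2$ and all three inequalities hold with equality. The only step requiring any care is the ordering argument in the second paragraph; everything else is bookkeeping on a fixed support, so I expect no genuine obstacle—the result is essentially a one-line consequence of the fact that the top-$k$ mean dominates the full mean.
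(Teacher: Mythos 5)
Your proof is correct: the support decomposition, the Pythagorean identity $\|x\|_2^2=\|C(x)\|_2^2+\|x-C(x)\|_2^2$, the top-$k$-mean-versus-full-mean comparison for the lower bound, and the equal-magnitude tightness check are all sound, and the deterministic-expectation remark correctly identifies $\delta=d/k$ in Definition~\ref{def:compression-operator}. The paper itself states this lemma without proof, citing \citet{stich2018sparsified} and \citet{beznosikov2023biased}, and your argument is precisely the standard one underlying those references, so it matches the intended (implicit) proof.
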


Each selected entry transmits its \emph{index} and \emph{value}. \emph{Raw uplink bits} per participating client per round are
$k\big(\lceil\log_2 d\rceil + b_{\text{val}}\big)$ with $b_{\text{val}}{=}32$ for FP32 values. Unless stated, reported \emph{cumulative communication} aggregates \emph{uplink only} across participating clients (downlink is identical across compressed methods and omitted for fairness); FedAvg’s downlink/uplink are both dense FP32.

\paragraph{Hyperparameter search protocol.}
We adopt a small, method-agnostic grid tuned on a held-out validation split. Unless noted, we select hyperparameters by best \emph{validation top-1} at a \emph{fixed communication budget} (bits) within $R$ rounds; ties are broken by higher accuracy at earlier checkpoints.
We reuse the \emph{same} grid across participation rates ($q$).

\textbf{Search spaces (shared across methods).}
\label{search_grid}
\begin{itemize}[leftmargin=8pt]
\item \textbf{Client LR $\eta_r$:}
\begin{tabular}{@{}l l@{}}
CIFAR-10:      & $\{0.001, 0.05, 0.1, 0.2, 1.0, 10\}$ \\
CIFAR-100:     & $\{0.001, 0.05, 0.1, 1.0, 10\}$ \\
Tiny-ImageNet: & $\{0.001, 0.02, 0.05, 1.0\}$ \\
               & (cosine decay with 3-5 epoch warm-up, minimum $\eta_r = 0.005$)
\end{tabular}
\item \textbf{Server LR $\eta$:} $\{0.5,1.0\}$.
\item \textbf{Weight decay:} $\{5\!\times\!10^{-4},\,10^{-4}\}$.
\item \textbf{Local steps $T$:} $\{1,5,10\}$.
\item \textbf{Compressor level $k/d$:} $\{0.01,0.05,0.10\}$.
\end{itemize}
\textbf{SA-PEF-specific.} Constant-$\alpha$ ablations use $\alpha\in\{0.0,0.1, 0.2, 0.3, 0.4, 0.5, 0.6, 0.7, 0.8, 0.9, 1.0\}$; unless stated, $\alpha$ is fixed across rounds.

\paragraph{Fairness controls and evaluation.}
(i) The \emph{same} grid is used across methods; (ii) the best setting is selected at a matched bit budget; (iii) client sampling seeds are shared across methods; (iv) evaluation uses the server model in \texttt{eval} mode with identical preprocessing.
We report both \emph{accuracy vs.\ rounds} and \emph{accuracy vs.\ bits}; the latter is our primary metric under communication constraints. Experiments ran on NVIDIA A100/A5000/H200 GPUs; hardware does not affect communication accounting.

\subsection{Additional Experiments}
\paragraph{Multi-seed stability.} In Table~\ref{tab:error_bar}, we report final test accuracy as mean $\pm$ standard deviation over five independent runs with different random seeds. Across seeds, the relative ranking of methods is consistent, with SA-PEF retaining its advantage in high-compression, low-participation regimes.

\begin{table}[t]
\centering
\caption{Final test accuracy (mean $\pm$ std.\ over five independent runs) for CIFAR-10 and CIFAR-100 under two participation/heterogeneity regimes. The hyperparameters used in all algorithms are $R=200$, $T=5$, and $\eta_l=0.1$.}
\label{tab:error_bar}
\begin{tabular}{lllcccc}
\toprule
\multirow{3}{*}{\textbf{Dataset}} & \multirow{3}{*}{\textbf{Model}} & \multirow{3}{*}{\textbf{Algorithm}} & \multicolumn{4}{c}{\textbf{Final test accuracy (\%)}} \\
 & & & \multicolumn{2}{c}{$p = 0.1,\ \gamma = 0.5$} & \multicolumn{2}{c}{$p = 0.5,\ \gamma = 0.1$} \\
 & & & top-1 & top-10 & top-1 & top-10 \\
\midrule
\multirow{5}{*}{CIFAR-10} & \multirow{5}{*}{ResNet-9} 
  & FedAvg  & \multicolumn{2}{c}{$88.5 \pm 1.6$} & \multicolumn{2}{c}{$69.5 \pm 1.5$} \\
 & & EF      & $57.0 \pm 2.0$ & $72.7 \pm 2.3$ & $40.3 \pm 3.7$ & $47.2 \pm 4.1$ \\
 & & SAEF    & $74.2 \pm 3.9$ & $75.5 \pm 2.9$ & $39.5 \pm 4.6$ & $48.5 \pm 3.3$ \\
 & & CSER    & $68.6 \pm 2.8$ & $80.5 \pm 1.2$ & $49.5 \pm 4.0$ & $67.2 \pm 2.6$ \\
 & & SA-PEF  & $80.5 \pm 2.6$ & $82.7 \pm 1.6$ & $47.5 \pm 3.6$ & $68.5 \pm 1.9$ \\
\midrule
\multirow{5}{*}{CIFAR-100} & \multirow{5}{*}{ResNet-18} 
  & FedAvg  & \multicolumn{2}{c}{$62.5 \pm 1.6$} & \multicolumn{2}{c}{$61.9 \pm 0.6$} \\
 & & EF      & $40.4 \pm 2.4$ & $49.5 \pm 1.6$ & $41.5 \pm 1.6$ & $57.5 \pm 1.9$ \\
 & & SAEF    & $46.1 \pm 1.4$ & $50.5 \pm 2.0$ & $44.5 \pm 3.6$ & $57.5 \pm 3.9$ \\
 & & CSER    & $46.2 \pm 0.4$ & $51.5 \pm 1.9$ & $42.5 \pm 2.4$ & $60.7 \pm 1.0$ \\
 & & SA-PEF  & $49.6 \pm 1.8$ & $54.5 \pm 2.6$ & $48.5 \pm 2.0$ & $60.6 \pm 2.8$ \\
\bottomrule
\end{tabular}
\end{table}

Figures \ref{fig:cifar10_resnet9_100_100}--\ref{fig:Tiny-ImageNet_resnet34_100_100_5} report extra convergence curves for SA-PEF and the baselines on \textbf{CIFAR-10}, \textbf{CIFAR-100}, and \textbf{Tiny-ImageNet}. For each dataset we sweep (i) \emph{participation} \(q\in\{1.0,\,0.1\}\), (ii) \emph{compression budget} (Top-1\% and Top-10\% under full participation; Top-5\% and Top-1\% under \(q{=}0.1\)), and (iii) \emph{data heterogeneity} via Dirichlet partitions. The plots include both \emph{accuracy vs.\ rounds} and \emph{accuracy vs.\ communicated GB}. Under full participation, SA-PEF consistently reaches a given accuracy in fewer rounds than EF/CSER and tracks SAEF without late-stage plateaus. Under partial participation with aggressive compression (Top-5\%, Top-1\%), the gaps naturally narrow but the qualitative trend persists, illustrating that the main conclusions are robust across datasets, architectures, and federation settings.

\begin{figure*}[htbp]
  \centering
  \begin{subfigure}{0.45\textwidth}
    \includegraphics[width=\linewidth]{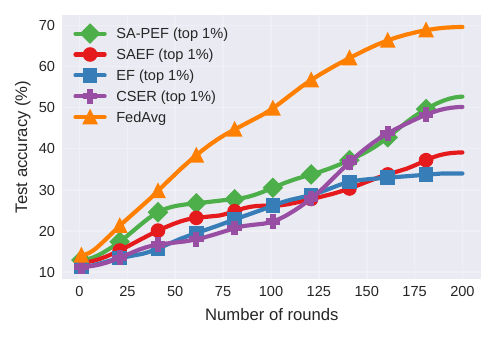}
    \caption{$p {=} 1.0$, Top-1\%}\label{fig:cifar100_100_1_e}
  \end{subfigure}\hfill
  \begin{subfigure}{0.45\textwidth}
    \includegraphics[width=\linewidth]{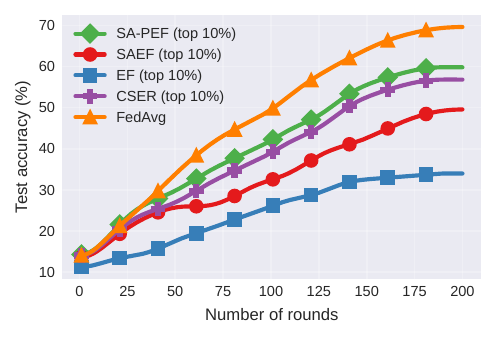}
    \caption{$p {=} 1.0$, Top-10\%}\label{fig:cifar100_100_10_e}
  \end{subfigure}
  \vspace{4pt}
  \begin{subfigure}{0.45\textwidth}
    \includegraphics[width=\linewidth]{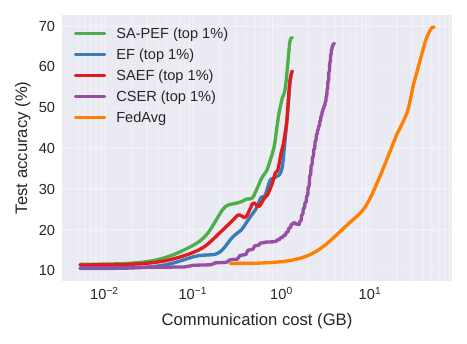}
    \caption{$p {=} 1.0$, Top-1\%}\label{fig:cifar100_100_1_c}
  \end{subfigure}\hfill
  \begin{subfigure}{0.45\textwidth}
    \includegraphics[width=\linewidth]{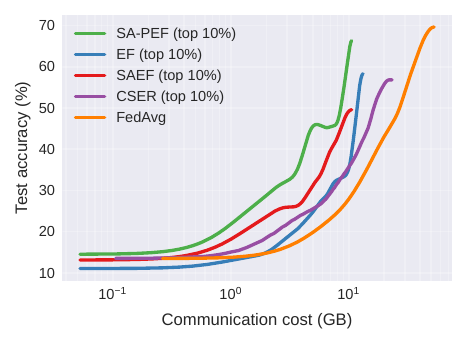}
    \caption{$p {=} 1.0$, Top-10\%}\label{fig:cifar100_100_10_c}
  \end{subfigure}
  \caption{Test accuracy vs number of rounds (row 1) and communicated GB (row 2) on the CIFAR-10 dataset using ResNet-9 and $\gamma{=}0.1$.}
  \label{fig:cifar10_resnet9_100_100}
\end{figure*}

\begin{figure*}[htbp]
  \centering
  \begin{subfigure}{0.45\textwidth}
    \includegraphics[width=\linewidth]{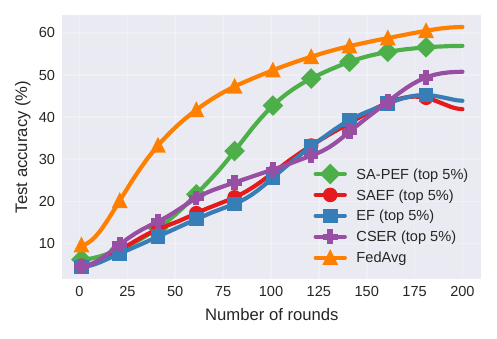}
    \caption{$p {=} 0.1$, Top-5\%}\label{fig:cifar100_10_5_e}
  \end{subfigure}\hfill
  \begin{subfigure}{0.45\textwidth}
    \includegraphics[width=\linewidth]{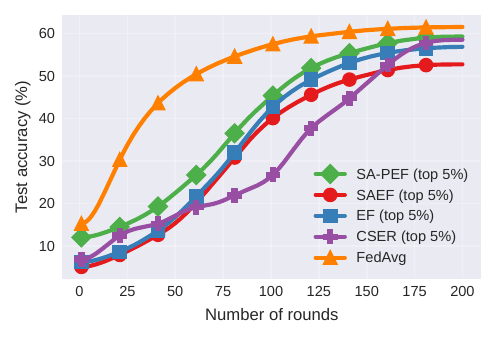}
    \caption{$p {=} 0.5$, Top-5\%}\label{fig:cifar100_50_5_e}
  \end{subfigure}
  \vspace{4pt}
  \begin{subfigure}{0.45\textwidth}
    \includegraphics[width=\linewidth]{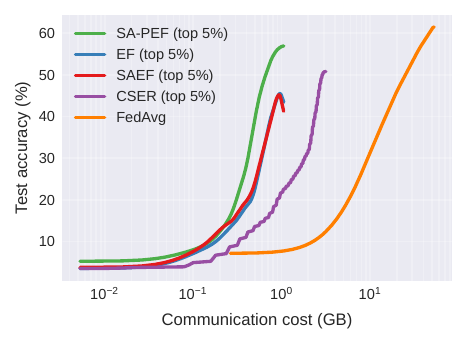}
    \caption{$p {=} 0.1$, Top-5\%}\label{fig:cifar100_10_5_c}
  \end{subfigure}\hfill
  \begin{subfigure}{0.45\textwidth}
    \includegraphics[width=\linewidth]{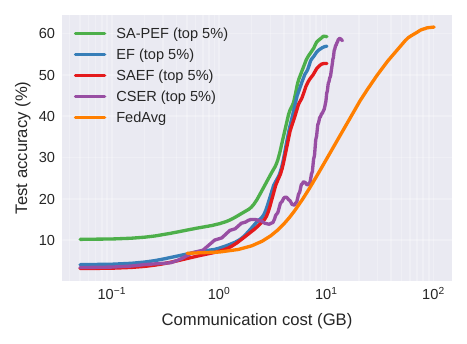}
    \caption{$p {=} 0.5$, Top-5\%}\label{fig:cifar100_50_5_c}
  \end{subfigure}
  \caption{Test accuracy vs number of rounds (row 1) and communicated GB (row 2) on the CIFAR-100 dataset using ResNet-18 and $\gamma{=}0.1$.}
  \label{fig:cifar10_resnet9_100_100_5}
\end{figure*}

\begin{figure*}[htbp]
  \centering
  \begin{subfigure}{0.48\textwidth}
    \includegraphics[width=\linewidth]{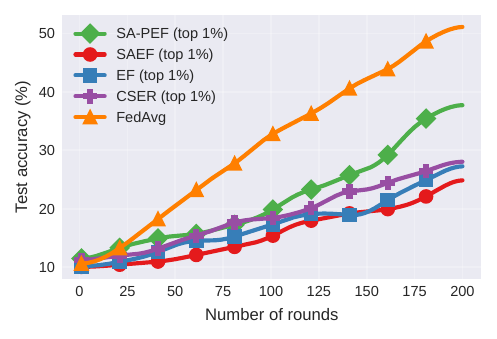}
    \caption{$p {=} 0.1$, Top-1\%}\label{fig:cifar10_10_1_e}
  \end{subfigure}\hfill
  \begin{subfigure}{0.48\textwidth}
    \includegraphics[width=\linewidth]{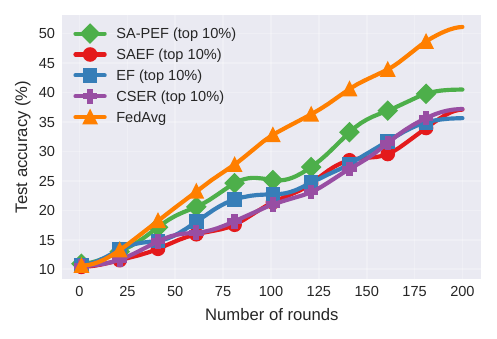}
    \caption{$p {=} 0.1$, Top-10\%}\label{fig:cifar10_10_10_e}
  \end{subfigure}
  \vspace{4pt}
  \begin{subfigure}{0.48\textwidth}
    \includegraphics[width=\linewidth]{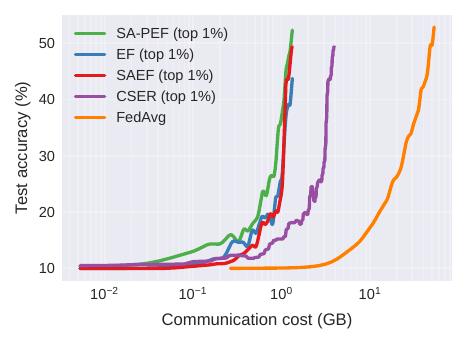}
    \caption{$p {=} 0.1$, Top-1\%}\label{fig:cifar10_10_1_c}
  \end{subfigure}\hfill
  \begin{subfigure}{0.48\textwidth}
    \includegraphics[width=\linewidth]{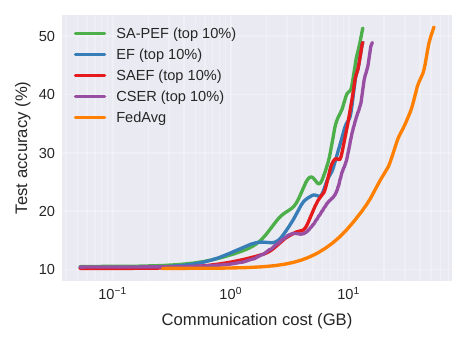}
    \caption{$p {=} 0.1$, Top-10\%}\label{fig:cifar10_10_10_c}
  \end{subfigure}
  \caption{Test accuracy vs number of rounds (row 1) and communicated GB (row 2) on the CIFAR-10 dataset using ResNet-9 and $\gamma{=}0.1$.}
  \label{fig:cifar10_resnet9}
\end{figure*}

\begin{figure*}[htbp]
  \centering
  \begin{subfigure}{0.45\textwidth}
    \includegraphics[width=\linewidth]{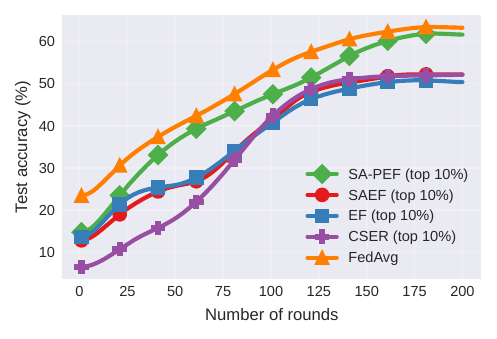}
  \end{subfigure}\hfill
  \begin{subfigure}{0.45\textwidth}
    \includegraphics[width=\linewidth]{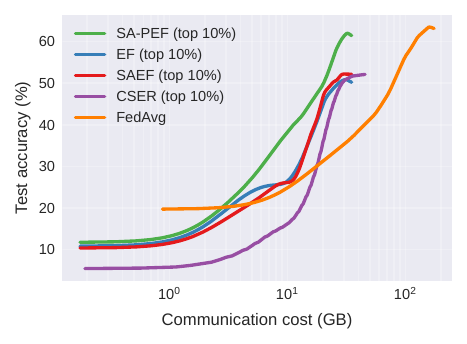}
  \end{subfigure}
  \caption{Test accuracy vs number of rounds (left) and communicated GB (right) on the Tiny-ImageNet dataset using ResNet-34 with $\gamma{=}0.5$, $p {=} 0.1$.}
  \label{fig:Tiny-ImageNet_resnet34_100_100_5}
\end{figure*}

\paragraph{IID control experiments.}
To isolate the effect of data heterogeneity, we also evaluate under IID partitions in figure \ref{fig:cifar100_resnet18_iid}. We report \emph{accuracy vs.\ rounds} and \emph{accuracy vs.\ GB} for: (i) partial participation ($q{=}0.5$) with Top-5\% and Top-10\%. Across datasets, SA-PEF matches or exceeds EF/CSER in early rounds under the same communication budget. 

\begin{figure*}[htbp]
  \centering
  \begin{subfigure}{0.48\textwidth}
    \includegraphics[width=\linewidth]{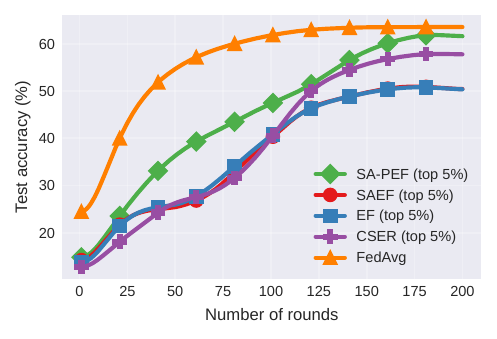}
    \caption{$p {=} 0.5$, Top-5\%}\label{fig:cifar100_50_5_e_iid}
  \end{subfigure}\hfill
  \begin{subfigure}{0.48\textwidth}
    \includegraphics[width=\linewidth]{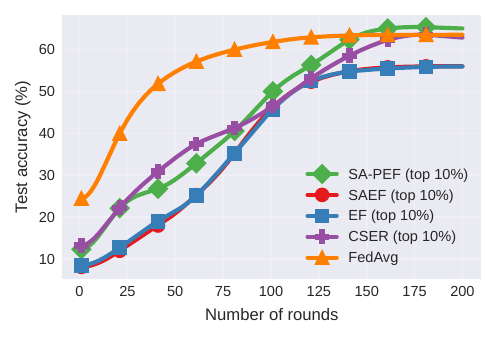}
    \caption{$p {=} 0.5$, Top-10\%}\label{fig:cifar100_50_10_e_iid}
  \end{subfigure}
  \vspace{4pt}
  \begin{subfigure}{0.48\textwidth}
    \includegraphics[width=\linewidth]{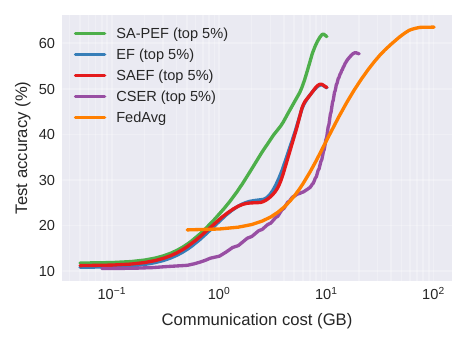}
    \caption{$p {=} 0.5$, Top-5\%}\label{fig:cifar100_50_5_c_iid}
  \end{subfigure}\hfill
  \begin{subfigure}{0.48\textwidth}
    \includegraphics[width=\linewidth]{Plots/plt_10_participation_top_10_resnet9.pdf}
    \caption{$p {=} 0.5$, Top-10\%}\label{fig:cifar100_50_10_c_iid}
  \end{subfigure}
  \caption{Test accuracy vs number of rounds (row 1) and communicated GB (row 2) on the CIFAR-100 dataset using ResNet-18.}
  \label{fig:cifar100_resnet18_iid}
\end{figure*}

\paragraph{Extreme partial participation and local work.}
To further stress-test our methods, we also consider more demanding FL regimes with very low participation and larger local work. In particular, we run experiments with participation $p=0.05$ and $T=10$ local SGD steps per round, comparing EF, CSER, and SA-PEF under the same compression level. As shown in Fig.~\ref{fig:extreme_pp}, SA-PEF consistently attains higher accuracy than EF and CSER at a fixed communication budget, indicating that its advantages persist even under extreme partial participation and increased local work.

\begin{figure*}[h]
  \centering
  \begin{subfigure}{0.45\textwidth}
    \includegraphics[width=\linewidth]{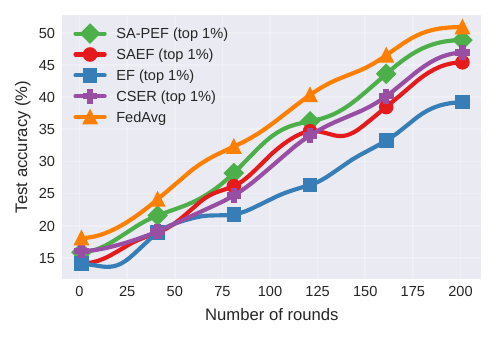}
  \end{subfigure}\hfill
  \begin{subfigure}{0.45\textwidth}
    \includegraphics[width=\linewidth]{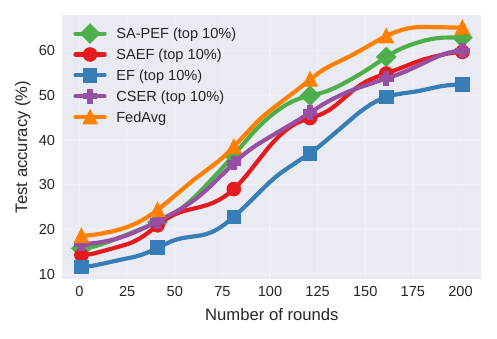}
  \end{subfigure}
   \caption{Test accuracy vs.\ number of rounds on CIFAR-10 with ResNet-9 under Top-1\% (left) and Top-10\% (right) uplink compression.}
  \label{fig:extreme_pp}
\end{figure*}

\paragraph{Wall-clock efficiency.}
To quantify implementation overhead, we report test accuracy versus wall-clock time on CIFAR-10/ResNet-9 under a fixed hardware setup (six NVIDIA RTX A5000 GPUs across 6 nodes) in the Figure~\ref{fig:wall_clock}. SA-PEF reaches a given accuracy level substantially earlier than EF, SAEF, CSER, and FedAvg, confirming that the small extra vector operations it introduces incur negligible runtime cost while improving time-to-accuracy.

\begin{figure*}[h]
  \centering
  \begin{subfigure}{0.45\textwidth}
    \includegraphics[width=\linewidth]{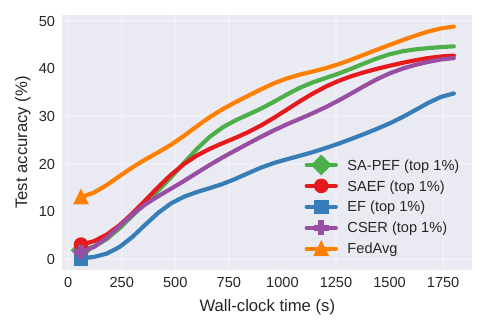}
  \end{subfigure}\hfill
  \begin{subfigure}{0.45\textwidth}
    \includegraphics[width=\linewidth]{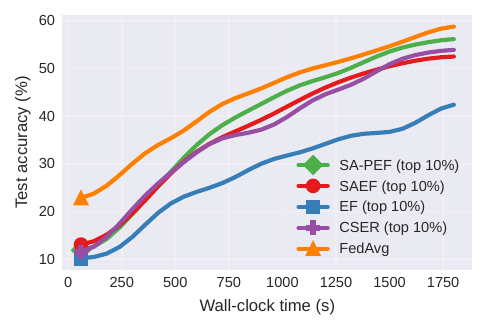}
  \end{subfigure}
  \caption{Test accuracy vs.\ wall-clock time (s) on CIFAR-10 with ResNet-9 under Top-1\% (left) and Top-10\% (right) uplink compression.}
  \label{fig:wall_clock}
\end{figure*}

\paragraph{Scaled-sign compressor.}
Besides Top-$k$, we also consider a scaled-sign compressor $C(x) = \frac{\|x\|_{1}}{d}\,\mathrm{sign}(x)$. This is the group-scaled sign compressor of \citet{li2023analysis}, specialized to a single block ($M=1$), and Proposition~C.1 in that work shows that it is contractive in the sense of our Definition~1 for a suitable $\delta>0$, so it falls within our theoretical framework. Hence, we report preliminary CIFAR-10/ResNet-9 results with this scaled-sign compressor in Figure~\ref{fig:scaled_sign}, where SA-PEF again achieves higher accuracy than EF, SAEF, and CSER at a fixed communication budget.

\begin{figure*}[h]
  \centering
  \begin{subfigure}{0.45\textwidth}
    \includegraphics[width=\linewidth]{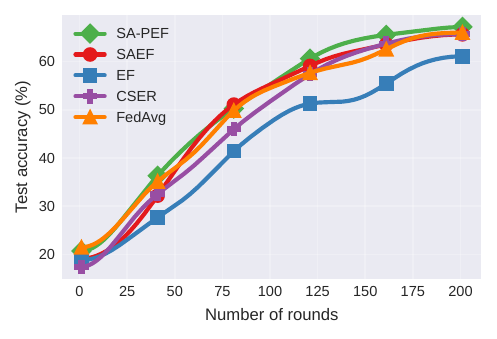}
  \end{subfigure}\hfill
  \begin{subfigure}{0.45\textwidth}
    \includegraphics[width=\linewidth]{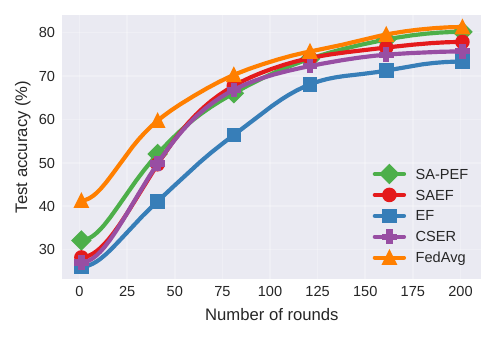}
  \end{subfigure}
  \caption{Test accuracy vs.\ number of rounds on CIFAR-10 with ResNet-9 for Dirichlet-$\gamma$ partitions: $\gamma = 0.1$ (left) and $\gamma = 0.5$ (right).}
  \label{fig:scaled_sign}
\end{figure*}

\paragraph{Effect of momentum.}
To isolate the role of momentum, we repeat our CIFAR-10/100 experiments using SGD \emph{without} momentum, keeping all other hyperparameters and compression settings fixed, and provide the results in Figure~\ref{fig:cifar10and100_resnet9_100_10}. The test-accuracy trajectories and accuracy-communication curves show that SA-PEF consistently matches or outperforms EF, SAEF, and CSER, and remains close to FedAvg in terms of accuracy per communicated
GB. This suggests that our conclusions are essentially momentum-agnostic: momentum slightly reshapes the trajectories but does not drive the gains of SA-PEF over EF-style baselines.

\begin{figure*}[htbp]
  \centering
  \begin{subfigure}{0.45\textwidth}
    \includegraphics[width=\linewidth]{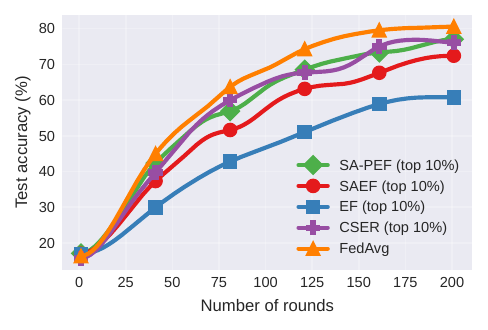}
  \end{subfigure}\hfill
  \begin{subfigure}{0.45\textwidth}
    \includegraphics[width=\linewidth]{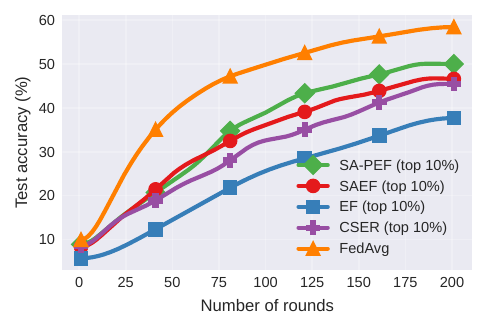}
  \end{subfigure}
  \vspace{4pt}
  \begin{subfigure}{0.45\textwidth}
    \includegraphics[width=\linewidth]{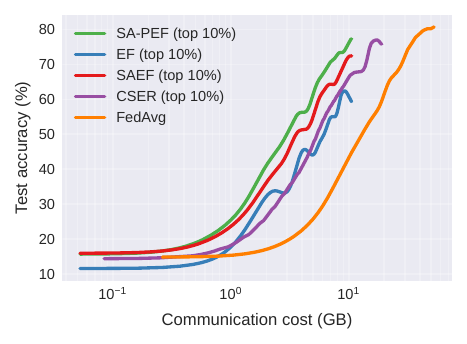}
    \caption{with CIFAR-10 using ResNet-9}\label{fig:cifar10_10_10_comm}
  \end{subfigure}\hfill
  \begin{subfigure}{0.45\textwidth}
    \includegraphics[width=\linewidth]{Plots/plt_100_participation_top_10_resnet9.pdf}
    \caption{with CIFAR-100 using ResNet-18}\label{fig:cifar100_10_10_cmm}
  \end{subfigure}
  \caption{Test accuracy vs number of rounds (row 1) and communicated GB (row 2) using $p {=} 0.1$, Top-10\%, and $\gamma{=}0.5$.}
  \label{fig:cifar10and100_resnet9_100_10}
\end{figure*}

\paragraph{Comparison with SCAFCOM.}
To further assess SA-PEF, we follow the MNIST setup of \citet{huang2024stochastic}: a 2-layer fully-connected network is trained on MNIST distributed across $N=200$ clients in a highly heterogeneous regime (each client holds data from at most two classes), with partial participation $p=0.1$ and $10$ local steps. We apply aggressive Top-1\% compression to EF-style methods. As shown in Figure~\ref{fig:mnist_scafcom}, SA-PEF closely tracks SCAFCOM and both substantially outperform standard EF and FedAvg, while uncompressed SCAFFOLD lies in between. This indicates that, under the same communication budget, SA-PEF can match the robustness of SCAFCOM’s control-variate-plus-momentum design while retaining the simpler EF architecture (one residual per client, no additional drift-correction state). 

\begin{figure*}[h]
  \centering
  \begin{subfigure}{0.45\textwidth}
    \includegraphics[width=\linewidth]{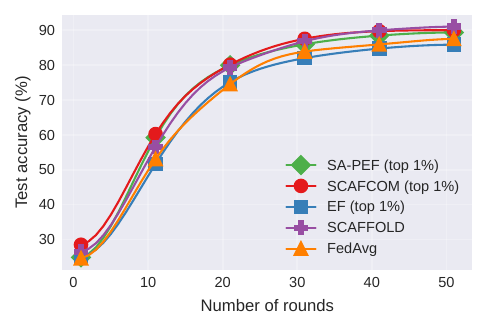}
    \caption{Test accuracy vs.\ rounds.}
    \label{fig:mnist_scafcom_acc}
  \end{subfigure}\hfill
  \begin{subfigure}{0.45\textwidth}
    \includegraphics[width=\linewidth]{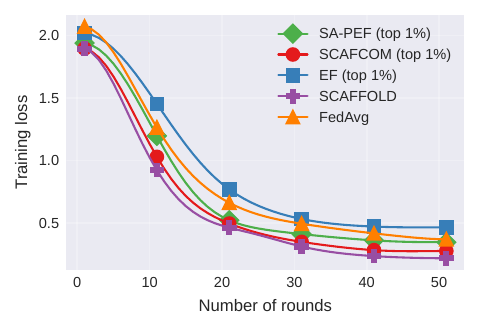}
    \caption{Training loss vs.\ rounds.}
    \label{fig:mnist_scafcom_loss}
  \end{subfigure}
  \caption{Comparison with SCAFCOM on MNIST under Top-1\% compression, partial participation $p{=}0.1$, and $10$ local steps.}
  \label{fig:mnist_scafcom}
\end{figure*}

To better align with our main experimental setup, we additionally report a preliminary CIFAR-10/ResNet-9 experiment under aggressive compression in the Figure~\ref{fig:cifar10_scafcom}. We follow the local mini-batch step formulation of \citet{huang2024stochastic} (rather than local epochs) for a fair comparison. We use $K=100$ clients and $R=200$ communication rounds; at each round, the server samples $10$ clients, and each selected client performs $20$ local mini-batch SGD steps on a ResNet-9 model. For all EF-style methods (EF, SAEF, SA-PEF, and SCAFCOM) we apply Top-$1\%$ and Top-$10\%$ sparsification to the uplink updates, while FedAvg communicates dense updates. SCAFCOM uses control-variate and momentum coefficients $\alpha_{\mathrm{sc}} = 0.1$ and $\beta_{\mathrm{sc}} = 0.2$, selected via a small grid search. SA-PEF again behaves competitively with SCAFCOM while clearly improving over EF, SAEF, and FedAvg under the same communication budget. All curves are averaged over five independent runs with different random seeds. The learning-rate combinations that yield the highest test accuracy are listed in Table~\ref{tab:fed_ef_lrs}.

\begin{table}[t]
    \centering
    \setlength{\tabcolsep}{8pt}
    \begin{tabular}{lcc}
        \toprule
        Algorithm & $\eta$ & $\eta_r$  \\
        \midrule
        SA-PEF& $1$      & $10^{-3}$ \\
        SAEF& $1$      & $10^{-3}$ \\
        EF& $1$      & $10^{-3}$ \\
        SCAFCOM& $3$      & $10^{-1}$ \\
        FedAvg& $1$      & $10^{-1}$ \\
        \bottomrule
    \end{tabular}
    \caption{Optimal global ($\eta$) and local ($\eta_r$) learning rate combinations.}
    \label{tab:fed_ef_lrs}
\end{table}

\begin{figure*}[h]
  \centering
  \begin{subfigure}{0.45\textwidth}
    \includegraphics[width=\linewidth]{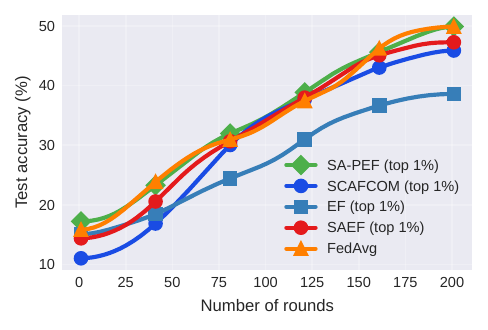}
  \end{subfigure}\hfill
  \begin{subfigure}{0.45\textwidth}
    \includegraphics[width=\linewidth]{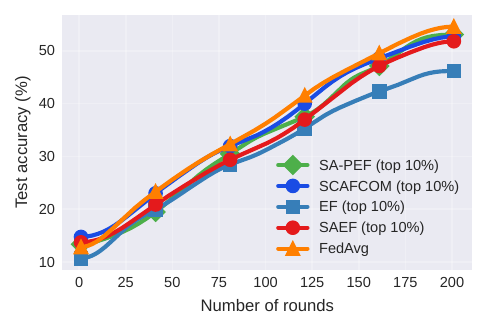}
  \end{subfigure}
  \caption{Test accuracy vs.\ number of rounds on CIFAR-10 with ResNet-9, partial participation $p{=}0.1$, and $10$ local steps under Top-1\% (left) and Top-10\% (right) uplink compression.}
  \label{fig:cifar10_scafcom}
\end{figure*}

\end{document}